\let\norm\undefined
\newcommand\mumin{\mu_{\cS}^{\min}}
\newcommand\mumax{\mu_{\cS}^{\max}}
\DeclareMathOperator*{\KL}{KL}
\DeclareMathOperator{\pr}{\mathbb P}
\DeclareMathOperator{\ex}{\mathbb E}
\DeclareMathOperator{\Alt}{Alt}
\DeclareBoldMathCommand{\vmu}{\mu}
\DeclareBoldMathCommand{\vtheta}{\theta}
\DeclareBoldMathCommand{\vgamma}{\gamma}
\DeclareBoldMathCommand{\vlambda}{\lambda}
\DeclareBoldMathCommand{\w}{w}
\newcommand{\LCB}{\textrm{L}}
\newcommand{\UCB}{\textrm{U}}
\renewcommand{\H}{\mathcal H}
\newcommand{\Hsup}{\H_>}
\newcommand{\Hinf}{\H_<}
\newcommand{\BoxStop}{\mathrm{Box}}
\newcommand{\tauinf}{\tau_{<}}
\newcommand{\tausup}{\tau_{>}}
\title{Sequential Test for the Lowest Mean:\\ From Thompson to Murphy Sampling \vspace{0.5cm}}
\date{}
\author{Emilie Kaufmann$^{1}$, Wouter M. Koolen$^2$ and Aur\'elien Garivier$^3$ \\ \\
\small \texttt{emilie.kaufmann@univ-lille1.fr}, \texttt{wmkoolen@cwi.nl}, \texttt{aurelien.garivier@math.univ-toulouse.fr} \medskip \\ 
\small $^1$ CNRS \& Univ. Lille, UMR 9189 CRIStAL, Inria SequeL, Lille, France \\
\small $^2$ Centrum Wiskunde \& Informatica, Science Park 123, 1098 XG Amsterdam, The Netherlands \\
\small $^3$  Institut de Math\'ematiques de Toulouse; CNRS UMR5219, Universit\'e de Toulouse, France
}
\begin{document}

\maketitle

\begin{abstract}
  Learning the minimum/maximum mean among a finite set of distributions is a fundamental sub-task in planning, game tree search and reinforcement learning. We formalize this learning task as the problem of sequentially testing how the minimum mean among a finite set of distributions compares to a given threshold. We develop refined non-asymptotic lower bounds, which show that optimality mandates very different sampling behavior for a low vs high true minimum. We show that Thompson Sampling and the intuitive Lower Confidence Bounds policy each nail only one of these cases. We develop a novel approach that we call Murphy Sampling. Even though it entertains exclusively low true minima, we prove that MS is optimal for both possibilities. We then design advanced self-normalized deviation inequalities, fueling more aggressive stopping rules. We complement our theoretical guarantees by experiments showing that MS works best in practice.
\end{abstract}

\section{Introduction}

We consider a collection of core problems related to \emph{minimums of means}. For a given finite collection of probability distributions parameterized by their means $\mu_1, \ldots, \mu_K$, we are interested in learning about $\mu^* = \min_a \mu_a$ from adaptive samples $X_t \sim \mu_{A_t}$, where $A_t$ indicates the distribution sampled at time $t$. We shall refer to these distributions as arms in reference to a multi-armed bandit model \cite{Robbins52Freq,LaiRobbins85bandits}. Knowing about minima/maxima is crucial in reinforcement learning or game-playing, where the value of a state for an agent is the \emph{maximum} over actions of the (expected) successor state value or the \emph{minimum} over adversary moves of the next state value.

The problem of estimating $\mu^* = \min_a \mu_a$ was studied in \cite{Hasselt13} and subsequently \cite{Eramo16,Imagaw17,d2017estimating}. It is known that no unbiased estimator exists for $\mu^*$, and that estimators face an intricate bias-variance trade-off. Beyond estimation, the problem of constructing \emph{confidence intervals} on minima/maxima naturally arises in (Monte Carlo) planning in Markov Decision Processes \cite{trailblazer} and games \cite{KocsisBBMCP06}. Such confidence intervals are used hierarchically for Monte Carlo Tree Search (MCTS) in \cite{Teraoka14MCTS,maximinarm,HuangASM17,mcts.by.bai}. The open problem of designing asymptotically optimal algorithms for MCTS led us to isolate one core difficulty that we study here, namely the construction of confidence intervals and associated sampling/stopping rules for learning minima (and, by symmetry, maxima). 

Confidence interval (that are uniform over time) can be naturally obtained from a (sequential) test of $\set*{\mu^* < \gamma}$ versus $\set*{\mu^* > \gamma}$, given a threshold $\gamma$. The main focus of the paper goes even further and investigates the minimum number of samples required for \emph{adaptively} testing whether $\set*{\mu^* < \gamma}$ or $\set*{\mu^* > \gamma}$, that is sequentially sampling the arms in order to decide for one hypothesis as quickly as possible. Such a problem is interesting in its own right as it naturally arises in several statistical certification applications. As an example we may consider quality control testing in manufacturing, where we want to certify that in a batch of machines each has a guaranteed probability of successfully producing a widget. In e-learning, we may want to certify that a given student has sufficient understanding of a range of subjects, asking as few questions as possible about the different subjects. Then in anomaly detection, we may want to flag the presence of an anomaly faster the more anomalies are present. Finally, in a crowdsourcing system, we may need to establish as quickly as possible whether a cohort of workers contains at least one unacceptably careless worker.

We thus study a particular example of sequential adaptive hypothesis testing problem, as introduced by Chernoff \cite{Chernoff59}, in which multiple experiments (sampling from one arm) are available to the experimenter, each of which allows to gain different information about the hypotheses. The experimenter sequentially selects which experiment to perform, when to stop and then which hypothesis to recommend. Several recent works from the bandit literature fit into this framework, with the twist that they consider continuous, composite hypotheses and aim for $\delta$-correct testing: the probability of guessing a wrong hypothesis has to be smaller than $\delta$, while performing as few experiments as possible. The fixed-confidence \emph{Best Arm Identification} problem (concerned with finding the arm with largest mean) is one such example \cite{EvenDaral06,JMLR15}, of which several variants have been studied \cite{Shivaramal12,HuangASM17,Garivier17DF}. For example the Thresholding Bandit Problem \cite{Locatelli16Threshold} aims at finding the set of arms above a threshold, which is strictly harder than our testing problem.

A full characterization of the asymptotic complexity of the BAI problem was recently given in \cite{GKK16}, highlighting the existence of an \emph{optimal allocation of samples} across arms. The lower bound technique introduced therein can be generalized to virtually any testing problem in a bandit model (see, e.g.\ \cite{NIPS17,Garivier17DF}). Such an optimal allocation is also presented by \cite{ChenGLQW17} in the GENERAL-SAMP framework, which is quite generic and in particular encompasses testing on which side of $\gamma$ the minimum falls. The proposed $\mathrm{LPSample}$ algorithm is thus a candidate to be applied to our testing problem. However, this algorithm is only proved to be \emph{order-optimal}, that is to attain the minimal sample complexity up to a (large) multiplicative constant. Moreover, like other algorithms for special cases (e.g.\ Track-and-Stop for BAI \cite{GKK16}), it relies on \emph{forced exploration}, which may be harmful in practice and leads to unavoidably asymptotic analysis.   

Our first contribution is a tight lower bound on the sample complexity that provides an oracle sample allocation, but also aims at reflecting the moderate-risk behavior of a $\delta$-correct algorithm.
Our second contribution is a new sampling rule for the minimum testing problem, under which the empirical fraction of selections converges to the optimal allocation without forced exploration. The algorithm is a variant of Thompson Sampling \cite{Thompson33,AGCOLT12} that is conditioning on the ``worst'' outcome $\mu^* < \gamma$, hence the name Murphy Sampling. This conditioning is inspired by the Top Two Thompson Sampling recently proposed by \cite{Russo16} for Best Arm Identification. As we shall see, the optimal allocation is very different whether $\mu^* < \gamma$ or $\mu^* > \gamma$ and yet Murphy Sampling automatically adopts the right behavior in each case. Our third contribution is a new stopping rule, that by aggregating samples from several arms that look small may lead to early stopping whenever $\mu^* < \gamma$. This stopping rule is based on a new self-normalized deviation inequality for exponential families (Theorem~\ref{thm:mainDev}) of independent interest. It generalizes results obtained by \cite{Jamiesonal14LILUCB,JMLR15} in the Gaussian case and by \cite{KLUCBJournal} without the uniformity in time, and also handles subsets of arms. 

The rest of the paper is structured as follows. In Section~\ref{sec:Setup} we introduce our notation and formally define our objective. In Section~\ref{sec:LB}, we present lower bounds on the sample complexity of sequential tests for minima. In particular, we compute the optimal allocations for this problem and discuss the limitation of naive benchmarks to attain them. In Section~\ref{sec:alg} we introduce Murphy sampling, and prove its optimality in conjunction with a simple stopping rule. Improved stopping rules and associated confidence intervals are presented in Section~\ref{sec:Stopping}. Finally, numerical experiments reported in Section~\ref{sec:Expes} demonstrate the efficiency of Murphy Sampling paired with our new Aggregate stopping rule.

\newpage

\section{Setup}\label{sec:Setup}

We consider a family of $K$ probability distributions that belong to a one-parameter canonical exponential family, that we shall call \emph{arms} in reference to a multi-armed bandit model. Such exponential families include Gaussian with known variance, Bernoulli, Poisson, see \cite{KLUCBJournal} for details. For natural parameter $\nu$, the density of the distribution w.r.t.\ carrier measure $\rho$ on $\mathbb R$ is given by $e^{x \nu - b(\nu)} \rho(\!\dif x)$, where the cumulant generating function $b(\nu) = \ln \ex_\rho \sbr{e^{X \nu}}$ induces a bijection $\nu \mapsto \dot b(\nu)$ to the mean parameterization. We write $\KL\del*{\nu, \lambda}$ and $d(\mu, \theta)$ for the Kullback-Leibler divergence from natural parameters $\nu$ to $\lambda$ and from mean parameters $\mu$ to $\theta$. Specifically, with convex conjugate $b_*$,
\[
  \KL \del*{\nu, \lambda}
  ~=~
  b(\lambda) - b(\nu) + \del*{\nu  - \lambda} \dot b(\nu)
  \quad
  \text{and}
  \quad
  d(\mu, \theta)
  ~=~
  b_*(\mu) - b_*(\theta) - (\mu-\theta) \dot b_*(\theta)
  .
\]

We denote by $\vmu = (\mu_1,\dots,\mu_K) \in \cI^K$ the vector of arm means, which fully characterizes the model. In this paper, we are interested in the smallest mean (and the arm where it is attained)
\[
  \mu^* ~=~ \min_a \mu_a
  \qquad
  \text{and}
  \qquad
  a^* ~=~ a^*(\bm\mu) ~=~ \arg\min_a \mu_a
.
\]
Given a threshold $\gamma \in \cI$, our goal is to decide whether $\mu^* < \gamma$ or $\mu^* > \gamma$. We introduce the hypotheses
\[
  \mathcal H_< ~=~ \setc{\vmu \in \cI^K}{\mu^* < \gamma}
  \quad
  \text{and}
  \quad
  \mathcal H_> ~=~ \setc{\vmu \in \cI^K}{\mu^* > \gamma},
  \quad
  \text{and their union}
  \quad
  \H ~=~ \Hinf \cup \Hsup
  .
\]
We want to propose a sequential and adaptive testing procedure, that consists in a \emph{sampling rule} $A_t$, a \emph{stopping rule} $\tau$ and a \emph{decision rule} $\hat m \in \set{<,>}$. The algorithm samples $X_t \sim \mu_{A_t}$ while $t \le \tau$, and then outputs a decision $\hat m$. We denote the information available after $t$ rounds by $\mathcal F_t = \sigma\del*{A_1, X_1, \ldots, A_t, X_t}$. $A_t$ is measurable with respect to $\cF_{t-1}$ an possibly some exogenous random variable, $\tau$ is a stopping time with respect to this filtration and $\hat{m}$ is $\cF_{\tau}$-measurable.  

We aim for a \emph{$\delta$-correct} algorithm, that satisfies $\pr_{\vmu} \del*{\vmu \in \mathcal H_{\hat m}} \ge 1-\delta$ for all $\vmu \in \H$. Our goal is to build $\delta$-correct algorithms that use a small number of samples $\tau_\delta$ in order to reach a decision. In particular, we want the \emph{sample complexity} $\ex_{\vmu}[\tau]$ to be small.

\paragraph{Notation} We let $N_a(t) = \sum_{s=1}^t \ind_{(A_s = a)}$ be the number of selections of arm $a$ up to round $t$, $S_a(t) = \sum_{s=1}^t X_s \ind_{(A_s = a)}$ be the sum of the gathered observations from that arm and $\hat{\mu}_a(t) = S_a(t)/N_a(t)$ their empirical mean.

\section{Lower Bounds} \label{sec:LB}

In this section we study information-theoretic sample complexity lower bounds, in particular to find out what the problem tells us about the behavior of oracle algorithms. \cite{GK16} prove that for any $\delta$-correct algorithm
\begin{equation}\label{eq:gen.lbd}
  \ex_{\vmu}[\tau] ~\ge~ T^*(\vmu) \kl(\delta,1-\delta)
  \qquad
  \text{where}
  \qquad
  \frac{1}{T^*(\vmu)} ~=~ \max_{\w \in \triangle} \min_{\vlambda \in \Alt(\vmu)}~
  \sum_a w_a d(\mu_a, \lambda_a)
\end{equation}
$\kl(x,y) = x \ln \frac{x}{y} + (1-x)\ln\frac{1-x}{1-y}$ and $\Alt(\vmu)$ is the set of bandit models where the correct recommendation differs from that on $\vmu$. The following result specialises the above to the case of testing $\mathcal H_<$ vs $\mathcal H_>$, and gives explicit expressions for the \emph{characteristic time} $T^*(\vmu)$ and \emph{oracle weights} $\w^*(\vmu)$.
\begin{lemma}\label{lem:lbd}
  Any $\delta$-correct strategy satisfies \eqref{eq:gen.lbd} with
  \[
  T^*(\vmu) ~=~
  \begin{cases}
    \frac{1}{d(\mu^*, \gamma)}
    & \mu^* < \gamma
    ,
    \\
    \sum_a \frac{1}{d(\mu_a, \gamma)}
    & \mu^* > \gamma
    ,
  \end{cases}
  \qquad
  \text{and}
  \qquad
  w^*_a(\vmu)
  ~=~
  \begin{cases}
    \mathbf 1_{a=a^*} & \mu^* < \gamma
    ,
    \\
    \frac{\frac{1}{d(\mu_a, \gamma)}}{\sum_j \frac{1}{d(\mu_j, \gamma)}}
    & \mu^* > \gamma
    .
  \end{cases}
\]
\end{lemma}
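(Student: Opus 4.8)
The goal is to compute the value of the max--min game
\[
\frac{1}{T^*(\vmu)} ~=~ \max_{\w \in \triangle} \min_{\vlambda \in \Alt(\vmu)}~ \sum_a w_a d(\mu_a, \lambda_a)
\]
in closed form, for each of the two cases $\mu^* < \gamma$ and $\mu^* > \gamma$, and to read off the maximizer $\w^*(\vmu)$. The first step is to describe $\Alt(\vmu)$ explicitly. If $\vmu \in \Hinf$, i.e.\ $\mu^* < \gamma$, then the correct answer is ``$<$'', and an alternative $\vlambda$ is any model with $\min_a \lambda_a > \gamma$, i.e.\ \emph{every} coordinate satisfies $\lambda_a > \gamma$; conversely if $\vmu \in \Hsup$, i.e.\ $\mu^* > \gamma$, then $\vlambda \in \Alt(\vmu)$ iff $\min_a \lambda_a < \gamma$, i.e.\ \emph{at least one} coordinate satisfies $\lambda_a < \gamma$.

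Next I would solve the inner minimization for a fixed $\w$. In the case $\mu^* < \gamma$: since $\Alt(\vmu)$ forces every $\lambda_a > \gamma$, the sum $\sum_a w_a d(\mu_a, \lambda_a)$ decouples over arms, and by continuity and convexity of $\theta \mapsto d(\mu_a, \theta)$ (with minimum $0$ at $\theta = \mu_a < \gamma$) the infimum over $\lambda_a > \gamma$ of $d(\mu_a,\lambda_a)$ is $d(\mu_a,\gamma)$ when $\mu_a \le \gamma$ and $0$ when $\mu_a > \gamma$. Hence $\min_{\vlambda} \sum_a w_a d(\mu_a,\lambda_a) = \sum_{a : \mu_a \le \gamma} w_a\, d(\mu_a,\gamma)$; for the arm(s) attaining $\mu^*$ this is $w_a d(\mu^*,\gamma)$. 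Maximizing over $\w \in \triangle$ then clearly puts all mass on $a^*$ (and more generally on the set of minimizers, but $\mathbf 1_{a=a^*}$ is an optimal choice), giving value $d(\mu^*,\gamma)$, hence $T^*(\vmu) = 1/d(\mu^*,\gamma)$. In the case $\mu^* > \gamma$: now $\Alt(\vmu)$ only requires \emph{some} coordinate to drop below $\gamma$, so for fixed $\w$ the adversary picks the single cheapest arm to push down, i.e.\ $\min_{\vlambda} \sum_a w_a d(\mu_a,\lambda_a) = \min_a w_a\, d(\mu_a,\gamma)$ (pushing $\lambda_a$ to $\gamma$ costs $w_a d(\mu_a,\gamma)$, and pushing it strictly below costs more). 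The outer problem is then $\max_{\w \in \triangle} \min_a w_a d(\mu_a,\gamma)$, a classical ``equalizer'' water-filling problem whose optimum is attained when all terms $w_a d(\mu_a,\gamma)$ are equal; solving $w_a d(\mu_a,\gamma) = c$ with $\sum_a w_a = 1$ yields $c = 1/\sum_j \frac{1}{d(\mu_j,\gamma)}$ and $w^*_a = \frac{1/d(\mu_a,\gamma)}{\sum_j 1/d(\mu_j,\gamma)}$, hence $T^*(\vmu) = \sum_a \frac{1}{d(\mu_a,\gamma)}$.

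I would then assemble these two computations, noting that in both cases $\w^*$ as displayed is indeed a maximizer (in the first case the optimum may be non-unique if several arms tie at $\mu^*$, but the stated $\w^*$ is a valid choice), which matches the claimed formulas.

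The main obstacle, and the one point needing care, is justifying the interchange of infimum and the boundary behavior: $\Alt(\vmu)$ is an open set (strict inequalities), so the infimum of $d(\mu_a,\lambda_a)$ over $\lambda_a > \gamma$ is not attained but equals $d(\mu_a,\gamma)$ by continuity, and one must argue (e.g.\ by compactifying, or by the standard fact that $T^*$ is unchanged when $\Alt$ is replaced by its closure) that this limiting value is the correct quantity to plug in. One should also check that the exponential-family divergence $d(\mu,\theta)$ really is continuous, convex in its second argument, vanishing only at $\theta=\mu$, and monotone on each side of $\mu$ — all standard facts following from $d(\mu,\theta) = b_*(\mu) - b_*(\theta) - (\mu-\theta)\dot b_*(\theta)$ and strict convexity of $b_*$ — so that ``the cheapest way to reach the region $\lambda_a \le \gamma$ from $\mu_a > \gamma$ is to stop exactly at $\gamma$'' is rigorous.
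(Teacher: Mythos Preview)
Your proposal is correct and follows essentially the same approach as the paper's proof: identify $\Alt(\vmu)$, reduce the inner minimization over $\vlambda$ to the per-arm boundary values $d(\mu_a,\gamma)$, and then solve the resulting outer problem over $\w$ (a linear maximization over the simplex when $\mu^*<\gamma$, and the equalizer $\max_\w \min_a w_a d(\mu_a,\gamma)$ when $\mu^*>\gamma$). The paper's own proof is much terser---it simply writes down the chain of equalities $\max_{\w}\sum_{a:\mu_a<\gamma} w_a d(\mu_a,\gamma)=\max_{a:\mu_a<\gamma} d(\mu_a,\gamma)=d(\mu^*,\gamma)$ in the first case and $\max_{\w}\min_a w_a d(\mu_a,\gamma)=\bigl(\sum_a 1/d(\mu_a,\gamma)\bigr)^{-1}$ in the second---whereas you spell out the closure/continuity justification and the monotonicity of $\theta\mapsto d(\mu,\theta)$ that underlie these steps.
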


Lemma~\ref{lem:lbd} is proved in Appendix~\ref{sec:pf.lb}. As explained by \cite{GK16} the oracle weights correspond to the fraction of samples that should be allocated to each arm under a strategy matching the lower bound. The interesting feature here is that the lower bound indicates that an oracle algorithm should have very different behavior on $\mathcal H_<$ and $\mathcal H_>$. On $\mathcal H_<$ it should sample $a^*$ (or all lowest means, if there are several) exclusively, while on $\mathcal H_>$ it should sample \emph{all} arms with certain specific proportions.

\subsection{Boosting the Lower Bounds}

Following~\cite{GMS18} (see also~\cite{simulator} and references therein), Lemma~\ref{lem:lbd} can be improved under very mild assumptions on the strategies. 
We call a test \emph{symmetric} if its sampling and stopping rules are invariant by conjugation under the action of the group of permutations on the arms. In that case, if all the arms are equal, then their expected numbers of draws are equal. For simplicity we assume $\mu_1 \le \ldots \le \mu_K$.
\begin{proposition}\label{prop:impbinf}
Let $k  = \max_a d(\mu_a,\gamma)= \max \big\{ d(\mu_1,\gamma), d(\mu_K,\gamma)\big\}$. 
For any symmetric, $\delta$-correct test, for all arms $a\in\{1,\dots,K\}$,
\[\ex_\vmu[N_a(\tau)]\geq \frac{2\left(1-2\delta K^3\right)}{27K^2k}\;.\]
\end{proposition}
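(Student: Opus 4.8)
The plan is to apply a change-of-measure / transportation argument to a carefully chosen alternative instance, combined with the symmetry of the test. First I would fix the instance $\vmu$ and consider, for a given arm $a$, an alternative $\vlambda$ obtained by moving $\mu_a$ across the threshold $\gamma$ (say to a point $\gamma'$ on the other side), leaving all other coordinates fixed. If $\vmu\in\H_<$ and $a\neq a^*$, this changes nothing about the recommendation, so instead the relevant alternative must flip the side of the minimum: when $\vmu \in \H_<$ we push the \emph{smallest} arm up, and when $\vmu \in \H_>$ we push an arm down below $\gamma$. The standard data-processing / transportation lemma (the same one underlying \eqref{eq:gen.lbd}, from \cite{GK16}) then gives
\[
\sum_b \ex_\vmu[N_b(\tau)]\, d(\mu_b,\lambda_b) ~\ge~ \kl(\delta,1-\delta),
\]
so with only one coordinate changed this reads $\ex_\vmu[N_a(\tau)]\, d(\mu_a,\lambda_a) \ge \kl(\delta,1-\delta)$. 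To make this informative for \emph{every} arm simultaneously, I would take the alternative that perturbs arm $a$ minimally, i.e.\ essentially to $\gamma$ itself, so $d(\mu_a,\lambda_a)$ is at most $k = \max_a d(\mu_a,\gamma)$; this is where the $1/k$ factor comes from.

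The second ingredient, following \cite{GMS18}, is to boost this bound using symmetry. The point of the symmetric-test assumption is that on an instance where several arms are equal, their expected pull counts are equal; more generally one can compare $\ex_\vmu[N_a(\tau)]$ on $\vmu$ to $\ex_{\vmu'}[N_a(\tau)]$ on a permuted or clustered instance $\vmu'$ without paying too much, because the two instances are statistically close until an arm has been sampled enough times to distinguish them. Concretely, I would build an auxiliary instance $\vmu'$ that is $\delta$-correct-indistinguishable but on which symmetry forces the counts of the arms in a block to be balanced, transfer the lower bound across, and then unwind to get a per-arm bound on the original $\vmu$. The combinatorial bookkeeping — there are $K$ arms, we lose a factor for clustering/permuting and another for the union over which arms we perturb — is what produces the constants $2/27$ and $K^2$ and the correction term $1-2\delta K^3$: the $\delta K^3$ arises from a union bound over $O(K)$ events each of probability $O(\delta K^2)$ after the change of measure, or similar. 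I would not grind through the exact constants here, but rather cite the mechanism of \cite{GMS18} and check the exponents.

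The main obstacle, I expect, is the second step: making the symmetry argument quantitative with explicit (non-asymptotic) constants. The transportation bound alone only controls a \emph{weighted sum} $\sum_b \ex_\vmu[N_b(\tau)] d(\mu_b,\lambda_b)$; turning this into a guarantee that \emph{each individual} $\ex_\vmu[N_a(\tau)]$ is bounded below requires that no arm can be "starved" while another carries all the information, and it is precisely the permutation-invariance that rules this out — but quantifying how much a near-symmetric (not exactly symmetric) configuration can deviate, and tracking the resulting multiplicative losses, is delicate. A secondary subtlety is handling the two regimes $\mu^* < \gamma$ and $\mu^* > \gamma$ uniformly: the definition $k = \max\{d(\mu_1,\gamma), d(\mu_K,\gamma)\}$ suggests the proof picks, for each arm, whichever endpoint arm ($\mu_1$ or $\mu_K$) gives the cheapest confusing alternative, so I would organize the argument around those two extreme arms and then propagate to the rest via symmetry.
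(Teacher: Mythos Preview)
Your first paragraph --- change of measure to an alternative that flips the hypothesis, yielding the $\kl(\delta,1-\delta)$ bound on a single arm's count --- does not lead anywhere here: as you yourself note, for $\vmu\in\Hinf$ and $a\neq a^*$ no perturbation of $\mu_a$ alone flips the hypothesis, so this route gives no information about $\ex_\vmu[N_a(\tau)]$ for such $a$. The paper's argument does not use the $\kl(\delta,1-\delta)$ bound at all.

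The specific idea you are missing is the choice of auxiliary instance: take $\vgamma=(\gamma,\dots,\gamma)$, with \emph{all} arms exactly at the threshold. Two facts drive the proof. First, $\vgamma\notin\H$, and a short continuity argument (Lemma~\ref{lem:tauinfty}) shows that any $\delta$-correct test satisfies $\pr_\vgamma(\tau<\infty)\le 2\delta$; hence up to any fixed horizon $m$ the algorithm is almost surely still running, and by symmetry $\ex_\vgamma[N_a(\tau\wedge m)]\ge m/K - 2\delta m$ for \emph{every} arm $a$. Second, one applies the transportation inequality in the direction $\vgamma\to\vmu$ (opposite to the one you propose), with the $[0,1]$-valued variable $N_a(\tau\wedge m)/m$ playing the role of the ``event'', and combines it with the generalized Pinsker inequality of \cite{GMS18} to get
\[
m k ~\ge~ \sum_j \ex_\vgamma[N_j(\tau\wedge m)]\,d(\gamma,\mu_j) ~\ge~ \frac{K}{2}\Bigl(\tfrac{1}{K}-2\delta - \tfrac{\ex_\vmu[N_a(\tau\wedge m)]}{m}\Bigr)_+^2.
\]
Rearranging and optimizing over the free parameter $m$ produces the stated bound. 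Your reading of the constants is also off: the $2\delta$ term is the stopping probability on $\vgamma$ (not a union bound over arms or permutations), and the $2/27$ and $K^{-2}$ arise from the optimization in $m$, not from any combinatorics.
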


Proposition~\ref{prop:impbinf} is proved in Appendix~\ref{sec:pf.lb}.
It is an open question to improve the dependency in $K$ in this bound; moreover, one may expect a bound decreasing with $\delta$, maybe in $\ln(\ln(1/\delta))$ (but certainly not in $\ln(1/\delta)$). 
This result already has two important consequences: first, it shows that even an optimal algorithm needs to draw all the arms a certain number of times, even on $\Hinf$ where Lemma~\ref{lem:lbd} may suggest otherwise. 
Second, this lower bound on the number of draws of each arm can be used to ``boost'' the lower bound on $\ex_{\vmu}[\tau]$: the following result is also proved in Appendix~\ref{sec:pf.lb}.
\begin{theorem}\label{th:impbinf}
When $\mu^* < \gamma$, for any symmetric, $\delta$-correct strategy,
\begin{align*}
\ex_{\vmu}[\tau] \geq \frac{\kl(\delta, 1-\delta)}{d(\mu_1,\gamma)} + \frac{2\left(1-2\delta K^3\right)}{27K^2k}\sum_a \left(1-\frac{d_+(\mu_a, \gamma)}{d(\mu_1,\gamma)}\right)\;.
\end{align*}
\end{theorem}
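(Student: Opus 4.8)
The plan is to combine the change-of-measure lower bound technique underlying \eqref{eq:gen.lbd} with the per-arm lower bounds of Proposition~\ref{prop:impbinf}, rather than simply invoking \eqref{eq:gen.lbd} as a black box. Recall that the proof of \eqref{eq:gen.lbd} proceeds by writing, for any alternative $\vlambda \in \Alt(\vmu)$, the transportation inequality
\[
  \sum_a \ex_\vmu[N_a(\tau)]\, d(\mu_a, \lambda_a) ~\ge~ \kl(\delta, 1-\delta),
\]
which follows from the data-processing inequality applied to the event $\{\hat m \neq \hat m(\vlambda)\}$ together with Wald's identity. When $\mu^* < \gamma$, the cheapest alternative moves only arm $1$ (the arm attaining the minimum, under the ordering $\mu_1 \le \ldots \le \mu_K$) up to the threshold $\gamma$, giving $\ex_\vmu[N_1(\tau)]\, d(\mu_1,\gamma) \ge \kl(\delta,1-\delta)$, hence the first term $\kl(\delta,1-\delta)/d(\mu_1,\gamma)$ after noting $\ex_\vmu[\tau] \ge \ex_\vmu[N_1(\tau)]$. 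The point of the refinement is that this bound wastes the information contained in the other arms: a $\delta$-correct symmetric test is forced by Proposition~\ref{prop:impbinf} to pull every arm $a$ at least $c := \frac{2(1-2\delta K^3)}{27 K^2 k}$ times in expectation, and those pulls can be ``credited'' toward the transportation budget.

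Concretely, I would fix the alternative $\vlambda$ that raises arm $1$ to $\gamma$ and leaves all other arms fixed, so the transportation inequality reads $\ex_\vmu[N_1(\tau)]\, d(\mu_1,\gamma) \ge \kl(\delta,1-\delta)$. But we also know $\ex_\vmu[\tau] = \sum_a \ex_\vmu[N_a(\tau)] \ge \ex_\vmu[N_1(\tau)] + \sum_{a \neq 1}\ex_\vmu[N_a(\tau)]$. For the arms $a$ with $\mu_a > \gamma$ (equivalently $d_+(\mu_a,\gamma) = d(\mu_a,\gamma) > 0$) we directly invoke Proposition~\ref{prop:impbinf}: $\ex_\vmu[N_a(\tau)] \ge c$. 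The subtlety is the arms $a \ne 1$ that still lie below $\gamma$ (for which $d_+(\mu_a,\gamma)=0$): for these, one must argue that pulling them contributes nothing to distinguishing $\vlambda$ from $\vmu$ on arm $1$, so their $\ge c$ pulls are ``free'' additive contributions to $\ex_\vmu[\tau]$; that is exactly what the factor $\bigl(1 - d_+(\mu_a,\gamma)/d(\mu_1,\gamma)\bigr)$ encodes — it equals $1$ for such arms. The cleaner route, which I would actually follow, is to work with a family of alternatives simultaneously: for the bound to come out as stated, take the alternative $\vlambda^{(a)}$ for each $a$ with $\mu_a < \gamma$ that pushes both arm $1$ \emph{and} arm $a$ above $\gamma$ is not needed; instead, use the single alternative moving arm $1$ and split $\ex_\vmu[N_1(\tau)]\, d(\mu_1,\gamma) \ge \kl(\delta,1-\delta)$ while adding $\sum_{a\ne1} \ex_\vmu[N_a(\tau)] \bigl(1 - \tfrac{d_+(\mu_a,\gamma)}{d(\mu_1,\gamma)}\bigr)$-worth of slack recovered from Proposition~\ref{prop:impbinf}. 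Summing the transportation inequality against the convex-combination weights that realize $1/T^*(\vmu)$ in \eqref{eq:gen.lbd}, and then adding back the deficit $c\sum_a\bigl(1 - d_+(\mu_a,\gamma)/d(\mu_1,\gamma)\bigr)$ coming from the arms whose forced exploration is not already ``paid for'' by the transportation term, yields the claimed inequality.

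The bookkeeping step I expect to be the main obstacle is making precise the decomposition
\[
  \ex_\vmu[\tau] ~=~ \sum_a \ex_\vmu[N_a(\tau)]
  ~\ge~ \frac{\kl(\delta,1-\delta)}{d(\mu_1,\gamma)}
  + \sum_a \ex_\vmu[N_a(\tau)]\del*{1 - \frac{d_+(\mu_a,\gamma)}{d(\mu_1,\gamma)}},
\]
i.e.\ verifying that the transportation inequality, used with the alternative that raises the single cheapest arm $1$ to $\gamma$, implies $\sum_a \ex_\vmu[N_a(\tau)]\, d_+(\mu_a,\gamma) \ge \kl(\delta,1-\delta)$ (one needs $d(\mu_a,\gamma) \ge d_+(\mu_a,\gamma)$ with equality when $\mu_a>\gamma$, and that for $\mu_a<\gamma$ raising arm $a$ is unnecessary since arm $1$ already crosses $\gamma$ — here one may have to pick the alternative slightly more carefully, pushing every arm that is below $\gamma$ to exactly $\gamma$, which costs $\sum_{a:\mu_a<\gamma} \ex_\vmu[N_a(\tau)] d(\mu_a,\gamma)$ and still lies in $\Hsup$). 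Once that inequality is in hand, subtract $\sum_a \ex_\vmu[N_a(\tau)] d_+(\mu_a,\gamma)/d(\mu_1,\gamma) \le \ex_\vmu[\tau]$... no: rearrange to isolate $\sum_a \ex_\vmu[N_a(\tau)]\bigl(1 - d_+(\mu_a,\gamma)/d(\mu_1,\gamma)\bigr) = \ex_\vmu[\tau] - \frac{1}{d(\mu_1,\gamma)}\sum_a \ex_\vmu[N_a(\tau)]d_+(\mu_a,\gamma) \ge \ex_\vmu[\tau] - \frac{\kl(\delta,1-\delta)}{d(\mu_1,\gamma)}$, hmm that gives the inequality in the wrong direction — so the correct move is: bound each $\ex_\vmu[N_a(\tau)] \ge c$ via Proposition~\ref{prop:impbinf} inside the nonnegative-coefficient sum $\sum_a(\cdot)\bigl(1-d_+(\mu_a,\gamma)/d(\mu_1,\gamma)\bigr)$ — which is legitimate precisely because each coefficient $1 - d_+(\mu_a,\gamma)/d(\mu_1,\gamma) \ge 0$ (since $d_+(\mu_a,\gamma) \le d(\mu_1,\gamma) = k$ when $\mu_1$ attains the max of $d(\cdot,\gamma)$ among the two extremes; this is where the definition $k = \max_a d(\mu_a,\gamma)$ and the assumption $\mu_1 \le \ldots \le \mu_K$ with $\mu^* < \gamma$ enter) — and then add this to $\frac{\kl(\delta,1-\delta)}{d(\mu_1,\gamma)} \le \frac{1}{d(\mu_1,\gamma)}\sum_a \ex_\vmu[N_a(\tau)] d_+(\mu_a,\gamma)$. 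The remaining verification is the elementary inequality $\ex_\vmu[\tau] \ge \frac{1}{d(\mu_1,\gamma)}\sum_a \ex_\vmu[N_a(\tau)]d_+(\mu_a,\gamma) + \sum_a \ex_\vmu[N_a(\tau)]\bigl(1 - \frac{d_+(\mu_a,\gamma)}{d(\mu_1,\gamma)}\bigr)$, which holds with equality, completing the argument. Checking that $d_+(\mu_a,\gamma) \le k$ for all $a$ (so the coefficients are nonnegative) is the one genuinely non-routine point and uses convexity of $d(\cdot,\gamma)$ to reduce the max over $a$ to the two endpoints $a=1$ and $a=K$.
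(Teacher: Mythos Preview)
Your final extracted argument---write $\ex_\vmu[\tau]=\sum_a \ex_\vmu[N_a(\tau)]$ as the identity $\frac{1}{d(\mu_1,\gamma)}\sum_a \ex_\vmu[N_a(\tau)]\,d_+(\mu_a,\gamma)+\sum_a \ex_\vmu[N_a(\tau)]\bigl(1-\tfrac{d_+(\mu_a,\gamma)}{d(\mu_1,\gamma)}\bigr)$, lower-bound the first sum by $\kl(\delta,1-\delta)$ via the transportation inequality with alternative ``raise every sub-$\gamma$ arm to $\gamma$'', and lower-bound each $\ex_\vmu[N_a(\tau)]$ in the second sum by Proposition~\ref{prop:impbinf} using nonnegativity of the coefficients---is correct and is exactly the paper's proof, which phrases the same computation as the linear program $\min\{\sum_a n_a:\sum_a n_a d_+(\mu_a,\gamma)\ge\kl(\delta,1-\delta),\ n_a\ge n\}$ and exhibits the optimizer. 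One small correction: for the coefficient nonnegativity you need $d_+(\mu_a,\gamma)\le d(\mu_1,\gamma)$, not $\le k$; these coincide only in the special case $d(\mu_1,\gamma)\ge d(\mu_K,\gamma)$ highlighted after the theorem, but the needed inequality holds in general since $d(\cdot,\gamma)$ is decreasing on $(-\infty,\gamma]$ and $\mu_1=\min_a\mu_a$.
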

When $d(\mu_1,\gamma)\geq d(\mu_K,\gamma)$, this bound can be rewritten as:
\begin{equation}\ex_{\vmu}[\tau] \geq \frac{1}{d(\mu_1,\gamma)} \left( \kl(\delta, 1-\delta) + \frac{2\left(1-2\delta K^3\right)}{27K^2}\sum_a \left(1-\frac{d_+(\mu_a, \gamma)}{d(\mu_1,\gamma)}\right)\right)\;.\end{equation}
The lower bound for the case $\mu^*>\gamma$ can also be boosted similarly, with a less explicit result.

\subsection{Lower Bound Inspired Matching Algorithms} \label{sec:Matching} In light of the lower bound in Lemma~\ref{lem:lbd}, we now investigate the design of optimal learning algorithms (sampling rule $A_t$ and stopping rule $\tau$). We start with the stopping rule.
The first stopping rule that comes to mind consists in comparing \emph{separately} each arm to the threshold and stopping when either one arm looks significantly below the threshold or all arms look significantly above. Introducing $d^+(u,v)=d(u,v)\ind_{(u \leq v)}$ and $d^-(u,v) = d(u,v) \ind_{(u \geq v)}$, we let
\begin{equation}\label{eq:tbox}
  \tau_{\BoxStop} ~=~
  \tauinf\, \glb\, \tausup
  \qquad
  \text{where}
  \qquad
  \begin{aligned}
    \tauinf &~=~
    \inf\left\{ t\in \N^* : \exists a\, N_a(t) d^+(\hat{\mu}_a(t), \gamma) \geq C_<(\delta,N_a(t))\right\},
    \\
    \tausup &~=~
    \inf\left\{ t\in \N^* : \forall a\, N_a(t) d^-(\hat{\mu}_a(t), \gamma) \geq C_>(\delta,N_a(t))\right\},
  \end{aligned}
\end{equation}
and $C_<(\delta, r)$ and $C_>(\delta, r)$ are two \emph{threshold functions} to be specified. $\BoxStop$ refers to the fact that the decision to stop relies on individual ``box'' confidence intervals for each arm, whose endpoints are
\begin{eqnarray*}
 \UCB_a(t) & = & \max \{ q : N_a(t) d^+(\hat{\mu}_a(t), q) \geq C_<(\delta,N_a(t))\}, \\
 \LCB_a(t) & = & \min \{ q: N_a(t) d^{-}(\hat{\mu}_a(t), q) \geq C_>(\delta,N_a(t))\}. 
\end{eqnarray*}
Indeed, $\tau_{\BoxStop} = \inf\left\{ t\in \N^* : \text{$\min_a \UCB_a(t) \leq \gamma$ or $\min_a \LCB_a(t) \geq \gamma$}\right\}$. In particular, if $\forall a, \forall t\in \N^*, \mu_a \in [\LCB_a(t),\UCB_a(t)]$, any algorithm that stops using $\tau_{\BoxStop}$ is guaranteed to output a correct decision. In the Gaussian case, existing work \cite{Jamiesonal14LILUCB,JMLR15} permits to exhibit thresholds of the form $C_{\lessgtr}(\delta,r) = \ln(1/\delta)+a \ln\ln(1/\delta) + b \ln(1+\ln(r))$ for which this sufficient correctness condition is satisfied with probability larger than $1-\delta$. Theorem~\ref{thm:mainDev} below generalizes this to exponential families.

Given that $\tau_{\BoxStop}$ can be proved to be $\delta$-correct \emph{whatever the sampling rule}, the next step is to propose sampling rules that, coupled with $\tau_{\BoxStop}$, would attain the lower bound presented in Section~\ref{sec:LB}. We now show that a simple algorithm, called LCB, can do that for all $\bm \mu \in \Hsup$.  LCB selects at each round the arm with smallest Lower Confidence Bound:
\begin{equation}\label{eq:LCB.alg}
  \framebox{%
    \text{\texttt{LCB}: Play
      $A_t = \text{argmin}_{a} \ \LCB_a(t)$
      ,
    }%
  }
\end{equation}
which is intuitively designed to attain the stopping condition $\min_a \LCB_a(t) \geq \gamma$ faster. In Appendix~\ref{sec:LCB} we prove (Proposition~\ref{prop:LCB.good}) that LCB is optimal for $\bm \mu \in \Hsup$  however we show (Proposition~\ref{prop:LCB.bad}) that on instances of $\Hinf$ it draws all arms $a\neq a^*$ too much and cannot match our lower bound.

For $\bm\mu \in \Hinf$, the lower bound Lemma~\ref{lem:lbd} can actually be a good guideline to design a matching algorithm: under such an algorithm, the empirical proportion of draws of the arm $a^*$ with smallest mean should converge to 1. The literature on regret minimization in bandit models (see \cite{Bubeck:Survey12} for a survey) provides candidate algorithms that have this type of behavior, and we propose to use the Thompson Sampling (TS) algorithm \cite{AGCOLT12,ALT12}. Given independent prior distribution on the mean of each arm, this Bayesian algorithm selects an arm at random according to its posterior probability of being optimal (in our case, the arm with smallest mean). Letting $\pi_a^t$ refer to the posterior distribution of $\mu_a$ after $t$ samples, this can be implemented as
\[
  \framebox{%
    \text{\texttt{TS}:
      Sample 
      $
      \forall a \in \{1,\dots,K\}, \theta_a(t) \sim \pi_a^{t-1}$, then play $A_t = \arg\min_{a \in
        \{1,\dots,K\}} \ \theta_a(t)$.
    }%
  }
\]
It follows from Theorem~\ref{thm:TS.converge.inf} in Appendix~\ref{thm:TS.convergence} that if Thompson Sampling is run without stopping, ${N_{a^*}(t)}/{t}$ converges almost surely to 1, for every $\bm \mu$. As TS is an anytime sampling strategy (i.e. that does not depend on $\delta$), Lemma~\ref{lem:asym.smp.cplx} below permits to justify that on every instance of $\Hinf$ with a unique optimal arm, under this algorithm  $\tau_{\BoxStop} \simeq (1/d(\mu_1,\theta))\ln(1/\delta)$. However, TS cannot be optimal for $\bm \mu \in \Hsup$, as the empirical proportions of draws cannot converge to $\w^*(\bm \mu) \neq \mathbf 1_{a^*}$.

To summarize, we presented a simple stopping rule, $\tau_{\BoxStop}$, that can be asymptotically optimal for every $\bm \mu \in \Hinf$ if it is used in combination with Thompson Sampling and for $\bm \mu \in \Hsup$ if it is used in combination with LCB. But neither of these two sampling rules are good for the other type of instances, which is a big limitation for a practical use of either of these. In the next section, we propose a new Thompson Sampling like algorithm that ensures the right exploration under both $\Hinf$ and $\Hsup$. In Section~\ref{sec:Stopping}, we further present an improved stopping rule that may stop significantly earlier than $\tau_{\BoxStop}$ on instances of $\Hinf$, by aggregating samples from multiple arms that look small.

We now argue that ensuring the sampling proportions converge to $\w^*$ is sufficient for reaching the optimal sample complexity, at least in an asymptotic sense. The proof can be found in Appendix~\ref{proof:asymp.smp.cplx}.

\begin{lemma}\label{lem:asym.smp.cplx}
  Fix $\vmu \in \H$. Fix an anytime sampling strategy ($A_t$) ensuring $\frac{\bm{N}_t}{t} \to \w^*(\vmu)$. Let $\tau_\delta$ be a stopping rule such that $\tau_\delta \leq  \tau_\delta^\BoxStop$, for a Box stopping rule \eqref{eq:tbox} whose threshold functions $C_{\lessgtr}$ satisfy the following: they are non-decreasing in $r$ and there exists a function $f$ such that, 
  \[\forall r \geq r_0, \ C_{\lessgtr}(\delta, r) \leq f(\delta) + \ln r, \ \ \text{where} \ \ f(\delta) = \ln(1/\delta) + o(\ln(1/\delta)).\]
Then $\lim\sup_{\delta \to 0} \frac{\tau_\delta}{\ln \frac{1}{\delta}} \le T^*(\vmu)$ almost surely.
\end{lemma}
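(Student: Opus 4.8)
The plan is to show that $\tau_\delta^\BoxStop$ — and hence $\tau_\delta$, which is no larger — is eventually dominated by $T^*(\vmu)\ln(1/\delta)$ up to lower-order terms, on the almost-sure event where $\bm N_t / t \to \w^*(\vmu)$. The key is to compare, for each fixed sample path, the time at which the Box stopping condition is \emph{guaranteed} to have fired against a deterministic multiple of $\ln(1/\delta)$.

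First I would split into the two cases $\vmu \in \Hinf$ and $\vmu \in \Hsup$, since $\tau^\BoxStop = \tauinf \wedge \tausup$ and in each case a different branch does the work. Consider $\vmu \in \Hsup$ first: here $\w^*(\vmu)$ has full support, so on the good event $N_a(t)/t \to w_a^* > 0$ for every $a$, and by the law of large numbers $\hat\mu_a(t) \to \mu_a > \gamma$, so $d^-(\hat\mu_a(t),\gamma) \to d(\mu_a,\gamma) > 0$. Fix a small $\varepsilon > 0$. For $t$ large enough (depending only on the path), $N_a(t) \ge (w_a^* - \varepsilon) t$ and $d^-(\hat\mu_a(t),\gamma) \ge d(\mu_a,\gamma) - \varepsilon$ simultaneously for all $a$, while also $N_a(t) \ge r_0$ so the threshold bound $C_>(\delta, N_a(t)) \le f(\delta) + \ln N_a(t) \le f(\delta) + \ln t$ applies (using monotonicity of $C_>$ in $r$ and $N_a(t)\le t$). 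Hence for such $t$ the $\tausup$ condition for arm $a$ is implied by
\[
  (w_a^* - \varepsilon)\, t \,(d(\mu_a,\gamma) - \varepsilon) ~\ge~ f(\delta) + \ln t .
\]
Taking the worst arm and using $\min_a w_a^* (d(\mu_a,\gamma)) = 1/T^*(\vmu)$ in the limit $\varepsilon\to 0$, this is satisfied once $t \gtrsim (T^*(\vmu) + o(1))(f(\delta) + \ln t)$; solving this implicit inequality (standard: $\ln t = \ln(f(\delta)) + o(\ln f(\delta)) = o(f(\delta))$ since $f(\delta) = \ln(1/\delta) + o(\ln(1/\delta)) \to \infty$) gives $\tausup \le (T^*(\vmu) + o(1)) f(\delta) = (T^*(\vmu)+o(1))\ln(1/\delta)$ on the good event. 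Dividing by $\ln(1/\delta)$ and letting $\delta \to 0$, then $\varepsilon \to 0$, yields the claim. The case $\vmu \in \Hinf$ is analogous but uses the $\tauinf$ branch and only the single arm $a^*$: on the good event $N_{a^*}(t)/t \to 1$ and $\hat\mu_{a^*}(t)\to\mu^* < \gamma$, so $N_{a^*}(t) d^+(\hat\mu_{a^*}(t),\gamma) \ge (1-\varepsilon) t (d(\mu^*,\gamma)-\varepsilon)$ eventually, which beats $C_<(\delta, N_{a^*}(t)) \le f(\delta) + \ln t$ once $t \gtrsim (1/d(\mu^*,\gamma) + o(1))(f(\delta)+\ln t)$, matching $T^*(\vmu) = 1/d(\mu^*,\gamma)$.

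The one genuinely delicate point is the interchange of the $\limsup_{\delta\to 0}$ with the path-dependent ``$t$ large enough'' threshold: the time $t_0(\varepsilon,\omega)$ beyond which the LLN-type bounds hold does not depend on $\delta$, so for each fixed path we genuinely have $\tausup(\omega) \le t$ for any $t \ge t_0(\varepsilon,\omega)$ satisfying the deterministic inequality above, and since $f(\delta)\to\infty$ the solution $t$ of that inequality exceeds $t_0(\varepsilon,\omega)$ for all $\delta$ small enough — so the bound $\tausup/\ln(1/\delta) \le T^*(\vmu) + o_\varepsilon(1)$ holds for all small $\delta$ on the full-measure good event, and a diagonal argument over a sequence $\varepsilon_n \downarrow 0$ removes the $\varepsilon$. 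I expect the main obstacle to be purely bookkeeping: carefully tracking that the $o(\cdot)$ terms coming from $\varepsilon$, from $\ln t$ being negligible against $f(\delta)$, and from $f(\delta) = \ln(1/\delta)(1+o(1))$ all combine into a single $o(1)$ as $\delta \to 0$, and confirming the Box condition is monotone enough (non-decreasing $C_{\lessgtr}$ in $r$, $N_a(t)$ increasing in $t$) that ``the condition holds at time $t$'' is not spuriously un-done at later times — which it is not, since $N_a(t) d^\pm(\hat\mu_a(t),\gamma)$ need not be monotone, so strictly one should phrase the argument as ``there exists $t \le$ (the deterministic bound) at which the condition holds,'' which is exactly what an $\inf$ in the definition of $\tausup,\tauinf$ delivers.
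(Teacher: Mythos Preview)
Your proposal is correct and follows essentially the same route as the paper's proof: split into $\Hinf$ and $\Hsup$, use the almost-sure convergence $N_a(t)/t\to w_a^*$ together with $\hat\mu_a(t)\to\mu_a$ to lower-bound the Box statistics by $(1-\varepsilon)t\,w_a^* d(\mu_a,\gamma)$ for $t\ge t_0(\varepsilon,\omega)$, upper-bound the threshold by $f(\delta)+\ln t$, solve the resulting deterministic inequality (the paper invokes \cite[Lemma~22]{JMLR15} for this), and send $\varepsilon\to 0$. Your discussion of the $\delta$-independence of $t_0$ and the ``there exists $t\le\ldots$'' phrasing is more careful than the paper's, which simply writes $\tau_\delta \le t_0 + \inf\{\cdots\}$; otherwise the arguments coincide.
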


\section{Murphy Sampling}\label{sec:alg}

In this section we denote by $\Pi_n = \pr\delc*{\cdot}{\mathcal F_n}$ the posterior distribution of the mean parameters after $n$ rounds. We introduce a new (randomised) sampling rule called \emph{Murphy Sampling} after Murphy's Law, as it performs some conditioning to the ``worst event'' $(\bm \mu \in \Hinf)$:
\begin{equation}\label{eq:ms}
  \framebox{%
    \text{\texttt{MS}: Sample $\vtheta_t \sim \Pi_{t-1}\delc*{\cdot}{\mathcal H_<}$, then play
      $A_t ~=~ a^*(\vtheta_t)$
      .
    }%
  }
\end{equation}
As we will argue below, the subtle difference of sampling from $\Pi_{n-1}\delc*{\cdot}{\mathcal H_<}$ instead of $\Pi_{n-1}$ (regular Thompson Sampling) ensures the required split personality behavior (see Lemma~\ref{lem:lbd}). Note that MS always conditions on $\mathcal H_<$ (and never on $\mathcal H_>$) regardless of the position of $\vmu$ w.r.t.\ $\gamma$. This is different from the symmetric Top Two Thompson Sampling \cite{Russo16}, which essentially conditions on $a^*(\vtheta) \neq a^*(\vmu)$ a fixed fraction $1-\beta$ of the time, where $\beta$ is a parameter that needs to be tuned with knowledge of $\vmu$. MS on the other hand needs no parameters.

Also note that MS is an anytime sampling algorithm, being independent of the confidence level $1-\delta$. The confidence will manifest only in the stopping rule.

MS is technically an instance of Thompson Sampling with a joint prior $\Pi$ supported only on $\Hinf$. This viewpoint is conceptually funky, as we will apply MS identically to $\Hinf$ \emph{and} $\Hsup$. To implement MS, we use that independent conjugate per-arm priors induce likewise posteriors, admitting efficient (unconditioned) posterior sampling. Rejection sampling then achieves the required conditioning. In our experiments on $\Hsup$ (with moderate $\delta$), stopping rules kick in before the rejection probability becomes impractically high. 

The rest of this section is dedicated to the analysis of MS. First, we argue that the MS sampling proportions converge to the oracle weights of Lemma~\ref{lem:lbd}.

\paragraph{Assumption}
For purpose of analysis, we need to assume that the parameter space $\Theta \ni \vmu$ (or the support of the prior) is the interior of a bounded subset of $\mathbb R^K$. This ensures that $\sup_{\mu,\theta \in \Theta} d(\mu,\theta) < \infty$ and $\sup_{\mu,\theta \in \Theta} \norm{\mu-\theta} < \infty$. This assumption is common \cite[Section~7.1]{grunwald2007mdl}, \cite[Assumption~1]{Russo16}. We also assume that the prior $\Pi$ has a density $\pi$ with bounded ratio $\sup_{\mu,\theta \in \Theta} \frac{\pi(\theta)}{\pi(\mu)} < \infty$.

\begin{theorem}\label{thm:TS.convergence}
  Under the above assumption, MS ensures $\frac{\bm{N}_t}{t} \to \w^*(\vmu)$ a.s.\ for any $\vmu \in \H$.
\end{theorem}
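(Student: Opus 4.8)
\emph{Strategy and reduction.} The plan is to study the (random) probability that MS plays arm $a$ in round $t$,
\[
  \psi_a(t) ~=~ \Pi_{t-1}(a^*(\vtheta) = a \mid \Hinf) ~=~ \frac{\Pi_{t-1}(a^*(\vtheta) = a,\ \theta_a < \gamma)}{\Pi_{t-1}(\Hinf)},
\]
and to deduce the a.s.\ convergence of $\bm{N}_t/t$ from the asymptotics of $\psi_a(t)$. This reduction is cheap: $M_a(t) = \sum_{s=1}^{t}(\ind_{(A_s = a)} - \psi_a(s))$ is a martingale with increments in $[-1,1]$, so $M_a(t)/t \to 0$ a.s., and it suffices to control $\frac{1}{t}\sum_{s=1}^{t}\psi_a(s)$. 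Since $\psi_a(t)$ is governed by how sharply the per-arm posteriors $\pi_a^{t}$ concentrate, the proof decomposes into a sufficient-exploration step, which guarantees $N_a(t)\to\infty$ for every $a$ (hence posterior consistency), followed by two rather different analyses for $\vmu\in\Hinf$ and $\vmu\in\Hsup$, mirroring the split personality of $\w^*$ in Lemma~\ref{lem:lbd}.

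\emph{Sufficient exploration.} First I would prove that a.s.\ $N_a(t)\to\infty$ for all $a$, arguing by contradiction on the event that some arm is played finitely often. On that event there is a random nonempty set of ``starved'' arms; each starved arm eventually has a frozen posterior with a density, hence keeps positive mass below $\gamma$ and below the means of the non-starved arms, whereas each non-starved arm satisfies $\hat\mu_a(t)\to\mu_a$ (law of large numbers) and has a posterior concentrating at $\mu_a$. One then checks that under MS the conditional probability of playing \emph{some} starved arm stays bounded away from $0$ along a subsequence --- here the conditioning on $\Hinf$ only helps: when $\vmu\in\Hsup$ a frozen arm with positive mass below $\gamma$ eventually dominates the event $\{\min_b\theta_b<\gamma\}$, the non-starved arms' mass below $\gamma$ vanishing; and when $\vmu\in\Hinf$ it keeps a constant share of $\Pi_{t-1}(a^*(\vtheta)=\cdot\mid\Hinf)$. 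A Borel-Cantelli argument applied to $\sum_s\psi_a(s)$ then contradicts the finiteness of $N_a$. Once $N_a(t)\to\infty$ for all $a$, we have $\hat\mu_a(t)\to\mu_a$ a.s., and the bounded prior-density-ratio assumption yields the posterior large-deviation estimates $\pi_a^{t}(\theta_a\le x) = \exp(-N_a(t)\,d(\hat\mu_a(t),x) + O(\log N_a(t)))$ for $x\le\hat\mu_a(t)$, and symmetrically for the upper tail.

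\emph{The two regimes.} If $\mu^*<\gamma$, then $\vmu$ lies in the interior of $\Hinf$, so posterior concentration gives $\Pi_{t-1}(\Hinf)\to 1$; the conditioning is then asymptotically vacuous and, more directly, $\psi_{a^*}(t)\to 1$, since $\pi_{a^*}^{t}(\theta_{a^*}<\gamma)\to 1$ (because $\mu^*<\gamma$ and $N_{a^*}(t)\to\infty$) while $\Pi_{t-1}(\exists b\ne a^*:\theta_b\le\theta_{a^*}) \le \sum_{b\ne a^*}\pi_b^{t}(\theta_b\le\mu^*+o(1))\to 0$ (because $\mu_b>\mu^*$ and $N_b(t)\to\infty$). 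Together with the martingale reduction this gives $N_{a^*}(t)/t\to 1$, hence $\bm{N}_t/t\to\w^*(\vmu)$, when the minimiser is unique; under ties the same estimate shows the mass is asymptotically shared by the minimising arms (consistently with Lemma~\ref{lem:lbd}), and one may alternatively invoke the plain Thompson Sampling convergence result (Theorem~\ref{thm:TS.converge.inf}). If instead $\mu^*>\gamma$, put $Z_a(t)=N_a(t)\,d(\hat\mu_a(t),\gamma)$. Now $\Pi_{t-1}(\Hinf)\to 0$, and since every $\mu_a>\gamma$ the posterior estimates give $\Pi_{t-1}(\Hinf)=\exp(-\min_b Z_b(t)+o(t))$ and $\Pi_{t-1}(a^*(\vtheta)=a,\ \theta_a<\gamma)=\exp(-Z_a(t)+o(t))$, so that
\[
  \psi_a(t) ~=~ \exp\bigl(-(Z_a(t)-{\min}_b Z_b(t))+o(t)\bigr).
\]
Thus MS behaves like a soft version of the leader rule $A_t\approx\arg\min_a Z_a(t)$, and I would show that this balances the $Z_a$'s, i.e.\ $\max_a Z_a(t)-\min_a Z_a(t)=o(t)$ a.s.: whenever $Z_a(t)-\min_b Z_b(t)\ge\varepsilon t$ the arm $a$ is played with probability at most $e^{-\varepsilon t/2}$ (summable), so by Borel-Cantelli it is eventually not played and $Z_a$ stalls while the leader catches up --- which one makes rigorous via a potential argument on $\max_a Z_a(t)$, which strictly decreases after normalising by $t$ except when its argmax is played, an event of vanishing probability once the spread is $\Omega(t)$. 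Since MS is run without stopping, $\sum_a N_a(t)=t$; combined with $d(\hat\mu_a(t),\gamma)\to d(\mu_a,\gamma)$ and with $Z_a(t)/t$ converging to a common limit $c=\bigl(\sum_b 1/d(\mu_b,\gamma)\bigr)^{-1}$, this forces $N_a(t)/t\to c/d(\mu_a,\gamma)=w^*_a(\vmu)$, i.e.\ $\bm{N}_t/t\to\w^*(\vmu)$.

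\emph{Main obstacle.} The technical weight lies in the sufficient-exploration step (ruling out starvation when several arms are frozen at once) and, above all, in the balancing claim $\max_a Z_a(t)-\min_a Z_a(t)=o(t)$ for $\vmu\in\Hsup$: turning the ``soft leader'' description of $\psi_a(t)$ --- valid only up to errors coming from the $O(\log N_a)$ posterior-normalisation terms and the fluctuations of $\hat\mu_a(t)$ around $\mu_a$ --- into an honest a.s.\ statement about $\bm{N}_t/t$. The ODE heuristic $\dot x_a \propto \ind_{(a = \arg\min_b d(\mu_b,\gamma)\,x_b)} - x_a$, whose unique globally attracting equilibrium on the simplex is $\w^*(\vmu)$ (indeed $d(\mu_a,\gamma)\,w^*_a(\vmu)$ is constant in $a$), must be replaced by a genuine potential/concentration argument robust to these lower-order terms.
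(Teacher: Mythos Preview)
Your overall architecture---reduce to the sampling probabilities $\psi_a(t)$, establish $N_a(t)\to\infty$ by a starvation contradiction, then split into the $\Hinf$ and $\Hsup$ cases---matches the paper exactly, and your treatment of $\Hinf$ is essentially the paper's Theorem~\ref{thm:TS.converge.inf}. The genuine divergence is in the $\Hsup$ analysis. The paper does not work with $Z_a(t)=N_a(t)\,d(\hat\mu_a(t),\gamma)$ at all; instead it stays entirely in terms of the cumulative sampling probabilities $\bar\psi_a(n)=\frac{1}{n}\sum_{s\le n}\psi_a(s)$, invoking Russo's posterior-concentration rate (Proposition~\ref{prop:post.conc.rate}) to get $\Pi_n(\theta_a<\gamma)\doteq\exp(-n\,\bar\psi_a(n)\,d(\mu_a,\gamma))$, and then shows the purely one-sided statement that whenever $\bar\psi_a(n)\ge w^*_a+c$ one has $\psi_a(n)\le e^{-n(c\,d(\mu_a,\gamma)-o(1))}$, hence $\sum_n\psi_a(n)\ind\{\bar\psi_a(n)\ge w^*_a+c\}<\infty$; Russo's Lemma~11 (Proposition~\ref{prop:finitesum}) then converts this one-sided overshoot bound directly into $\bar\psi(n)\to\w^*$, the simplex constraint doing the rest. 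This buys exactly what you flag as your main obstacle: there is no need for a two-sided balancing/potential argument on $\max_a Z_a-\min_a Z_a$, no need to track $\hat\mu_a(t)$ in the exponent, and the $o(t)$ error terms are absorbed once into the $\doteq$ of Proposition~\ref{prop:post.conc.rate} rather than carried through a dynamical-systems argument. Your route is sound and arguably more self-contained (you do not import Russo's black boxes), but it is heavier precisely where the paper's is light; if you want to complete your version, the cleanest fix is to replace the balancing claim by the paper's one-sided overshoot trick, which your soft-leader estimate already contains.
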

We give a sketch of the proof below, the detailed argument can be found in Appendix~\ref{sec:pf.TS.convergence}, Theorems~\ref{thm:TS.converge.inf} and~\ref{thm:TS.converge.sup}. Given the convergence of the weights, the asymptotic optimality in terms of sample complexity follows by Lemma~\ref{lem:asym.smp.cplx}, if MS is used with an appropriate stopping rule (Box \eqref{eq:tbox} or the improved Aggregate stopping rule discussed in Section~\ref{sec:Stopping}).

\paragraph{Proof Sketch}
First, consider $\vmu \in \mathcal H_<$. In this case the conditioning in MS is asymptotically immaterial as $\Pi_n(\mathcal H_<) \to 1$, and the algorithm behaves like regular Thompson Sampling. As Thompson sampling has sublinear pseudo-regret \cite{AGCOLT12}, we must have $\ex[N_1(t)]/t \to 1$. The crux of the proof in the appendix is to show the convergence occurs almost surely.

Next, consider $\vmu \in \mathcal H_>$. Following \cite{Russo16}, we denote the sampling probabilities in round $n$ by $\psi_a(n)
= \Pi_{n-1} \delc*{a = \arg\min_j \theta_j}{\mathcal H_<}
$, and abbreviate $\Psi_a(n) = \sum_{t=1}^n \psi_a(t)$ and $\bar\psi_a(n) = \Psi_a(n)/n$.
The main intuition is provided by
\begin{proposition}[{\cite[Proposition~4]{Russo16}}]\label{prop:post.conc.rate}
  For any open subset $\tilde\Theta \subseteq \Theta$, the posterior concentrates at rate 
$\Pi_n(\tilde\Theta)
  \doteq
  \exp \del*{- n \min_{\vlambda \in \tilde\Theta} \sum_a \bar\psi_a(n) d(\mu_a, \lambda_a)}
  $
  a.s.\
  where
  $a_n \doteq b_n$ means $\frac{1}{n} \ln \frac{a_n}{b_n} \to 0$.
\end{proposition}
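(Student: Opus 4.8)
The plan is to treat this as a Laplace/Schwartz-type posterior concentration estimate: combine the explicit form of the posterior with crude large-deviation bounds on the resulting integrals. By conjugacy the per-arm posteriors multiply, and in an exponential family the likelihood of the $N_a(n)$ samples from arm $a$ with empirical mean $\hat\mu_a(n)$ equals, up to a factor free of $\lambda_a$, $e^{-N_a(n)\,d(\hat\mu_a(n),\lambda_a)}$ --- this is the Legendre identity $\mu\,\dot b_*(\theta)-b\bigl(\dot b_*(\theta)\bigr)=b_*(\mu)-d(\mu,\theta)$ taken at $\mu=\hat\mu_a(n)$. Absorbing the (bounded, bounded-away-from-zero) Jacobian of the mean/natural reparametrisation into the prior, which keeps its density ratio bounded, I would start from
\[
  \Pi_n(\tilde\Theta)
  ~=~
  \frac{\int_{\tilde\Theta}\pi(\lambda)\,e^{-\sum_a N_a(n)\,d(\hat\mu_a(n),\lambda_a)}\,\dif\lambda}{\int_{\Theta}\pi(\lambda)\,e^{-\sum_a N_a(n)\,d(\hat\mu_a(n),\lambda_a)}\,\dif\lambda}
  .
\]

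First I would replace $(N_a(n),\hat\mu_a(n))$ by $(\Psi_a(n),\mu_a)$ in the exponent. Writing $h_n(\lambda)=\sum_a\bar\psi_a(n)\,d(\mu_a,\lambda_a)$, the goal is $\sup_{\lambda\in\Theta}\bigl|\tfrac1n\sum_aN_a(n)\,d(\hat\mu_a(n),\lambda_a)-h_n(\lambda)\bigr|\to0$ a.s. This rests on two almost sure limits: (i) $N_a(n)-\Psi_a(n)=\sum_{t\le n}(\ind_{(A_t=a)}-\psi_a(t))$ is a martingale with increments bounded by $1$, so $\tfrac1n(N_a(n)-\Psi_a(n))\to0$, while $d$ is bounded on $\Theta\times\Theta$; (ii) on $\{N_a(n)\to\infty\}$ the SLLN along the i.i.d.\ observation tape of arm $a$ gives $\hat\mu_a(n)\to\mu_a$, hence $\sup_\lambda|d(\hat\mu_a(n),\lambda)-d(\mu_a,\lambda)|\to0$ by uniform continuity of $d$ on the bounded set $\Theta$, whereas on the complement $N_a(n)/n\to0$. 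Summing over $a$ and using $\sum_aN_a(n)=n$ turns the exponent into $n\,h_n(\lambda)+o(n)$, uniformly in $\lambda$.

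Next comes the Laplace estimate, applied both to the numerator ($U=\tilde\Theta$) and the denominator ($U=\Theta$). Pulling the supremum out of the integral and using $\int_U\pi\le1$ gives the upper bound $\int_U\pi(\lambda)\,e^{-\sum_aN_a(n)\,d(\hat\mu_a(n),\lambda_a)}\,\dif\lambda\le e^{-n\inf_U h_n+o(n)}$. For the matching lower bound, fix $\eta>0$; since $h_n$ is an average of the maps $d(\mu_a,\cdot)$ it is bounded and Lipschitz with an $n$-free constant, so --- using that $U$ is open, with a small $\rho$-interior regularisation to handle a possibly boundary-attained infimum --- there is $\lambda^\circ\in U$ with $h_n(\lambda^\circ)\le\inf_Uh_n+\eta$ around which a ball $B\subseteq U$ of $n$-free radius has $h_n\le\inf_Uh_n+2\eta$; since moreover $\pi$ is sandwiched between two positive constants (bounded ratio together with $\mathrm{Vol}(\Theta)<\infty$), restricting the integral to $B$ makes it $\ge c\,\mathrm{Vol}(B)\,e^{-n(\inf_Uh_n+2\eta)+o(n)}$. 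Taking $\tfrac1n\ln$, then $n\to\infty$, then $\eta\to0$ makes the $n$-free quantities vanish, so each integral is $\doteq\exp(-n\inf_Uh_n)$, and continuity identifies $\inf_Uh_n$ with $\min_{\lambda\in\bar U}h_n$.

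Finally, since $h_n(\vmu)=\sum_a\bar\psi_a(n)\,d(\mu_a,\mu_a)=0$, $h_n\ge0$ and $\vmu\in\Theta$, the denominator's exponent $\min_\Theta h_n$ is $0$, so the denominator is $\doteq1$; dividing gives $\Pi_n(\tilde\Theta)\doteq\exp\bigl(-n\min_{\lambda\in\tilde\Theta}\sum_a\bar\psi_a(n)\,d(\mu_a,\lambda_a)\bigr)$. The hard part is the Laplace lower bound: the estimate must be uniform even though the exponent $h_n$ is random and $n$-dependent and its minimiser over the \emph{open} set $\tilde\Theta$ may sit on the boundary --- this is precisely where boundedness of $\Theta$ pays off, making $h_n$ uniformly Lipschitz and bounded and $\pi$ two-sided bounded, so that the radius of $B$, $\mathrm{Vol}(B)$ and the $\pi$-constants are all free of $n$ and disappear after dividing by $n$. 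This reproduces \cite[Proposition~4]{Russo16}, here for the $\Hinf$-conditioned posterior.
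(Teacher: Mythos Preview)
The paper does not prove this proposition; it is quoted from \cite[Proposition~4]{Russo16} and used as a black box in the analysis of Murphy Sampling. Your write-up is thus a self-contained reconstruction of Russo's argument rather than a comparison, and it follows the natural route: write the posterior as a ratio of Laplace-type integrals via the exponential-family identity, replace $(N_a(n),\hat\mu_a(n))$ by $(\Psi_a(n),\mu_a)$ in the exponent uniformly in $\lambda$ using the martingale SLLN for $N_a-\Psi_a$ and the ordinary SLLN for $\hat\mu_a$, then apply Laplace asymptotics with the boundedness of $\Theta$ and the two-sided boundedness of $\pi$. This is correct in outline and is essentially what the cited reference does.

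Two small points. First, your closing sentence is slightly off: $\Pi_n$ here is the \emph{unconditioned} posterior; the conditioning on $\Hinf$ enters only through the sampling probabilities $\psi_a$, hence only through the exponent $h_n$. Second, the phrase ``a ball $B\subseteq U$ of $n$-free radius'' hides the one real subtlety: because the near-minimiser $\lambda^\circ$ of $h_n$ over the open set $\tilde\Theta$ depends on $n$ and may drift toward $\partial\tilde\Theta$, you cannot in general fix the radius first and then take $n\to\infty$. The clean closure is to use that $\{h_n\}$ is equi-Lipschitz and bounded, extract a uniformly convergent subsequence by Arzel\`a--Ascoli, and run the Laplace lower bound for the limit; alternatively, observe that every $\tilde\Theta$ actually used downstream in the paper is a half-space $\{\theta_a<\gamma\}$, for which an $n$-free interior ball is immediate.
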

Let us use this to analyze $\psi_a(n)$. As we are on $\mathcal H_>$, the posterior $\Pi_n(\mathcal H_<) \to 0$ vanishes. Moreover, $\Pi_n \del*{a = \arg\min_j \theta_j, \mathcal H_<} \sim \Pi_n(\theta_a < \gamma)$ as the probability that multiple arms fall below $\gamma$ is negligible. Hence
\[
  \psi_a(n)
  ~\sim~
  \frac{\Pi_n(\mu_a < \gamma)}{\sum_j \Pi_n(\mu_j < \gamma)}
  ~\doteq~
  \frac{\exp \del*{ - n \bar\psi_a(n) d(\mu_a,\gamma)}}{\sum_j \exp \del*{ - n \bar\psi_j(n) d(\mu_j,\gamma)}}
  .
\]
This is an asymptotic recurrence relation in $\psi_a(n)$. To get a good sense for what is happening we may solve the exact analogue. Abbreviating $d_a = d(\mu_a,\gamma)$, we find
\(
  \Psi_a(n)
  =
  \del*{
    n
    - \sum_j \frac{\ln d_j}{d_j}
  }
  \frac{ \frac{1}{d_a}}{
    \sum_j \frac{1}{d_j}
  } + \frac{\ln d_a}{d_a}
\)
and hence
$\psi_a(t) = \Psi_a(t) - \Psi_a(t-1) = \frac{\wfrac{1}{d_a}}{\sum_j \wfrac{1}{d_j}} = w_a^*(\vmu)$.
Proposition~\ref{prop:limrat.N.Psi} then establishes that $N_a(t)/t \sim \Psi_a(t)/t \to w^*_a(\vmu)$ as well.

In our proof in Appendix~\ref{sec:pf.TS.convergence} we technically bypass solving the above approximate recurrence, and proceed to pin down the answer by composing the appropriate one-sided bounds. Yet as we were guided by the above picture of $\w^*(\vmu)$ as the eventually stable direction of the dynamical system governing the sampling proportions, we believe it is more revealing.

\section{Improved Stopping Rule and Confidence Intervals} \label{sec:Stopping}

Theorem~\ref{thm:mainDev} below provides a new self-normalized deviation inequality that given a \emph{subset} of arms controls uniformly over time how the \emph{aggregated mean} of the samples obtained from those arms can deviate from the smallest (resp. largest) mean in the subset. More formally for $\cS \subseteq [K]$, we introduce
\[N_\cS(t) ~=~ \sum_{a \in \cS} N_a(t) \qquad \text{and} \qquad 
  \hat \mu_\cS(t) ~=~ \frac{\sum_{a \in \cS} N_a(t) \hat \mu_a(t)}{N_\cS(t)} 
\]
and recall $d^+(u,v) = d(u,v) \ind_{(u \leq v)}$ and $d^-(u,v) = d(u,v) \ind_{(u \geq v)}$. We prove the following for one-parameter exponential families.

\begin{theorem} \label{thm:mainDev} Let $T : \R^+ \rightarrow \R^+$ be the function defined by 
  \begin{equation}\label{eq:T}
    T(x) ~=~ 2h^{-1}\left(1 + \frac{h^{-1}(1+x) + \ln\zeta(2)}{2}\right)
  \end{equation}
  where  $h(u) = u - \ln(u)$ for $u\geq 1$ and $\zeta(s) = \sum_{n=1}^\infty n^{-s}$. For every subset $\cS$ of arms and $x\geq 0.04$,  
\begin{eqnarray}
\bP\left(\exists t \in \N : N_{\cS}(t) d^+\left(\hat{\mu}_\cS(t) , \min_{a \in \cS} \mu_a\right) \geq 3 \ln(1 + \ln (N_\cS(t))) + T(x)\right) & \leq & e^{-x}, \label{ineq:DevPlus}\\
\bP\left(\exists t \in \N : N_{\cS}(t) d^-\left(\hat{\mu}_\cS(t) , \max_{a \in \cS} \mu_a\right) \geq 3 \ln(1 + \ln (N_\cS(t))) + T(x)\right) & \leq & e^{-x}.\label{ineq:DevMinus}
\end{eqnarray}
\end{theorem}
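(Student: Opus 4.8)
The plan is to reduce both inequalities to a single uniform-in-time deviation bound for a one-parameter family of supermartingales, and then to conclude by the mixture-martingale method combined with a geometric peeling over $N_\cS(t)$ --- the route by which this statement generalizes the Gaussian results it cites. Once the reduction is in place, the remaining argument is essentially the single-arm self-normalized bound, with $N_\cS(t)$ playing the role of a single sample count.

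Write $m=\min_{a\in\cS}\mu_a$, let $\nu_m$ be its natural parameter, and set $g(\lambda)=\ln\ex_m\!\big[e^{\lambda(m-X)}\big]=\lambda m+b(\nu_m-\lambda)-b(\nu_m)$ for $\lambda\ge 0$ (with the convention $b\equiv+\infty$ off its domain). The only place the subset structure really enters is a stochastic domination: for $\lambda\ge0$ and any $a\in\cS$ we have $\nu_a\ge\nu_m$ (since $\dot b$ is increasing), and convexity of $b$ gives
\[
  b(\nu_a)-b(\nu_a-\lambda) ~=~ \int_{\nu_a-\lambda}^{\nu_a}\dot b ~\ge~ \int_{\nu_m-\lambda}^{\nu_m}\dot b ~=~ b(\nu_m)-b(\nu_m-\lambda),
\]
so that $\ex_{\mu_a}\!\big[e^{\lambda(m-X)}\big]\le e^{g(\lambda)}$: for lower deviations the pooled samples from $\cS$ are no more dispersed than i.i.d.\ samples of mean $m$. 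Since $A_t$ is predictable, it follows that for each fixed $\lambda\ge0$ the process
\[
  M_t^\lambda ~=~ \exp\!\Big(\lambda\,N_\cS(t)\,\big(m-\hat\mu_\cS(t)\big)-N_\cS(t)\,g(\lambda)\Big)
\]
is a nonnegative supermartingale with $M_0^\lambda=1$: its multiplicative increment is $1$ on rounds with $A_t\notin\cS$ and has conditional expectation $e^{-g(\lambda)}\ex_{\mu_{A_t}}[e^{\lambda(m-X)}]\le1$ on rounds with $A_t\in\cS$.

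A Legendre-duality computation then turns the event of interest into a statement about this family: using $b_*(m)=\nu_m m-b(\nu_m)$ one checks that $\sup_{\lambda\ge0}\big[\lambda(m-u)-g(\lambda)\big]=d^+(u,m)$ for all $u$, the maximizer being $\lambda^\star=\nu_m-\nu_u\ge0$ exactly when $u\le m$. Hence, pathwise, $N_\cS(t)\,d^+\!\big(\hat\mu_\cS(t),m\big)=\sup_{\lambda\ge0}\ln M_t^\lambda$, so \eqref{ineq:DevPlus} is precisely the claim that with probability $\ge1-e^{-x}$ one has $\sup_{\lambda\ge0}\ln M_t^\lambda<3\ln(1+\ln N_\cS(t))+T(x)$ simultaneously for all $t$ --- a uniform-in-$t$-and-$\lambda$ control of an exponential supermartingale family, the canonical input of the mixture-martingale method.

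To prove the latter I would mix $M_t^\lambda$ over $\lambda$ against a well-chosen density $\pi$ on $(0,\infty)$, so that $\overline M_t=\int_0^\infty M_t^\lambda\,\pi(\lambda)\,\dif\lambda$ is again a supermartingale and Ville's inequality yields $\pr(\exists t:\overline M_t\ge1/\delta)\le\delta$; then lower-bound $\overline M_t$ by a quantity of the form $c(N_\cS(t),\hat\lambda_t)\,\exp\!\big(N_\cS(t)\,d^+(\hat\mu_\cS(t),m)\big)$, with $c$ only polynomially small in $N_\cS(t)$ and in the (interior, when $d^+>0$) maximizer $\hat\lambda_t$. Peeling the trajectory dyadically by the scale of $N_\cS(t)$ and splitting $\delta$ over the pieces with weights $\propto n^{-2}$ then gives the bound; this accounts for $\ln\zeta(2)$, for the $3\ln(1+\ln N_\cS(t))$ overhead, for the nested $h^{-1}$ in $T$ (the threshold is read off by twice inverting inequalities of the shape $y\le a+c\ln(1+y)$, whose solutions are governed by the inverse of $h(u)=u-\ln u$), and for the range restriction $x\ge0.04$ (the regime in which the elementary estimates on $h^{-1}$ used along the way stay valid). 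Finally \eqref{ineq:DevMinus} is the mirror image: take $M=\max_{a\in\cS}\mu_a$, use $\lambda\le0$, and invoke $b(\nu_a+\lambda)-b(\nu_a)\le b(\nu_M+\lambda)-b(\nu_M)$ for $\nu_a\le\nu_M$ --- equivalently, apply \eqref{ineq:DevPlus} to the samples $(-X_t)$, again a one-parameter exponential family but with $\min$ and $\max$ interchanged. The step I expect to be the real obstacle is exactly this last one: the reduction of the previous paragraphs is short, but landing on the \emph{precise} threshold $3\ln(1+\ln N_\cS(t))+T(x)$ (rather than one of merely the right order) requires a careful choice of $\pi$ and of the peeling scale, sharp control of how $\overline M_t$ compares with $\sup_\lambda M_t^\lambda$, and attention to keeping the shifted parameters $\nu_m-\lambda$ in the domain of $b$ --- handled by the boundedness assumptions on $\Theta$, or absorbed by the $+\infty$ convention.
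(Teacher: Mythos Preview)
Your reduction is correct and is the same key observation the paper makes: your displayed convexity inequality is exactly the paper's Lemma on the monotonicity of $\mu\mapsto\phi_\mu(\lambda)$, and once you have it the subset problem collapses to a single-arm self-normalized bound with $N_\cS(t)$ as the sample count. Your sign convention ($\lambda\ge0$ in $e^{\lambda(m-X)}$) corresponds to the paper's $\lambda<0$ in $e^{\lambda X}$, and your supermartingale formulation is equivalent for Ville/Doob to the paper's choice of keeping the \emph{true} martingale $W_t^\lambda=\exp\big(\sum_{a\in\cS}[\lambda S_a(t)-N_a(t)\phi_{\mu_a}(\lambda)]\big)$ and invoking the comparison only at the slice level --- so this difference is cosmetic.

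Where you genuinely diverge is the assembly. You propose to mix $M_t^\lambda$ over $\lambda$, Laplace-bound the integral, and then peel dyadically. The paper reverses the order and adds a trick you do not have: it first peels geometrically with ratio $1+\eta$, on each slice $i$ picks a specific $\lambda_i^\pm(x)$ so that $\{N_\cS d^\pm\ge x\}\subseteq\{W_t^{\lambda_i^\pm}\ge e^{x/(1+\eta)}\}$, mixes over slices with weights $\gamma_i=1/(\zeta(c)i^c)$ to obtain a martingale $\tilde W_t^x$ indexed by the \emph{threshold} $x$, and then performs a \emph{second} mixture $\overline M_t^\lambda=1+\int_1^\infty e^{-\ln(z)/(\lambda(1+\eta))}\tilde W_t^{(\ln z)/\lambda}\,dz$ to produce a martingale satisfying $M_t^\lambda\ge e^{\lambda X_{\eta,c}(t)-g_{\eta,c}(\lambda)}$. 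Chernoff--Doob plus an explicit Fenchel conjugate then gives the bound in terms of $h$; finally the peeling ratio $\eta$ is \emph{optimized} ($\eta=c/(u-c)$) and $c=2$ is chosen, and it is precisely this double mixture together with the optimization over $\eta$ that produces the nested $h^{-1}$ and the exact $T(x)$ (and the harmless constraint $x\ge0.04$, which just ensures $T(x)\ge6$ so the choices of $\eta,c$ are admissible). Your route would yield a bound of the right order, but the Laplace step costs a curvature factor you would need to control uniformly over the exponential family, and with the peeling ratio fixed at $2$ you lose the degree of freedom the paper exploits --- so your own stated worry about landing on the \emph{precise} $T(x)$ is exactly where the paper's two extra ideas are doing work your outline does not yet contain.
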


The proof of this theorem can be found in Section~\ref{sec:ProofDev} and is sketched below. It generalizes in several directions the type of results obtained by \cite{Jamiesonal14LILUCB,JMLR15} for Gaussian distributions and  $|\cS|=1$. Going beyond subsets of size 1 will be crucial here to obtain better confidence intervals on minimums, or stop earlier in tests. Note that the threshold function $T$ introduced in \eqref{eq:T} does not depend on the cardinality of the subset $\cS$ to which the deviation inequality is applied. Tight upper bounds on $T$ can be given using Lemma~\ref{lem:UpperBoundT} in Appendix~\ref{sec:TechnicalResults}, which support the approximation $T(x) \simeq x + 3\ln(x)$.

\subsection{An Improved Stopping Rule}

Fix a subset prior $\pi : \powerset(\{1,\dots,K\}) \rightarrow \R^+$ such that  $\sum_{\cS \subseteq \{1,\dots,K\}} \pi(\cS) = 1$ and let $T$ be the threshold function defined in Theorem~\ref{thm:mainDev}. We define the stopping rule $\tau^\pi := \tausup \wedge \tauinf^\pi$, where
\begin{eqnarray*}
 \tausup& =& \inf \left\{t \in \N^* : \forall a \in \{1,\dots,K\} N_a(t)d^-\left(\hat{\mu}_a(t),\gamma\right) \geq 3 \ln(1 + \ln (N_a(t))) + T\left(\ln(1/\delta)\right) \right\},\label{TauInf}\\
 \tauinf^\pi& =& \inf \left\{t \in \N^* : \exists \cS :  N_\cS(t)d^+\left(\hat{\mu}_\cS(t),\gamma\right) \geq 3 \ln(1 + \ln (N_\cS(t))) + T\left(\ln(1/(\delta\pi(\cS))\right) \right\}. \label{TauSup}
\end{eqnarray*}
The associated recommendation rule selects $\Hsup$ if $\tau^\pi = \tausup$ and $\Hinf$ if $\tau^\pi = \tauinf^\pi$. For the practical computation of $\tauinf^\pi$, the search over subsets can be reduced to nested subsets including arms sorted by increasing empirical mean and smaller than $\gamma$. 

\begin{lemma} \label{lem:PAC} Any algorithm using the stopping rule $\tau^\pi$ and selecting $\hat m = \ >$ iff $\tau^\pi = \tausup$, is $\delta$-correct.
\end{lemma}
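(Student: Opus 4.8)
The plan is to bound the probability of an incorrect recommendation separately on $\Hinf$ and on $\Hsup$, in each case reducing it to Theorem~\ref{thm:mainDev}. Since the recommendation is $\hat m = {>}$ precisely when $\tau^\pi = \tausup$, on $\Hinf$ (where the correct answer is $<$) the event that the algorithm stops and recommends $>$ is contained in $\{\tausup < \infty\}$, and symmetrically on $\Hsup$ the event that it stops and recommends $<$ is contained in $\{\tauinf^\pi < \infty\}$. Hence it suffices to prove $\pr_\vmu(\tausup < \infty) \le \delta$ for every $\vmu \in \Hinf$ and $\pr_\vmu(\tauinf^\pi < \infty) \le \delta$ for every $\vmu \in \Hsup$.

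The only structural fact I would use is that, for fixed $u$, the map $v \mapsto d^-(u,v)$ is nonincreasing and $v \mapsto d^+(u,v)$ is nondecreasing on $\cI$: each one agrees with the convex nonnegative function $d(u,\cdot)$ on one side of $u$ and with $0$ on the other, and is continuous at $u$. On $\Hinf$, let $a^*$ be an arm with $\mu_{a^*} = \mu^* < \gamma$. If $\tausup$ fires at some time $t$, then in particular $N_{a^*}(t)\, d^-(\hat\mu_{a^*}(t),\gamma) \ge 3\ln(1+\ln N_{a^*}(t)) + T(\ln(1/\delta))$; since $\gamma \ge \mu^* = \max_{a\in\{a^*\}}\mu_a$, monotonicity gives $d^-(\hat\mu_{a^*}(t),\mu^*) \ge d^-(\hat\mu_{a^*}(t),\gamma)$, so $t$ witnesses the event of \eqref{ineq:DevMinus} applied to the singleton $\cS = \{a^*\}$ with $x = \ln(1/\delta)$. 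Theorem~\ref{thm:mainDev} then yields $\pr_\vmu(\tausup < \infty) \le e^{-\ln(1/\delta)} = \delta$. On $\Hsup$, for any subset $\cS \subseteq \{1,\dots,K\}$ we have $\min_{a\in\cS}\mu_a \ge \mu^* > \gamma$, so by monotonicity $d^+(\hat\mu_\cS(t),\min_{a\in\cS}\mu_a) \ge d^+(\hat\mu_\cS(t),\gamma)$ for all $t$; hence the event ``$\tauinf^\pi$ fires through subset $\cS$ at some time'' is contained in the event of \eqref{ineq:DevPlus} for that $\cS$ with $x = \ln(1/(\delta\pi(\cS)))$, of probability at most $\delta\pi(\cS)$. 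A union bound over all subsets then gives $\pr_\vmu(\tauinf^\pi < \infty) \le \sum_{\cS}\delta\pi(\cS) = \delta$. (All values of $x$ invoked are $\ge \ln(1/\delta)$ because $\pi(\cS) \le 1$, so the hypothesis $x \ge 0.04$ of Theorem~\ref{thm:mainDev} holds whenever $\delta \le e^{-0.04}$, which we may assume.)

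There is essentially no hard step: the content is (i) noticing that at a ``false'' stopping time the aggregated empirical mean of the relevant subset has deviated not merely past $\gamma$ but past the true smallest (resp. largest) mean of that subset --- which is exactly the quantity controlled by Theorem~\ref{thm:mainDev} --- and (ii) bookkeeping the subset prior $\pi$ so that the union bound over the exponentially many subsets still sums to $\delta$. The only subtlety worth flagging is that $\tauinf^\pi$ charges the term $T(\ln(1/(\delta\pi(\cS))))$ to the subset that triggers stopping, so the union bound must run over \emph{all} subsets; the reduction to nested subsets mentioned after the definition is a purely computational convenience and plays no role in the correctness argument.
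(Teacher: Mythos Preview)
Your proof is correct and follows essentially the same approach as the paper's: on $\Hinf$ you reduce to the singleton $\{a^*\}$ and invoke \eqref{ineq:DevMinus}, and on $\Hsup$ you use the monotonicity of $d^+$ in its second argument together with a union bound over subsets to reduce to \eqref{ineq:DevPlus}. The paper's proof is virtually identical, only differing in that it does not explicitly discuss the $x \ge 0.04$ hypothesis or the computational remark about nested subsets.
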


From Lemma~\ref{lem:PAC}, proved in Appendix~\ref{proof:PAC}, the prior $\pi$ 
doesn't impact the correctness of the algorithm. However it may impact its sample complexity significantly. First it can be observed that picking $\pi$ that is uniform over subset of size 1, i.e. $\pi(\cS) = K^{-1} \ind{(|\cS|=1)}$, one obtain a $\delta$-correct $\tau_{\BoxStop}$ stopping rule with thresholds functions satisfying the assumptions of Lemma~\ref{lem:asym.smp.cplx}. However, in practice (especially more moderate $\delta$), it may be more interesting to include in the support of $\pi$ subsets of larger sizes, for which $N_\cS(t)d^+\left(\hat{\mu}_\cS(t),\gamma\right)$ may be larger. 
We advocate the use of $\pi(\cS) = {K}^{-1} {{{{K}\choose{|\cS|}}}}^{-1}$, that puts the same weight on the set of subsets of each possible size.

\paragraph{Links with Generalized Likelihood Ratio Tests (GLRT).} Assume we want to test $\cH_0$ against $\cH_1$ for composite hypotheses. A GLRT test based on $t$ observations whose distribution depends on some parameter $x$ rejects $\cH_0$ if the test statistic $\max_{x \in \cH_1} \ell(X_1,\dots,X_t;x) / \max_{x \in \cH_0\cup \cH_1} \ell(X_1,\dots,X_t;x)$ has large values (where $\ell(\cdot;x)$ denotes the likelihood of the observations under the model parameterized by $x$). In our testing problem, the GLRT statistic for rejecting $\Hinf$ is $\min_{a} N_a(t) d^-(\hat{\mu}_a(t),\gamma)$ hence $\tausup$ is very close to a sequential GLRT test. However, the GLRT statistic for rejecting $\Hsup$ is $\sum_{a = 1}^K N_a(t)d^+(\hat{\mu}_a(t), \gamma)$, which is quite different from the stopping statistic used by $\tauinf^\pi$. Rather than \emph{aggregating samples} from arms, the GLRT statistic is \emph{summing evidence} for exceeding the threshold. Using similar martingale techniques as for proving Theorem~\ref{thm:mainDev}, one can show that replacing $\tauinf^\pi$ by
\[
 \tauinf^{\text{GLRT}} = \inf \left\{t \in \N^* : \!\!\! \sum_{a : \hat{\mu}_a(t) \leq \gamma} \left[N_a(t)d^+\left(\hat{\mu}_a(t),\gamma\right) - 3 \ln(1 + \ln (N_a(t)))\right]^+ \geq KT\left(\frac{\ln(1/\delta)}{K}\right) \right\} 
\]
also yields a $\delta$-correct algorithm (see \cite{OurFuturePaper}). At first sight, $\tauinf^\pi$ and $\tauinf^{\text{GLRT}}$ are hard to compare: the stopping statistic used by the latter can be larger than that used by the former, but it is compared to a smaller threshold. In Section~\ref{sec:Expes} we will provide empirical evidence in favor of aggregating samples.    

\subsection{A Confidence Intervals Interpretation}\label{sec:ConfidenceInterpretation}

Inequality \eqref{ineq:DevPlus} (and a union bound over subsets) also permits to build a tight upper confidence bound on the minimum $\mu^*$. Indeed, defining 
\[\UCB_{\min}^\pi(t) : = \max \left\{q : \max_{\cS \subseteq \{1,\dots,K\}} \left[N_{\cS}(t)d^+\left(\hat\mu_{\cS}(t) , q\right) - 3\ln(1+\ln(1+N_\cS(t)))\right] \leq T\left(\ln \frac{1}{\delta\pi(\cS)}\right)\right\},\]
it is easy to show that $\bP\left(\forall t\in \N, \mu^* \leq \UCB_{\min}^\pi(t)\right)\geq 1 - \delta$. For general choices of $\pi$, this upper confidence bound may be much smaller than the naive bound $\min_{a} \UCB_a(t)$ which corresponds to choosing $\pi$ uniform over subset of size 1. We provide an illustration supporting this claim in Figure~\ref{fig:UCBmin} in Appendix~\ref{sec:MoreExpes}. Observe that using inequality \eqref{ineq:DevMinus} in Theorem~\ref{thm:mainDev} similarly allows to derive tighter lower confidence bounds on the maximum of several means. 

\subsection{Sketch of the Proof of Theorem~\ref{thm:mainDev}}  

Fix $\eta \in [0,1+e[$. Introducing $X_{\eta}(t)  = \left[N_\cS(t)d^+\left(\hat{\mu}_{\cS}(t),\min_{a\in \cS} \mu_a\right) - 2(1+\eta)\ln\left(1 + \ln N_\cS(t)\right)\right]$, 
the cornerstone of the proof (Lemma~\ref{lem:Heart}) consists in proving that for all $\lambda \in [0 , (1+\eta)^{-1}[$, there exists a martingale $M_t^\lambda$ that ``almost'' upper bounds $e^{\lambda X_{\eta}(t)}$: there exists a function $g_\eta$ such that 
\begin{equation}\bE[M_0^\lambda] = 1 \ \ \ \text{and} \ \ \ \forall t \in \N^*, M_{t}^\lambda \geq e^{\lambda X_{\eta}(t)- g_{\eta}(\lambda)}.\label{StarMain}\end{equation}
From there, the proof easily follows from a combination of Chernoff method and Doob inequality:
\begin{eqnarray*}
\bP\left(\exists t \in \N^* : X_{\eta}(t) > u \right) & \leq & \bP\left(\exists t \in \N^* : M_t^\lambda > e^{\lambda u - g_{\eta}(\lambda)} \right) \leq  \exp\left(- \left[\lambda u  - g_{\eta}(\lambda)\right]\right).
\end{eqnarray*}
Inequality \eqref{ineq:DevPlus} is then obtained by optimizing over $\lambda$, carefully picking $\eta$ and inverting the bound. 

The interesting part of the proof is to actually build a martingale satisfying \eqref{StarMain}.  First, using the so-called method of mixtures \cite{DeLaPenaal09Book} and some specific fact about exponential families already exploited by \cite{KLUCBJournal}, we can prove that there exists a martingale $\tilde{W}_t^x$ such that for some function $f$ (see Equation~\eqref{eq:Central})
\[\left\{ X_{\eta}(t) - f(\eta) \geq x \right\} \subseteq \left\{ \tilde{W}_t^x \geq e^{\frac{x}{1+\eta}}\right\}.\]
From there it follows that, for every $\lambda$ and $z>1$, $\left\{ e^{\lambda(X_{\eta}(t) - f(\eta))} \geq z \right\} \subseteq \{ e^{-\frac{\ln(z)}{\lambda(1+\eta)}}\tilde{W}_t^{\frac{1}{\lambda}\ln(z)} \geq 1\}$ and the trick is to introduce another mixture martingale, 
\[\overline{M}_t^\lambda = 1 + \int_{1}^\infty e^{-\frac{\ln(z)}{\lambda(1+\eta)}}\tilde{W}_t^{\frac{1}{\lambda}\ln(z)} dz,\]
that is proved to satisfy $\overline{M}_t^\lambda \geq e^{\lambda\left[X_{\eta}(t) - f(\eta)\right]}$. We let $M_t^\lambda = \overline{M}_t^\lambda/ \bE[\overline{M}_t^\lambda]$.

\section{Experiments} \label{sec:Expes}

We discuss the results of numerical experiments performed on Gaussian bandits with variance 1, using the threshold $\gamma=0$. Thompson and Murphy sampling are run using a flat (improper) prior on $\mathbb R$, which leads to a conjugate Gaussian posterior. The experiments demonstrate the flexibility of our MS sampling rule, which attains optimal performance on instances from both $\Hinf$ and $\Hsup$. Moreover, they show the advantage of using a stopping rule aggregating samples from subsets of arms when $\bm\mu \in \Hinf$. This aggregating stopping rule, that we refer to as $\tau^{\text{Agg}}$ is an instance of the $\tau^\pi$ stopping rule presented in Section~\ref{sec:Stopping} for $\pi(\cS) = {K}^{-1} {{{{K}\choose{|\cS|}}}}^{-1}$. We investigate the combined use of three sampling rules, MS, LCB and Thompson Sampling with three stopping rules, $\tau^{\text{Agg}}$, $\tau^{\text{Box}}$ and $\tau^{\text{GLRT}}$.

We first study an instance $\bm \mu \in \Hinf$ with $K=10$ arms that are linearly spaced between $-1$ and $1$. We run the different algorithms (excluding the TS sampling rule, that essentially coincides with MS on $\Hinf$) for different values of $\delta$ and report the estimated sample complexity in Figure~\ref{fig:SampleComplexity} (left). For each sampling rule, it appears that $\bE[\tau^{\text{Agg}}] \leq \bE[\tau^{\text{Box}}] \leq \bE[\tau^{\text{GLRT}}]$. Moreover, for each stopping rule MS is outperforming LCB, with a sample complexity of order $T^*(\bm\mu)\ln(1/\delta) + C$. Then we study an instance $\bm \mu \in \Hsup$ with $K=5$ arms that are linearly spaced between $0.5$ and $1$, with $\tau^{\text{Agg}}$ as the sampling rule (which matters little as the algorithm mostly stops because of $\tausup$ on $\Hsup$). Results are reported in Figure~\ref{fig:SampleComplexity} (right), in which we see that MS is performing very similarly to LCB (that is also proved optimal on $\Hsup$), while vanilla TS fails dramatically. On those experiments, the empirical error was always zero, which shows that our theoretical thresholds are still quite conservative. More experimental results can be found in Appendix~\ref{sec:MoreExpes}: an illustration of the convergence properties of the MS sampling rule as well as a larger-scale comparison of stopping rules under $\Hinf$. 

\begin{figure}[h]
\centering
\begin{minipage}{0.45\textwidth}
\includegraphics[height=5cm]{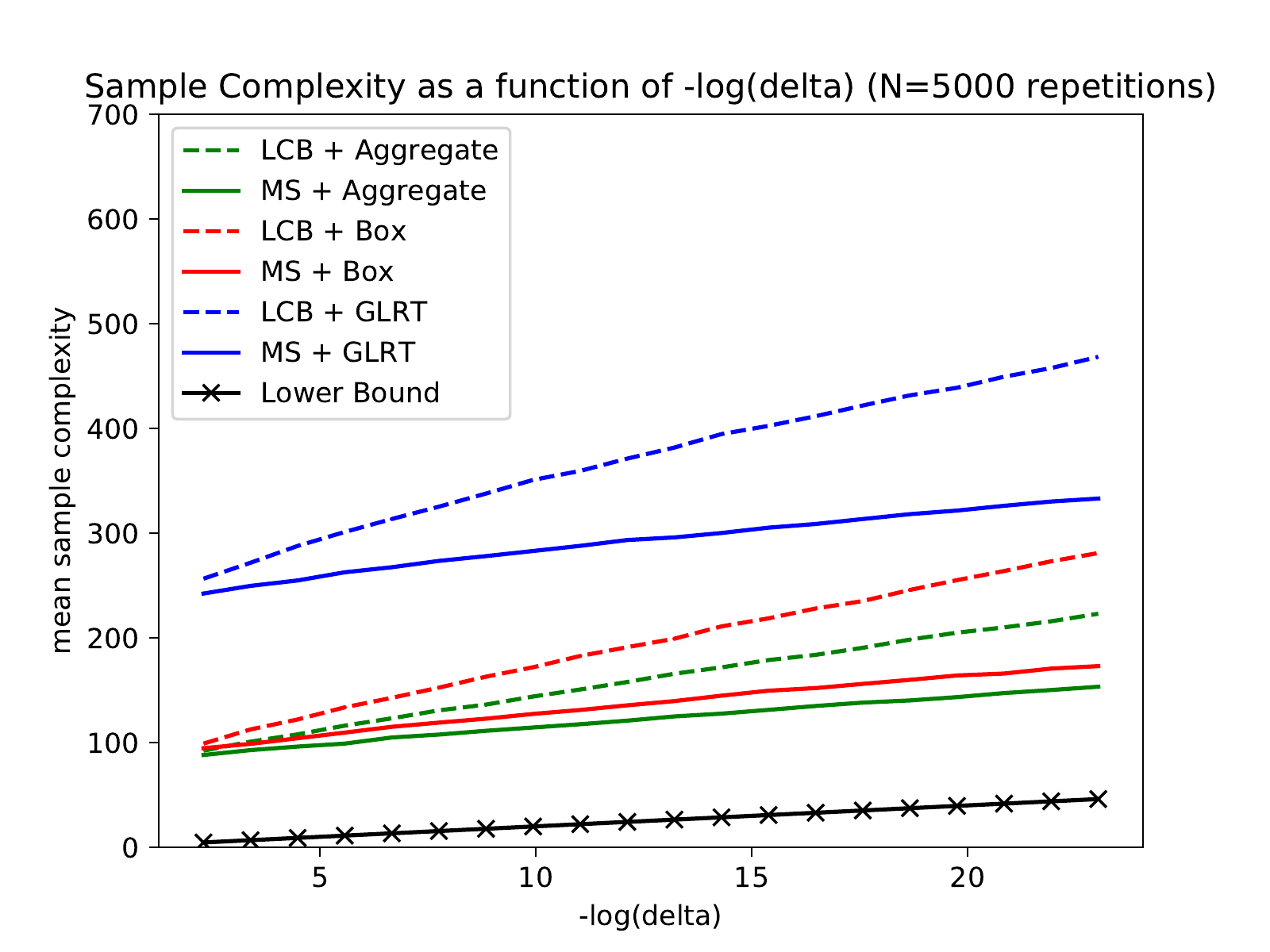}
\end{minipage}
\begin{minipage}{0.45\textwidth}
\includegraphics[height=5cm]{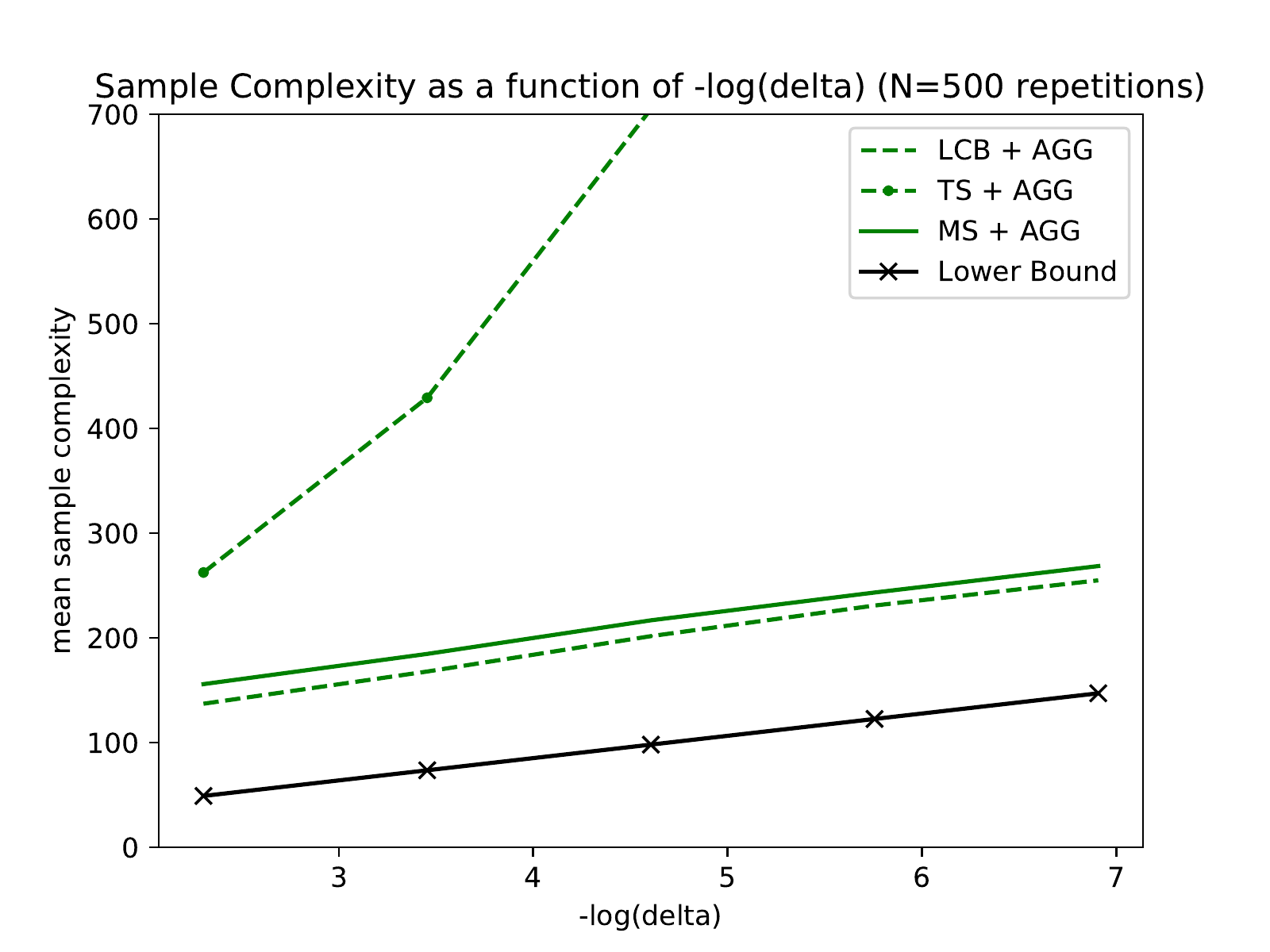}
\end{minipage}
\caption{\label{fig:SampleComplexity} $\bE[\tau_{\delta}]$ as a function of $\ln(1/\delta)$ for several algorithms on an instance $\mu \in \Hinf$ (left) and $\mu \in \Hsup$ (right), estimated using $N=5000$ (resp. 500) repetitions.}
\end{figure}

\vspace{-0.4cm}

\section{Discussion}
We propose new sampling and stopping rules for sequentially testing the minimum of means. As our guiding principle, we first prove sample complexity lower bounds, characterized the emerging oracle sample allocation $\w^*$, and develop the Murphy Sampling strategy to match it asymptotically. We observe in the experiments that the asymptotic regime does not necessarily kick in at moderate confidence $\delta$ (Figure~\ref{fig:Weights}, left) and that there is an important lower-order term to the practical sample complexity (Figure~\ref{fig:SampleComplexity}). It is an intriguing open problem of theoretical and practical importance to characterize and match optimal behavior at moderate confidence. We make first contributions in both directions: we prove tighter sample complexity lower bounds for symmetric algorithms (Proposition~\ref{prop:impbinf}, Theorem~\ref{th:impbinf}) and we design aggregating confidence intervals which are tighter in practice (Figure~\ref{fig:UCBmin}). The importance of this perspective arises, as highlighted in the introduction, from the \emph{hierarchical} application of maxima/minima in learning applications. A better understanding of the moderate confidence regime for learning minima will very likely translate into new insights and methods for learning about hierarchical structures, where the benefits accumulate with depth.



\bibliography{biblioBandits}

\appendix

\section{Additional Experimental Results}\label{sec:MoreExpes}

We first report in Figure~\ref{fig:Weights} further results regarding the convergence of the sampling proportions $N_a(\tau)/\tau$ under the two instances of $\Hinf$ and $\Hsup$ described in Section~\ref{sec:Expes}, for the smallest value of $\delta$ used in each experiment and under the stopping rule $\tau^{\text{Agg}}$. Under $\Hinf$ we see that MS has indeed spent a larger fraction of the time on the optimal arm, even if it does not yet reach the fraction 1 prescribed by the lower bound. One can also note that the empirical proportions of draws of the arms under LCB are very close to the sub-optimal weights obtained in Proposition~\ref{prop:LCB.bad} in Appendix~\ref{sec:LCB}, which are added to the plot. Under $\Hsup$, we see that the empirical fractions of draws of both MS and LCB converge to $\w^*(\bm\mu)$ whereas the TS sampling rule departs significantly from those optimal weights, by drawing mostly arm 1. 

\begin{figure}[h]
\centering
\begin{minipage}{0.49\textwidth}
\includegraphics[height=5cm]{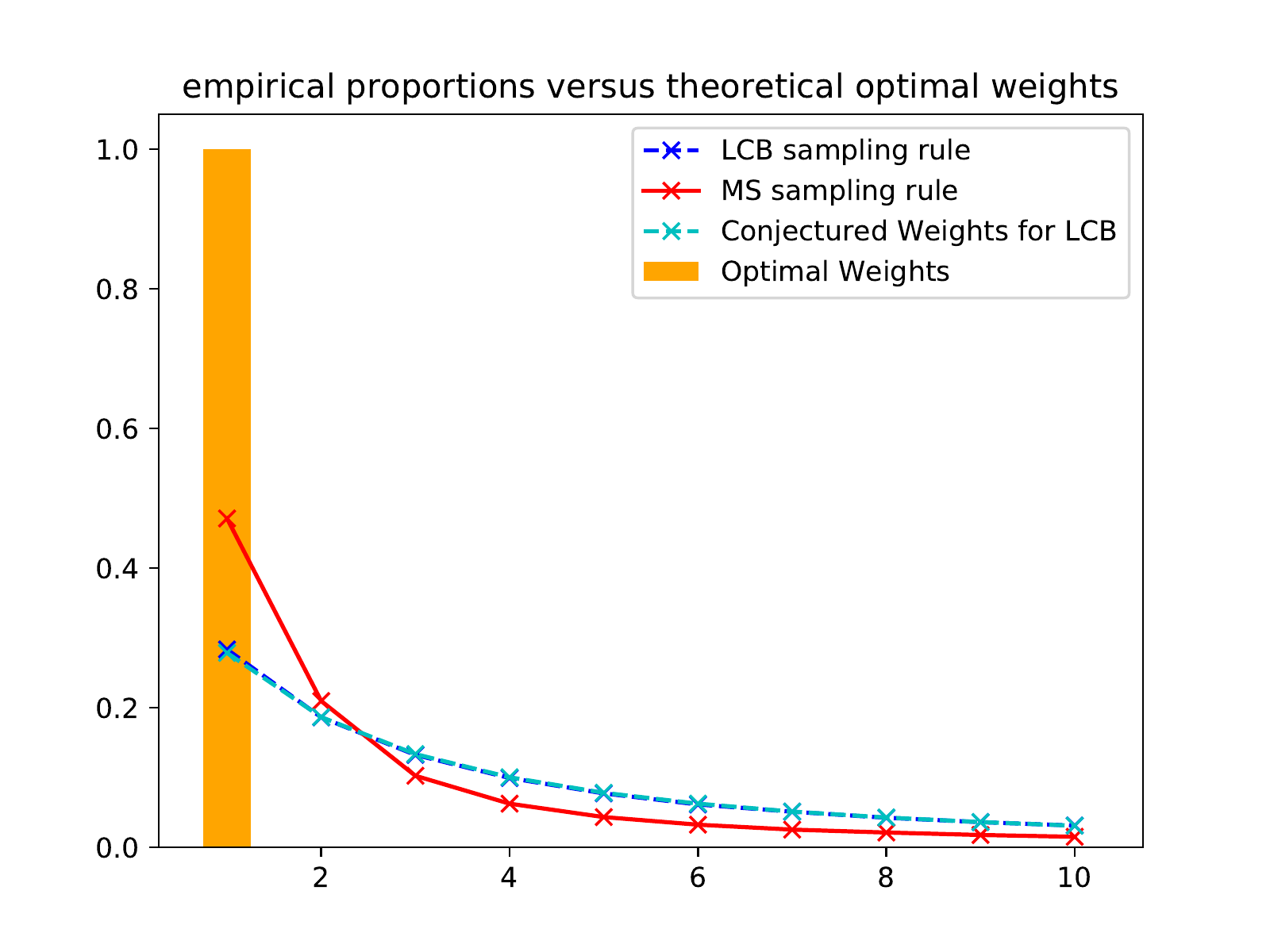} 
\end{minipage}
\begin{minipage}{0.49\textwidth}
\includegraphics[height=5cm]{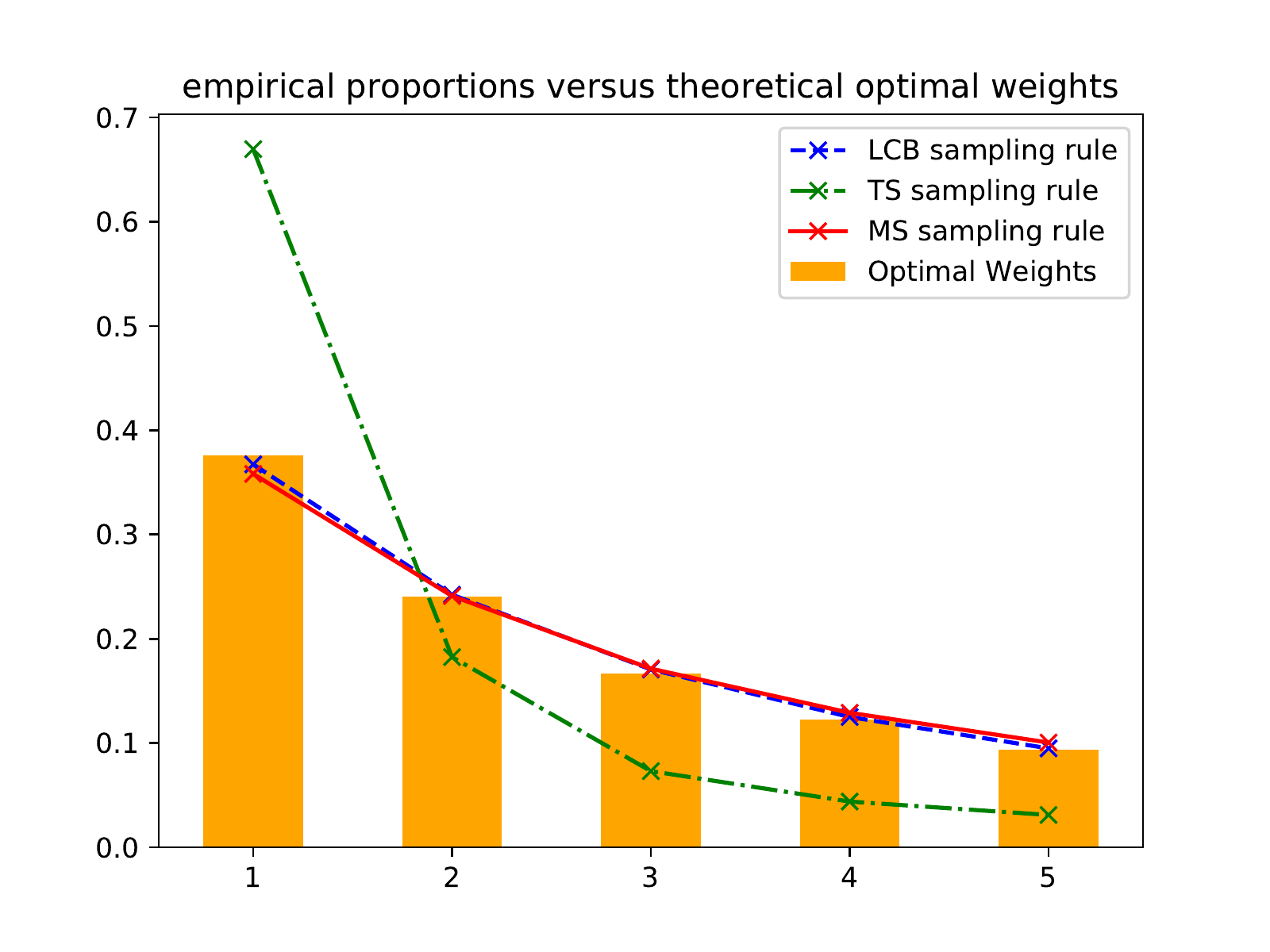}
\end{minipage}
\caption{\label{fig:Weights} Empirical proportions of samples versus $\w^*(\bm\mu)$ for one instance in $\Hinf$ (left) and one instance in  $\Hsup$ (right), in the same experimental setup as that of Figure~\ref{fig:SampleComplexity}.}
\end{figure}

Then we go deeper into investigating the impact of the proposed sampling rule under instances of $\Hinf$. Indeed, we expect that grouping samples from several arms will help stop earlier as the number of arms under the threshold $\gamma$  increases, which we illustrate with the following experiment. Consider $K=100$ Gaussian arms with variance 1 and $\gamma=0$. For several values of $k \in \{1,\dots,K\}$, we consider an instance in which there are $k$ arms with mean $-1$ and $K-k$ arms with mean $0$. Note that all those instances have the same (asymptotic) theoretical sample complexity, which is $T^*(\bm\mu)\ln(1/\delta)$, but in a regime with ``large'' $\delta$ (here we take $\delta=0.1$), we expect this aggregating of samples to reduce significantly the sample complexity especially when there are a lot of arms below $\gamma$.

\begin{figure}[b]
\centering 
\includegraphics[height=4.5cm]{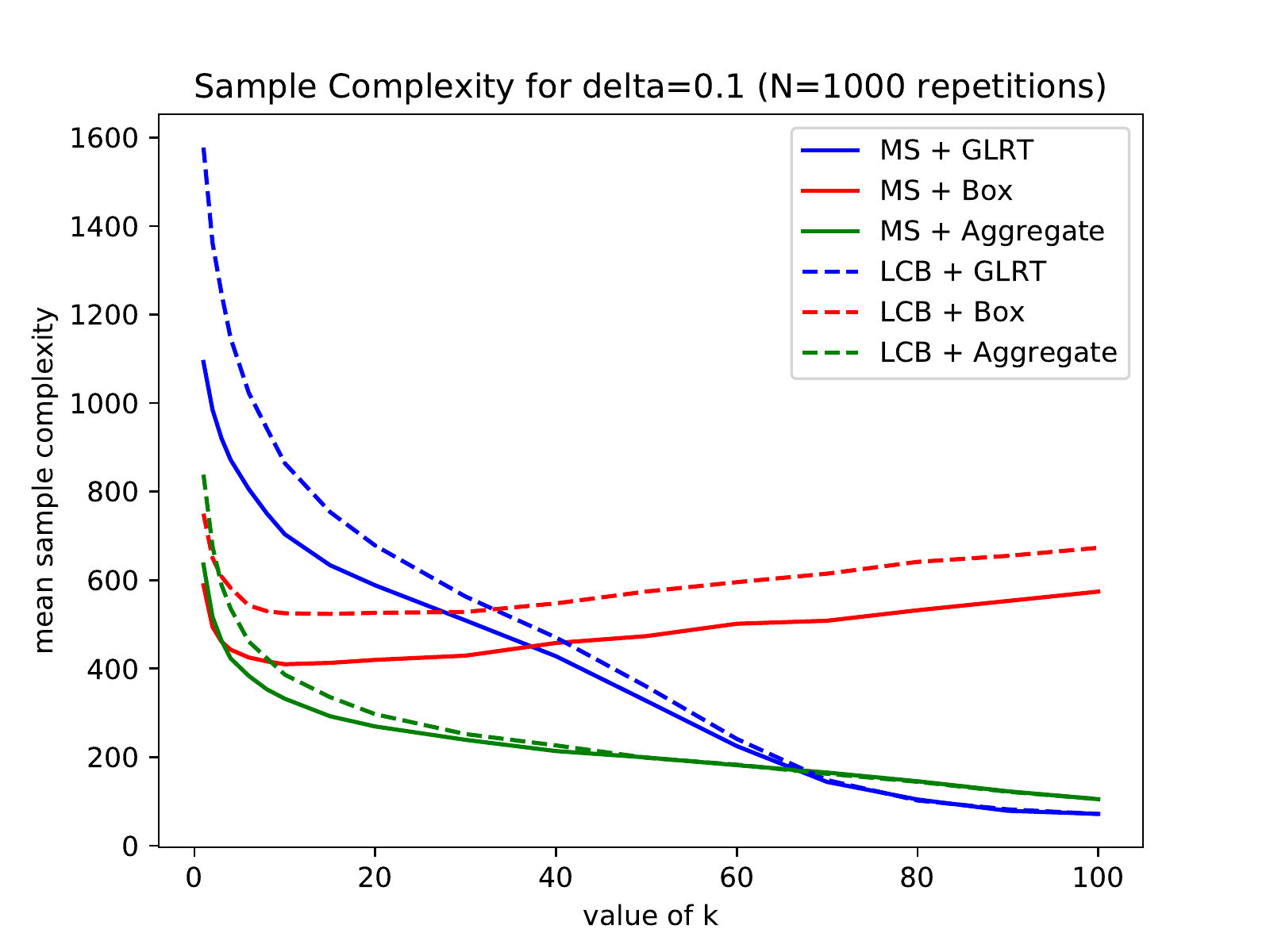}
\includegraphics[height=4.5cm]{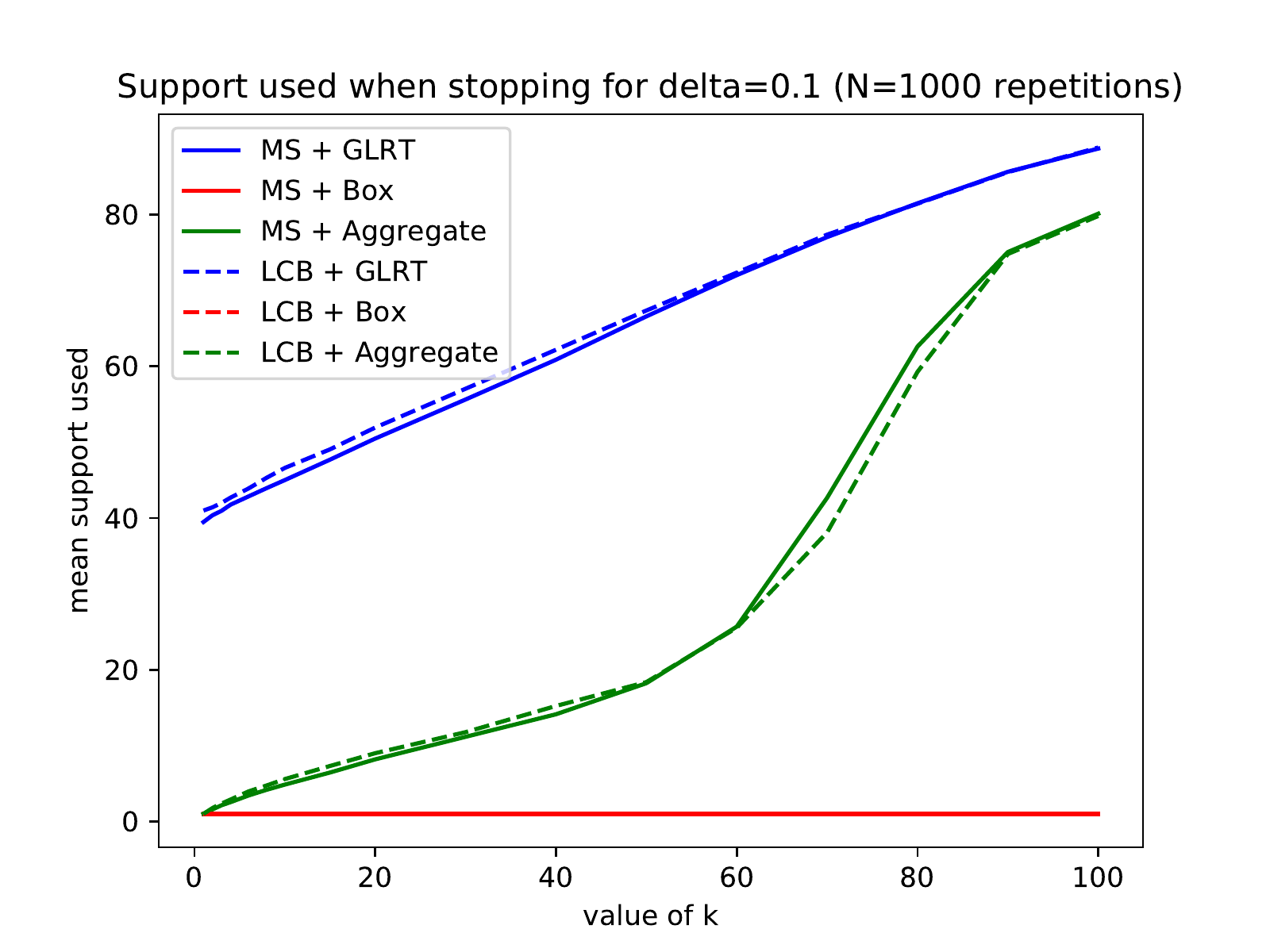}
\caption{\label{fig:FlatK} Sample complexity (left) and support when stopping (right) for different algorithms as a function of the number $k$ of arms below the threshold $\gamma=0$ on an instance for which $\mu_a \in \{-1,0\}$.} 
\end{figure}

Figure~\ref{fig:FlatK} (left) reports the sample complexity of the Agg, Box and GLRT stopping rule, each used in combination with either the LCB or the MS sampling rule, for different values of $k$. On can note first that for a given stopping rule, MS is always outperforming LCB. Then, $\tau^{\text{Agg}}$ outperforms $\tau^{\text{Box}}$ for all the values of $k$, as well as $\tau^{\text{GLRT}}$ for values that are smaller than 70. GLRT is thus a better candidate only when the number of arms below the threshold is very large. This may be explained by the support plot displayed in Figure~\ref{fig:FlatK} (right): for each value of $k$, we report the number of arms in the subset $\cS$ that was used for stopping, that is which satisfies $N_{\cS}(\tau)d^+\left(\hat{\mu}_\cS(\tau),\gamma\right) \geq \ln(1+\ln(N_{\cS}(\tau))) + \ln(1/(\delta\pi(\cS)))$ in the case of Box and Agg. For the GLRT, the support is the number of arms for which $\hat{\mu}_a(\tau) \leq \gamma$, whose evidence for being below the threshold is included in the definition of the GLRT  statistic. The support plot highlights that GLRT may sum evidence from more arms than the number of arms whose samples are aggregated by Agg, and in a regime in which the thresholds to which the two stopping statistics are compared are similar, this may favor GLRT. In Figure~\ref{fig:StaircaseK}, we report similar experiments in instances in which $K=100$ and for each $k$ there are $k$ linearly spaced arms below the threshold and $K-k$ arms with mean 0. In that case, even for large values of $k$, GLRT does not outperform the Aggregating stopping rule, which successfully combines samples from several arms below the threshold with different means.

\begin{figure}[t]
\centering
\includegraphics[height=4.5cm]{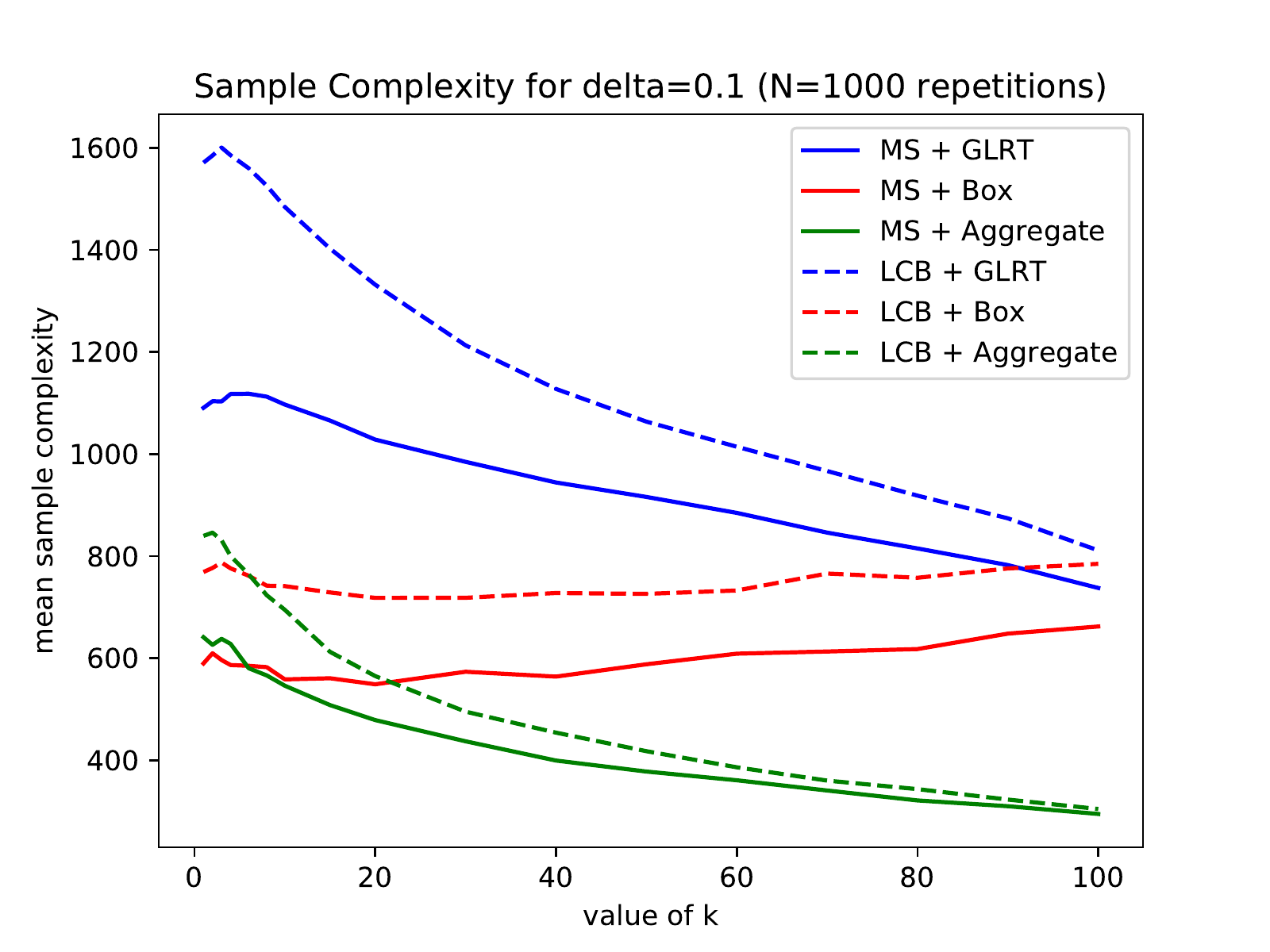}
\includegraphics[height=4.5cm]{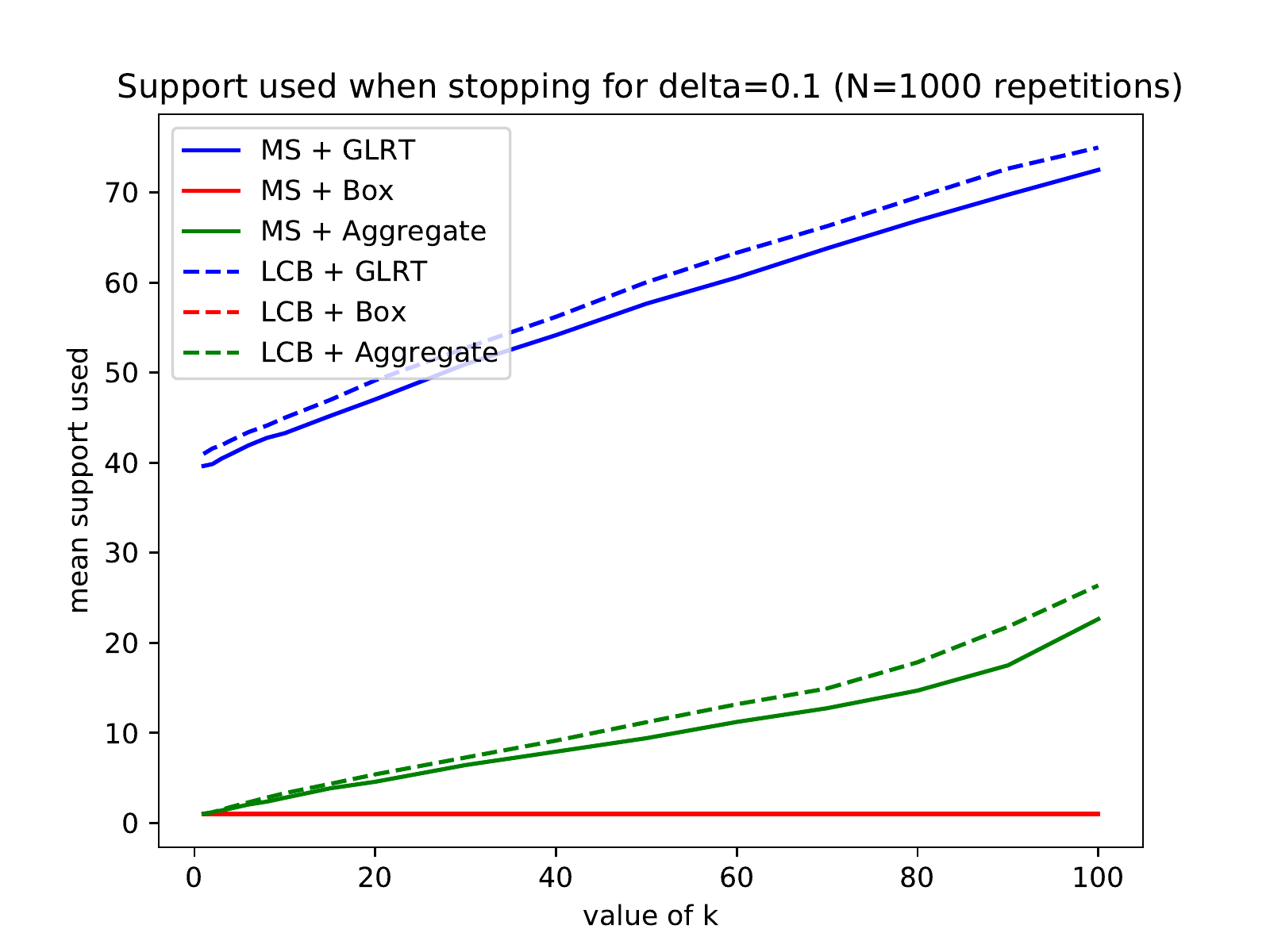}
\caption{\label{fig:StaircaseK}Sample complexity (left) and support when stopping (right) for different algorithms as a function of the number $k$ of arms below the threshold $\gamma=0$ on an instance for which the $k$ arms below $\gamma$ are linearly spaced between -1 and 0.} 
\end{figure}

\begin{figure}[b]
\centering
\includegraphics[width=0.33\textwidth]{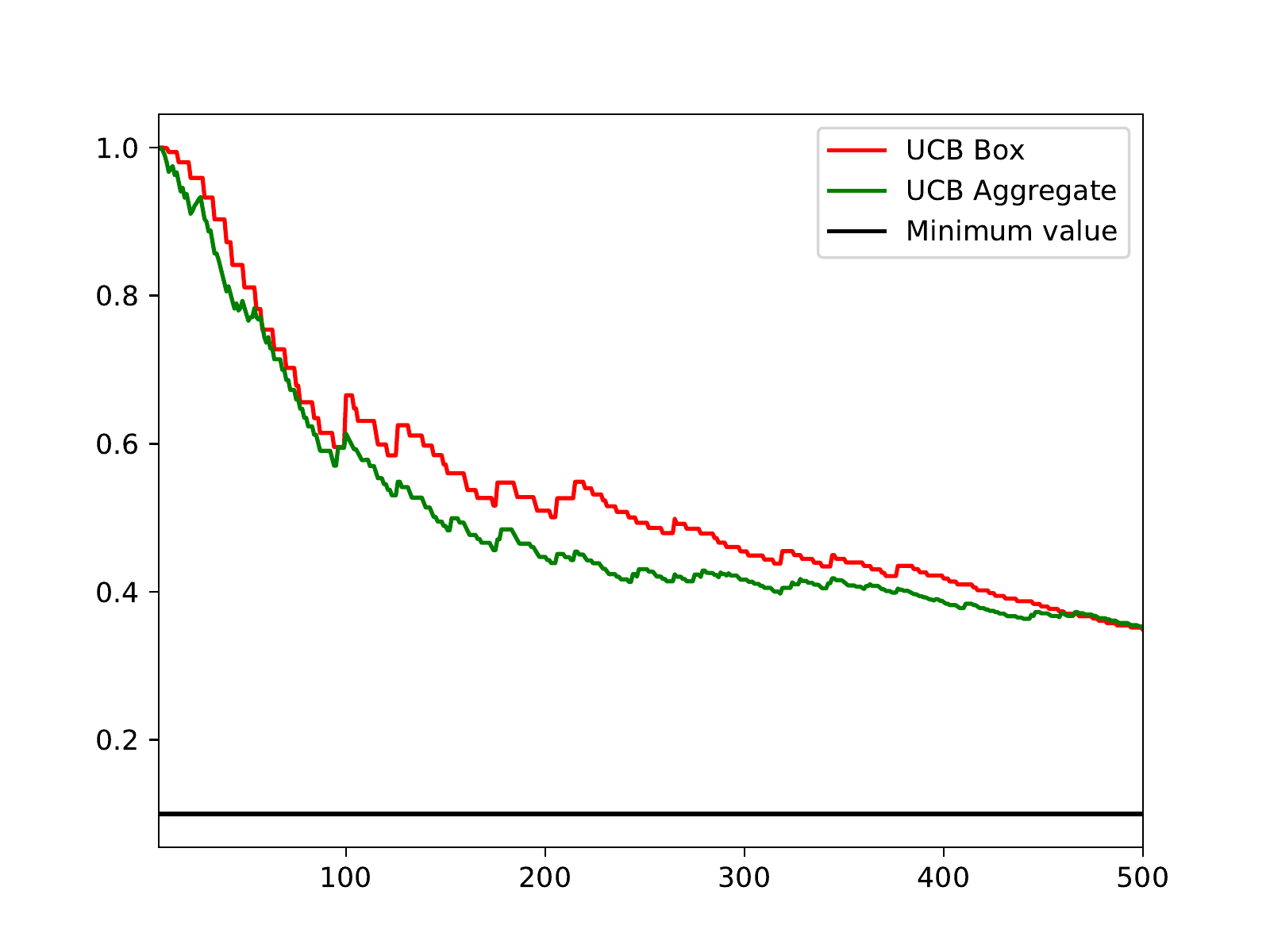}%
\includegraphics[width=0.33\textwidth]{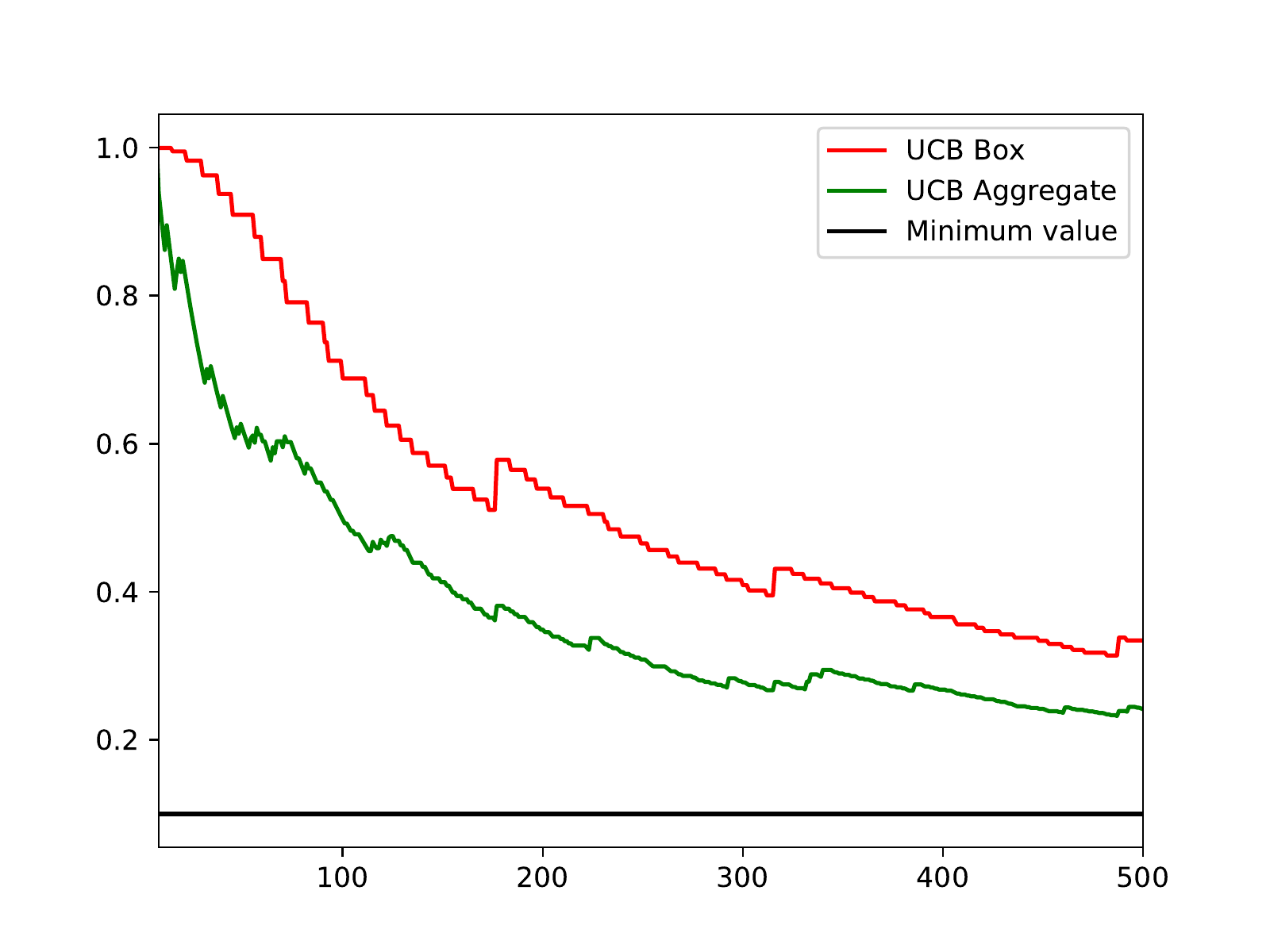}%
\includegraphics[width=0.33\textwidth]{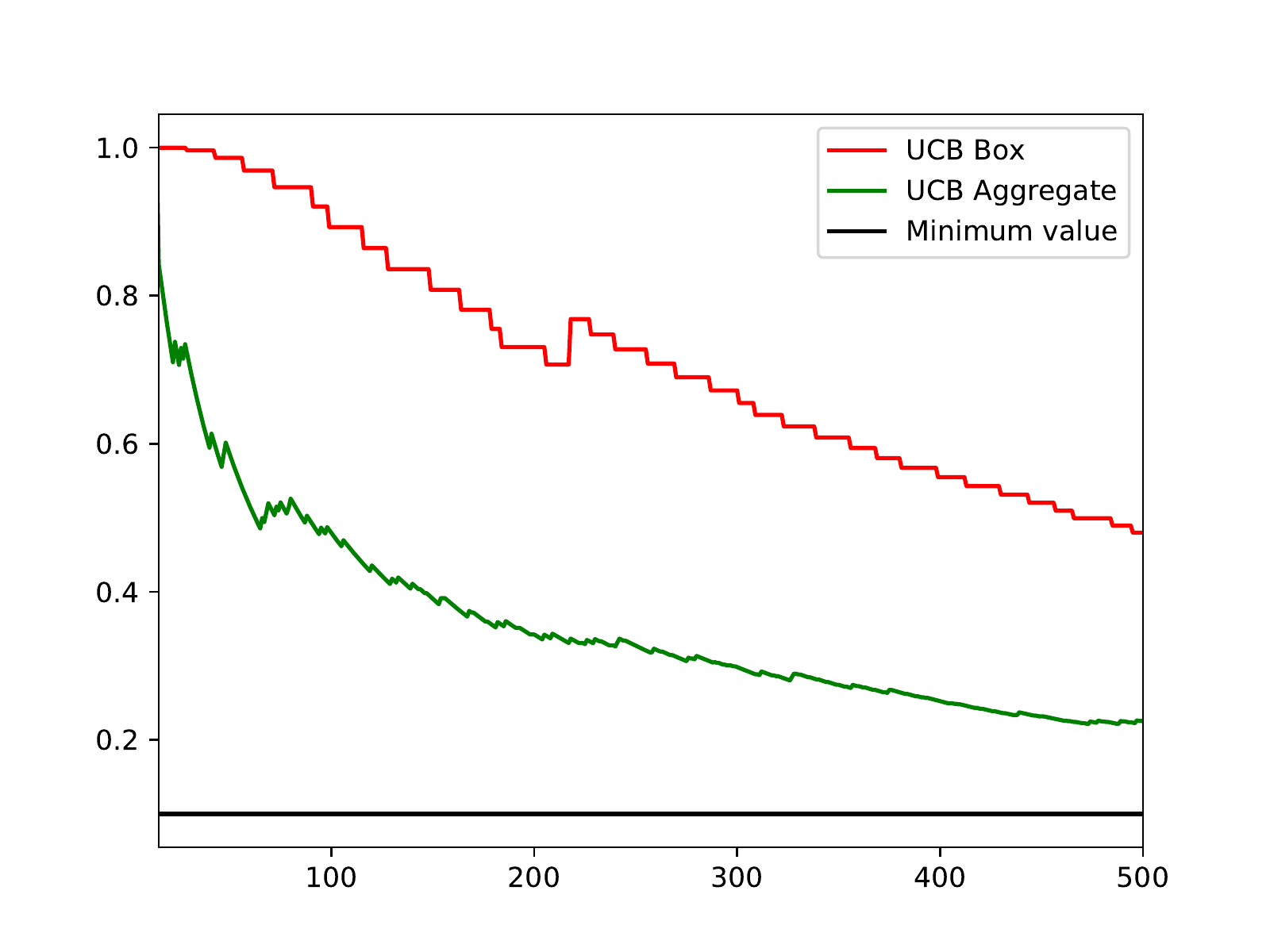}%
\caption{\label{fig:UCBmin}Illustration of the Box versus Aggregate Upper Confidence Bounds as a function of time on Bernoulli instance for $k=1$ (top left), $k=3$ (top right) and $k=10$ (bottom) minimal arms.}
\end{figure}

Finally, we provide in Figure~\ref{fig:UCBmin} an illustration of the improved confidence intervals that follow from our new deviation inequality. For $t\leq 500$, we uniformly sample the arms of a Bernoulli bandit model that has $k$ arms that have 0.1 plus 4 arms with means $[0.2 0.3 0.4 0.5]$. For several values of $k$, we display in Figure~\ref{fig:UCBmin} the evolution of the upper confidence bound $\UCB_{\min}(t)$ defined in Section~\ref{sec:ConfidenceInterpretation} for two choices of prior. First the uniform prior over subset of size one, for which $\UCB_{\min}(t) = \min_a \UCB_a(t)$ (with a threshold function $C_{<}(\delta,r) = 3\ln(1+\ln(r)) + T(\ln(K/\delta))$). We refer to it as UCB Box in the plots. Then, the prior corresponding to the Aggregate stopping rule, which yields the UCB Aggregate upper confidence bounds in the plots. We see that the larger the number of arms close to minimum (here equal to it) is, the more UCB Aggregate beats UCB Box. 

\section{Proofs for the Sample Complexity Lower Bounds}\label{sec:pf.lb}

We first need the following Lemma, that tells us that a $\delta$-correct strategy stops with probability at most $2\delta$ if all arms have mean exactly $\gamma$.
\begin{lemma}\label{lem:tauinfty} Let $\vgamma=(\gamma,\dots,\gamma)$.
	For any $\delta$-correct test, $\pr_\vgamma[\tau<\infty]\leq 2\delta$.
\end{lemma}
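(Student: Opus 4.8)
The plan is to exploit that $\vgamma=(\gamma,\dots,\gamma)$ lies \emph{on the boundary} between $\Hinf$ and $\Hsup$ — it belongs to neither set — so $\delta$-correctness says nothing directly at $\vgamma$; instead I will approach $\vgamma$ from both sides and combine. First split according to the recommendation:
\[
  \pr_\vgamma[\tau<\infty] \;=\; \pr_\vgamma\!\left[\tau<\infty,\ \hat m = \ < \right] \;+\; \pr_\vgamma\!\left[\tau<\infty,\ \hat m = \ > \right],
\]
and I will show that each term is at most $\delta$.

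For the first term, fix a sequence $\vmu^{(j)}\in\Hsup$ with $\vmu^{(j)}\to\vgamma$ (for instance all coordinates equal to $\gamma+1/j$, which lies in $\cI$ once $j$ is large, since $\gamma$ is interior to $\cI$). Because $\vmu^{(j)}\in\Hsup$, $\delta$-correctness forces $\pr_{\vmu^{(j)}}\!\left[\tau<\infty,\ \hat m = \ < \right]\le\delta$. Now truncate at a finite horizon: for any $N\in\N$ the event $\{\tau\le N,\ \hat m = \ <\}$ is $\cF_N$-measurable, since $\tau$ is a stopping time and $\hat m$ is $\cF_\tau$-measurable. By Pinsker's inequality and the chain-rule decomposition of the Kullback-Leibler divergence between the two laws restricted to $\cF_N$ (valid even for randomized sampling rules, the exogenous randomness cancelling in the likelihood ratio),
\[
  \left| \pr_\vgamma\!\left[\tau\le N,\ \hat m = \ < \right] - \pr_{\vmu^{(j)}}\!\left[\tau\le N,\ \hat m = \ < \right] \right|
  \;\le\;
  \sqrt{\frac12 \sum_a \ex_\vgamma[N_a(N)]\, d\!\left(\gamma,\mu^{(j)}_a\right)}
  \;\le\;
  \sqrt{\frac N2\, \max_a d\!\left(\gamma,\mu^{(j)}_a\right)},
\]
which tends to $0$ as $j\to\infty$ by continuity of $d(\gamma,\cdot)$ at $\gamma$. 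Hence $\pr_\vgamma[\tau\le N,\ \hat m = \ <]\le\delta$ for every $N$, and letting $N\to\infty$ yields $\pr_\vgamma[\tau<\infty,\ \hat m = \ <]\le\delta$.

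The second term is symmetric: take instead $\vmu^{(j)}\in\Hinf$ with $\vmu^{(j)}\to\vgamma$ (coordinates $\gamma-1/j$), use $\delta$-correctness to get $\pr_{\vmu^{(j)}}[\tau<\infty,\ \hat m = \ >]\le\delta$, and run the same truncation-plus-change-of-measure argument to conclude $\pr_\vgamma[\tau<\infty,\ \hat m = \ >]\le\delta$. Adding the two bounds gives $\pr_\vgamma[\tau<\infty]\le 2\delta$. There is no substantial obstacle beyond bookkeeping: the one point needing care is that one must pass to a finite horizon $N$ \emph{before} taking $j\to\infty$ — on $\cF_N$ the measures $\pr_\vmu$ are mutually absolutely continuous and vary continuously in $\vmu$ — and only afterwards recover the full event $\{\tau<\infty\}=\bigcup_N\{\tau\le N\}$ by monotone convergence.
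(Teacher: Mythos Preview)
Your proof is correct and uses the same core ingredients as the paper's: truncation to a finite horizon, a Pinsker/change-of-measure bound between nearby exponential-family bandit models, $\delta$-correctness at the perturbed instances, and finally monotone convergence in the horizon. The organization differs slightly: you split $\{\tau<\infty\}$ according to the recommendation $\hat m$ and compare $\vgamma$ directly to a one-sided perturbation for each piece, obtaining $\delta+\delta$. The paper instead compares the two perturbations $\vmu=(\gamma+\epsilon,\dots)$ and $\vmu'=(\gamma-\epsilon,\dots)$ to each other via the informational inequality applied to the single event $\{\tau\le m,\hat m=\mathord>\}$, deduces $\pr_{\vmu}(\tau\le m)\le 2\delta+\sqrt{m\,d(\gamma+\epsilon,\gamma-\epsilon)/2}$, and then lets $\epsilon\to 0$ (implicitly invoking the same finite-horizon continuity you spell out). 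Your decomposition is a bit more transparent because it avoids that extra implicit continuity step and makes clear where each $\delta$ comes from; the paper's version is marginally shorter since it handles both error modes in one inequality.
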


\begin{proof}
  \newcommand{\cinf}{\mathord<}
  \newcommand{\csup}{\mathord>}
  Let $m>0$, $\epsilon>0$, $\vmu=(\gamma+\epsilon,\dots,\gamma+\epsilon)$ and $\vmu'=(\gamma-\epsilon,\dots,\gamma-\epsilon)$. Then the informational inequality of~\cite[Lemma~1]{JMLR15} applied to the event $\set{\tau\leq m, \hat{m}= \csup}$, followed by $\kl(p,q) \ge 2(p-q)^2$, implies that
\begin{align*}
m d(\gamma+\epsilon, \gamma-\epsilon)
&\geq \kl\big(\pr_\vmu(\tau\leq m, \hat{m}= \csup), \pr_{\vmu'}(\tau\leq m, \hat{m}=\csup)\big)\\
  &\geq 2 \big(\pr_\vmu(\tau\leq m, \hat{m}=\csup) - \pr_{\vmu'}(\tau\leq m, \hat{m}=\csup)\big)^2\\
&= 2 \big(\pr_\vmu(\tau\leq m) - \pr_\vmu(\tau\leq m, \hat{m}=\cinf) - \pr_{\vmu'}(\tau\leq m, \hat{m}=\csup)\big)^2\\
& \geq 2\big(\pr_\vmu(\tau\leq m) - 2\delta\big)_+^2
\end{align*}
and thus
\[\pr_\vmu(\tau\leq m) \leq 2\delta + \sqrt{\frac{m d(\gamma+\epsilon, \gamma-\epsilon)}{2}}\;.\]
Letting $\epsilon$ go to $0$, one gets $\pr_\vgamma(\tau\leq m)\leq 2\delta$ and thus
\[\pr_\vgamma(\tau<\infty) = \pr_\vgamma\left(\bigcup_{m>0}(\tau<m)\right) = \lim_{m\to\infty} \pr_\vgamma(\tau<m) \leq 2\delta\;. \qedhere\]
\end{proof}

\subsection{Proof of Lemma~\ref{lem:lbd}}
	If $\mu^* < \gamma$, then we find
	\begin{align*}
	\frac{1}{T^*(\vmu)}
	&~=~
	\max_{\w \in \triangle} \sum_{a : \mu_a < \gamma} w_a d(\mu_a, \gamma)
	~=~
	\max_{a : \mu_a < \gamma} d(\mu_a, \gamma)
	~=~
	d(\mu^*, \gamma)
	\qquad
	\text{where}
	\qquad
	w^*_a ~=~ \mathbf 1_{a=a^*}
	.
	\intertext{On the other hand, if $\mu^* > \gamma$, we find}
	\frac{1}{T^*(\vmu)}
	&~=~
	\max_{\w \in \triangle} \min_{a}~ w_a \,d(\mu_a, \gamma)
	~=~
	\frac{1}{\sum_a \frac{1}{d(\mu_a, \gamma)}}
	\qquad
	\text{where}
	\qquad
	w^*_a ~=~ \frac{\frac{1}{d(\mu_a, \gamma)}}{\sum_j \frac{1}{d(\mu_j, \gamma)}}
	.
	&  \qedhere
	\end{align*}

\subsection{Proof of Proposition~\ref{prop:impbinf}}
	Let $\vgamma=(\gamma,\dots,\gamma)$, and let $m>0$. Fix $a \in \{1,\dots,K\}$.  
	By Lemma~\ref{lem:tauinfty}, \[\ex_{\vgamma}[N_a(\tau\wedge m)]\geq \ex_{\vgamma}[N_a(m)] - m \pr_\vgamma(\tau<m)\geq \ex_{\vgamma}[N_a(m)] - 2\delta\,m .\]
	Then, by the informational lower bound (F-long) and by the generalized Pinsker inequality (Lemma~2 of~\cite{GMS18}), one obtains 
	\begin{align*}
	m k &\geq \sum_{j=1}^K \ex_{\vgamma}[N_j(\tau\wedge m)]\,d(\mu_a,\gamma) \\
	&\geq \kl\left(\frac{\ex_{\vgamma}[N_a(\tau\wedge m)]}{m}, \frac{\ex_{\vmu}[N_a(\tau\wedge m)]}{m}\right)\\ 
	&\geq \kl\left(\frac{1}{K}-2\delta, \frac{\ex_{\vmu}[N_a(\tau\wedge m)]}{m} \wedge \left(\frac{1}{K}-2\delta\right) \right)\\
	&\geq \frac{K}{2} \left(\frac{1}{K}-2\delta - \frac{\ex_{\vmu}[N_a(\tau\wedge m)]}{m}\right)_+^2\;.
	\end{align*}
	It follows that 
	\[\ex_{\vmu}[N_a(\tau\wedge m)] \geq \frac{m}{K}-2\delta m -m\sqrt{\frac{2mk}{K}} \;,\]
	and the result follows from the choice $m=2K(1/K-2\delta)^2/(9k)$.

\subsection{Proof of Theorem~\ref{th:impbinf}}
By the informational lower bound (F-long) of~\cite{GMS18}, 
\[\sum_{a} \ex_{\vmu}\big[N_a(\tau)\big]\,d_+(\mu_a,\gamma) = \sum_{a : \mu_a<\gamma} \ex_{\vmu}\big[N_a(\tau)\big]\,d(\mu_a,\gamma) \geq \kl(\delta,1-\delta)\;,\]
and by Proposition~\ref{prop:impbinf}, for all $a\in\{1,\dots,K\}$, 
\[\ex_{\vmu}\big[N_a(\tau)\big] \geq n:=\frac{2\left(1-2\delta K^3\right)}{27K^2k}\;. \]

Hence, 
\[\ex_{\vmu}[\tau] = \sum_a \ex_{\vmu}\big[N_a(\tau)\big] \geq \min\left\{ \sum_{a =1}^K n_a  \hbox{\quad such that} \sum_{a} n_a\,d_+(\mu_a,\gamma) \geq \kl(\delta,1-\delta) \hbox{ and } \forall a, n_a\geq n\right\}\;.\]
The solution of this minimization problem is: $n^*_a=n$ for all $a>1$, and 
\[n^*_1 = \frac{\kl(\delta, 1-\delta)-n\sum_{a>1} d_+(\mu_a,\gamma)} {d(\mu_1,\gamma)}\;.\]
Thus,
\[\ex_{\vmu}[\tau] \geq \sum_{a =1}^K n_a^* = \frac{\kl(\delta, 1-\delta)}{d(\mu_1,\gamma)} + n\sum_a \left(1-\frac{d_+(\mu_a, \gamma)}{d(\mu_1,\gamma)}\right) \;.\]

\section{Weight Convergence Implies Optimal Sample Complexity (Lemma~\ref{lem:asym.smp.cplx})}\label{proof:asymp.smp.cplx}

Fix $\vmu \in \Hinf$. Then there exists an event $\cE$ such that  $N_1(t)/t \rightarrow w_1^*(\bm\mu)$ and $\hat{\mu}_1(t) \rightarrow \mu_1$. On this event $\cE$, for all $\epsilon > 0$, there exists $t_0$ such that for $t\geq t_0$, $N_1(t) d(\hat\mu_1(t),\gamma) \geq (1-\epsilon)td(\mu_1, \gamma)$. We use \eqref{eq:tbox} to write 
\begin{eqnarray*}
 \tau_\delta & \leq & \tauinf \leq \inf \{ t \in \N^*,  N_1(t)d^-(\hat{\mu}_1(t),\gamma) \geq C_<(\delta,N_1(t)) \} \\
 & \leq & \inf\{ t \geq t_0:  t(1-\epsilon)d(\mu_1,\gamma) \geq C_<(\delta,t) \}\\
& \leq & \inf\{ t \geq t_0:  t(1-\epsilon)d(\mu_1,\gamma) \geq f(\delta) + \ln (t) \}
 \end{eqnarray*}
hence 
\[\tau_\delta \leq  t_0 + \inf\left\{ t \in \N^* : t \times \left[(1-\epsilon) d(\mu_1,\gamma)\right] \geq \ln\left(\frac{t}{\delta}\right) + o(\ln(1/\delta))\right\}.\]
Simple algebra (e.g.\ Lemma 22 in \cite{JMLR15}) yields 
\[\tau_\delta \leq \frac{1}{(1-\epsilon)d(\mu_1,\gamma)}\ln\left(1/\delta \right) + o(\ln(1/\delta))\]
hence $\limsup_{\delta \rightarrow 0} \tau_\delta / \ln(1/\delta) \leq T^*(\bm\mu)/(1-\epsilon)$ for all $\epsilon$, thus $\limsup_{\delta \rightarrow 0} \tau_\delta / \ln(1/\delta) \leq T^*(\bm\mu)$.

Fix $\vmu \in \Hsup$. As each $a$ $w_a^*(\bm\mu)\neq 0$, all arms are drawn infinitely often, thus there exists an event $\cE$ of probability 1 such that  $N_a(t)/t \rightarrow w_a^*(\bm\mu)$ and $\hat{\mu}_a(t) \rightarrow \mu_a$.  
On $\cE$, for all $\epsilon >0$, there exists $t_0$ such that for all $t\geq t_0$, $\forall a, N_a(t)d^-(\hat{\mu}_a(t),\gamma) \geq (1-\epsilon)tw_a^*(\bm\mu)d(\mu_a,\gamma)$. This time 
\begin{eqnarray*}
 \tau_\delta & \leq & \tausup = \inf\{ t \in \N^* : \forall a, N_a(t)d^-(\hat{\mu}_a(t),\gamma) \geq C_>(\delta,N_a(t)) \} \\
 & \leq & \inf\{ t \in \N^* : \forall a, N_a(t)d^-(\hat{\mu}_a(t),\gamma) \geq C_>(\delta,t) \} \\
 & \leq & \inf \{ t \geq t_0 : \forall a, (1-\epsilon)tw_a^*(\bm\mu)d(\mu_a,\gamma) \geq f(\delta) + \ln(t)\}
\end{eqnarray*}
and
\[\tau_\delta \leq  t_0 + \inf\left\{ t \in \N^* : t \times \left[(1-\epsilon) \min_a w_a^*(\bm\mu)d(\mu_a,\gamma)\right] \geq \ln\left(\frac{t}{\delta}\right) + o(\ln(1/\delta))\right\}.\]
Similarly one obtains
\[\tau_\delta \leq \frac{T^*(\bm\mu)}{(1-\epsilon)}\ln\left(1/\delta \right) + o(\ln(1/\delta))\]
and $\limsup_{\delta\rightarrow 0} \tau_\delta/\ln(1/\delta) \leq T^*(\bm\mu)$.

\section{Analysis of Murphy Sampling (Proof of Theorem~\ref{thm:TS.convergence})}\label{sec:pf.TS.convergence}

In this section we analyse the Murphy Sampling \eqref{eq:ms} sampling rule. Throughout we will make the assumption stated in Section~\ref{sec:alg}.

Let $\Pi_n$ be the posterior on $\vmu$ after $n$ rounds. Let $\psi_a(t)$ denote the probability of sampling arm $a$ in round $t$, i.e.\
\[
  \psi_a(t)
  ~=~
  \pr\delc*{A_t = a}{\mathcal F_{t-1}}
  ~=~
  \Pi_{t-1}\delc*{a = \arg\min_j \mu_j}{\min_j \mu_j < \gamma}
  .
\]
let $\Psi_a(n) = \sum_{t=1}^n \psi_a(t)$ and $\bar \psi_a(n) = \Psi_a(n)/n$. We will make use of the following result
\begin{proposition}[{\cite[Corollary~1]{Russo16}}]\label{prop:limrat.N.Psi}
  Let $\cS \subseteq [K]$ be any subset of arms.
\[
  \sum_{a \in \cS} \Psi_a(t) \to \infty
  ~\Longrightarrow~
  \lim_{t \to \infty} \frac{\sum_{a \in \cS}  N_a(t)}{\sum_{a \in \cS} \Psi_a(t)} = 1
  \qquad
  \text{a.s.}
\]
\end{proposition}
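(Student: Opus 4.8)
The plan is to prove a statement of this type for \emph{any} adaptive sampling rule; it uses nothing about Murphy Sampling beyond the fact that $\ind_{(A_s \in \cS)}$ has conditional mean $\psi_\cS(s) := \pr(A_s \in \cS \mid \cF_{s-1}) = \sum_{a \in \cS} \psi_a(s)$. Write $N_\cS(t) = \sum_{a \in \cS} N_a(t)$ and $\Psi_\cS(t) = \sum_{a \in \cS} \Psi_a(t) = \sum_{s=1}^t \psi_\cS(s)$, and set $D_s = \ind_{(A_s \in \cS)} - \psi_\cS(s)$, so that $N_\cS(t) - \Psi_\cS(t) = \sum_{s=1}^t D_s$. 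The $D_s$ form a bounded martingale difference sequence for the natural filtration, and $\Psi_\cS$ is a non-decreasing process that is $\cF_{t-1}$-measurable (predictable). The claim $N_\cS(t)/\Psi_\cS(t) \to 1$ on $\{\Psi_\cS(\infty) = \infty\}$ is equivalent to $\frac{1}{1+\Psi_\cS(t)} \sum_{s=1}^t D_s \to 0$ on that event, and it is this form I would target.

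Next I would introduce the weighted martingale $W_t = \sum_{s=1}^t \frac{D_s}{1 + \Psi_\cS(s)}$; this is a genuine martingale because $1/(1+\Psi_\cS(s))$ is $\cF_{s-1}$-measurable, so $W$ is a martingale transform of $\sum D_s$. Its predictable quadratic variation is bounded \emph{pathwise}: since $\mathrm{Var}(D_s \mid \cF_{s-1}) = \psi_\cS(s)(1-\psi_\cS(s)) \le \psi_\cS(s) = \Psi_\cS(s) - \Psi_\cS(s-1)$ and $u \mapsto (1+u)^{-2}$ is decreasing,
\[
  \langle W \rangle_\infty ~=~ \sum_{s \ge 1} \frac{\mathrm{Var}(D_s \mid \cF_{s-1})}{(1+\Psi_\cS(s))^2} ~\le~ \sum_{s\ge 1} \int_{\Psi_\cS(s-1)}^{\Psi_\cS(s)} \frac{\dif u}{(1+u)^2} ~\le~ \int_0^\infty \frac{\dif u}{(1+u)^2} ~=~ 1 .
\]
Hence $\ex[W_t^2] = \ex[\langle W\rangle_t] \le 1$ for all $t$, so $W_t$ is an $L^2$-bounded martingale and converges almost surely to a finite limit.

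Finally I would invoke Kronecker's lemma: on $\{\Psi_\cS(\infty) = \infty\}$ the sequence $b_s := 1 + \Psi_\cS(s)$ increases to $\infty$, and $\sum_s D_s/b_s$ converges, so $\frac{1}{b_t}\sum_{s=1}^t D_s \to 0$; this is exactly $\frac{N_\cS(t) - \Psi_\cS(t)}{1+\Psi_\cS(t)} \to 0$, which gives $N_\cS(t)/\Psi_\cS(t) \to 1$. I expect the only delicate point to be the quadratic-variation bound above: doing it pathwise (via the sum-versus-integral comparison) is what lets one avoid a localization argument and simply quote plain $L^2$-martingale convergence. The conditioning on $\{\Psi_\cS(\infty) = \infty\}$ is harmless, since both the martingale convergence and Kronecker's lemma are pathwise statements.
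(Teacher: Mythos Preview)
Your argument is correct. Note, however, that the paper does not supply its own proof of this proposition: it is quoted as \cite[Corollary~1]{Russo16} and used as a black box, so there is no in-paper argument to compare against. Your route---writing $N_\cS(t)-\Psi_\cS(t)$ as a sum of bounded martingale differences $D_s$, forming the predictably weighted transform $\sum_s D_s/(1+\Psi_\cS(s))$, bounding its bracket pathwise by $\int_0^\infty (1+u)^{-2}\,\dif u = 1$ via the integral comparison, and finishing with $L^2$-martingale convergence plus Kronecker's lemma---is exactly the standard proof of the martingale strong law of large numbers in this adaptive-sampling setting, and is the argument underlying the cited result.
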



  




Our main assumption is the following (see e.g.\ \cite[Proposition~3]{Russo16}). Let $\Theta_a \subseteq \mathbb R$ be an open set. Then
\begin{align}
  \label{eq:posterior.convergence}
  \sup_t N_a(t) = \infty
  &~\Longrightarrow~
  \Pi_t\del*{\theta_a \in \Theta_a} \to \mathbf 1 \set{\mu_a \in \Theta_a}
  \qquad
  \text{a.s.}
  \\
  \label{eq:posterior.nonconvergence}
  \sup_t N_a(t) < \infty
  &~\Longrightarrow~
  \inf_t \Pi_t\del*{\theta_a \in \Theta_a} > 0
  \qquad
  \text{a.s.}  
\end{align}

We first show that every arm is drawn infinitely often
\begin{proposition}
  Let $\vmu \in \H$ with $\mu^*$ not on the boundary of $\Theta_{a^*}$. Then the MS sampling rule ensures $N_a(t) \to \infty$ a.s. for all arm $a \in \{1,\dots,K\}$.
\end{proposition}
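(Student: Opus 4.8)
The plan is to argue by contradiction: suppose with positive probability some arm is sampled only finitely often, and derive a contradiction with the asymptotic behavior of the sampling probabilities $\psi_a$. First I would partition the arms according to whether they are drawn infinitely often. Let $\mathcal{E}$ be the event that at least one arm is undersampled, and on $\mathcal{E}$ let $B$ be the (random) set of arms $a$ with $\sup_t N_a(t) < \infty$ and $G = [K]\setminus B$ the set of arms drawn infinitely often; note $G$ is nonempty since at every round \emph{some} arm is drawn and there are finitely many arms. The key dichotomy is \eqref{eq:posterior.convergence}--\eqref{eq:posterior.nonconvergence}: for $a \in G$ the marginal posterior $\Pi_t(\theta_a \in \Theta_a) \to \mathbf{1}\{\mu_a \in \Theta_a\}$, so the posterior on $\theta_a$ concentrates at $\mu_a$, whereas for $a \in B$ the posterior on $\theta_a$ stays bounded away from any value with positive mass on a fixed neighborhood.

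Next I would show that an undersampled arm must nonetheless keep a non-vanishing probability of being selected, which is the contradiction. The sampling probability is $\psi_a(t) = \Pi_{t-1}(a = \arg\min_j \theta_j \mid \min_j \theta_j < \gamma)$. I would lower bound this for an arm $a \in B$: since $\mu^*$ is not on the boundary of $\Theta_{a^*}$, pick a small neighborhood of some value strictly below both $\gamma$ and $\min_{c \in G}\mu_c$ (possible because there is a positive-probability neighborhood in $\Theta_a$ of values that small, using that $B$ is nonempty means there \emph{is} room; more carefully one uses that the parameter space is an interval and some point of $\mathcal{H}_<$ is reachable). On the event that $\theta_a$ lands in that neighborhood (which has posterior probability bounded below by a positive constant by \eqref{eq:posterior.nonconvergence}) while all $\theta_c$ for $c \in G$ concentrate near their means $\mu_c$ (which by \eqref{eq:posterior.convergence} happens with posterior probability $\to 1$), we have $a = \arg\min_j \theta_j$ \emph{and} $\min_j \theta_j < \gamma$ simultaneously, up to the contribution of the other undersampled arms. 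Handling the other arms in $B$ requires a small additional argument — either order them and treat the smallest index in $B$, or observe that whichever arm in $B$ achieves the minimum among $\{\theta_j : j \in B\}$ gets the credit, and iterate — so I would instead directly lower bound $\Pi_{t-1}(\min_{j} \theta_j < \gamma \text{ and } \arg\min_j \theta_j \in B)$ by a positive constant and then, by symmetry of the rejection step or by a further case analysis, conclude that \emph{some} fixed arm in $B$ has $\liminf_t \psi_a(t) > 0$.

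From $\liminf_t \psi_a(t) > 0$ on $\mathcal{E}$ it follows that $\Psi_a(t) = \sum_{s \le t}\psi_a(s) \to \infty$, and then Proposition~\ref{prop:limrat.N.Psi} applied with $\mathcal{S} = \{a\}$ gives $N_a(t)/\Psi_a(t) \to 1$ a.s., hence $N_a(t) \to \infty$, contradicting $a \in B$. Therefore $\Pr(\mathcal{E}) = 0$ and every arm is drawn infinitely often.

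The main obstacle I anticipate is the combinatorial bookkeeping over the set $B$ of undersampled arms: the clean lower bound on $\psi_a(t)$ works immediately only when $B$ is a singleton, and for larger $B$ one has to argue that at least one specific arm retains a bounded-below selection probability rather than the mass being split in an adversarial, time-varying way. The cleanest fix is probably to fix an enumeration of arms and run an induction on $|B|$: if $B = \emptyset$ we are done; otherwise, by the above argument the smallest-indexed arm $a_0 \in B$ satisfies $\liminf \psi_{a_0}(t) > 0$ whenever the events ``$\theta_{a_0}$ small'' and ``all of $G$ concentrated'' co-occur and $a_0$ beats the rest of $B$ — but since we may further condition on $\theta_j$ for $j \in B\setminus\{a_0\}$ also being bounded away from extremely small values (again positive posterior probability by \eqref{eq:posterior.nonconvergence}, as those $j$ are undersampled too), $a_0$ does win the argmin on a set of non-vanishing posterior probability. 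This yields the needed contradiction with $a_0 \in B$.
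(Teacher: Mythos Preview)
Your approach is essentially the paper's: contradiction via posterior concentration/non-concentration together with Proposition~\ref{prop:limrat.N.Psi}. The paper, however, sidesteps entirely the ``combinatorial bookkeeping'' obstacle you flag. Instead of isolating a single $a_0 \in B$ and controlling where the other undersampled arms' posterior samples fall, it defines the bad set $\mathcal{A} = \{a : \sup_t \Psi_a(t) < \infty\}$ in terms of $\Psi_a$ (not $N_a$), picks $B = \{\theta : \theta < \mu^* - \epsilon\}$, and lower-bounds the \emph{aggregate}
\[
\sum_{a \in \mathcal{A}} \psi_a(t)
~\ge~
\Pi_{t-1}\!\del*{\arg\min_j \theta_j \in \mathcal{A},\ \min_j \theta_j < \gamma}
~\ge~
\max_{a \in \mathcal{A}} \Pi_{t-1}(\theta_a \in B) \prod_{a \notin \mathcal{A}} \Pi_{t-1}(\theta_a \notin B)
~\to~ C > 0,
\]
which immediately forces $\sum_{a \in \mathcal{A}} \Psi_a(t) \to \infty$ and contradicts the definition of $\mathcal{A}$. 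Whichever arm in $\mathcal{A}$ realizes the minimum still contributes to the sum, so no per-arm case analysis or induction on $|B|$ is needed. Your proposed fix (using \eqref{eq:posterior.nonconvergence} and per-arm posterior independence to pin the remaining $j \in B \setminus \{a_0\}$ above the chosen neighborhood) does go through, but it is an unnecessary detour compared to bounding the sum directly.
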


\begin{proof}
By Proposition~\ref{prop:limrat.N.Psi}, it suffices to show $\Psi_a(t) \to \infty$. Toward contradiction assume that $\mathcal A \df \setc{a}{\sup_t \Psi_a(t) < \infty} \neq \emptyset$. Let $B = \setc{\theta}{\theta < \mu^* - \epsilon}$. Now for every arm $a \notin \mathcal A$, we have $\Pi_t(\theta_a \notin B) \to 1$ by \eqref{eq:posterior.convergence}. Let $C = \max_{a \in \mathcal A} \lim_t \Pi_t(\theta_a \in B)$. We have $C>0$ by \eqref{eq:posterior.nonconvergence}. But then
\begin{align*}
  \sum_{a \in \mathcal A} \psi_a(t)
&~\ge~ \Pi_t\del*{\arg\min_a \theta_a \in \mathcal A, \min_a \theta_a < \gamma}
  \\
  &~\ge~
    \max_{a \in \mathcal A} \Pi_t\del*{\theta_a \in B} \prod_{a \notin \mathcal A} \Pi_t\del*{\theta_a \notin B} \to C > 0.
\end{align*}
But this means that $\sum_{a \in \mathcal A} \Psi_a(t) \to \infty$, a contradiction.
\end{proof}

The analysis now splits in 2 cases, depending on the location of $\min_a \mu_a$ w.r.t.\ $\gamma$. First we consider the case $\vmu \in \mathcal H_<$.

\begin{theorem}\label{thm:TS.converge.inf}
  Consider $\vmu \in \mathcal H_<$ with minimal arms $\mathcal A = \setc{a}{\mu_a = \mu_*}$. Note that although Lemma~\ref{lem:lbd} may not uniquely identify $\w^*(\vmu)$, all candidate $\w^*(\vmu)$ must satisfy  $\sum_{a \in \mathcal A} w^*_a(\vmu) = 1$. The MS sampling rule ensures that the sampling frequencies converge to $\frac{\sum_{a \in \mathcal A} N_a(t)}{t} \to \sum_{a \in \mathcal A} w^*_a(\vmu)$ a.s.
\end{theorem}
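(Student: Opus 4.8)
The plan is to reduce the claim to a statement about the sampling \emph{probabilities} $\psi_a(t)$, and then exploit that the conditioning on $\mathcal H_<$ becomes inactive in the limit. Since the preceding Proposition already gives $N_a(t)\to\infty$ a.s.\ for every arm $a$, the posterior concentration property \eqref{eq:posterior.convergence} applies coordinatewise: for every open $\Theta_a\subseteq\R$ one has $\Pi_t(\theta_a\in\Theta_a)\to\mathbf 1\{\mu_a\in\Theta_a\}$ a.s. Applying Proposition~\ref{prop:limrat.N.Psi} with $\cS=\mathcal A$, it then suffices to show $\sum_{a\in\mathcal A}\psi_a(t)\to 1$ a.s.: this forces $\Psi_{\mathcal A}(t):=\sum_{a\in\mathcal A}\Psi_a(t)=\sum_{s=1}^t\sum_{a\in\mathcal A}\psi_a(s)\sim t\to\infty$ by Ces\`aro averaging, hence $\sum_{a\in\mathcal A}N_a(t)/\Psi_{\mathcal A}(t)\to1$, and multiplying by $\Psi_{\mathcal A}(t)/t\to1$ yields exactly $\sum_{a\in\mathcal A}N_a(t)/t\to1=\sum_{a\in\mathcal A}w^*_a(\vmu)$.

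Second, I would show the conditioning is asymptotically immaterial. Writing $\psi_a(t)=\Pi_{t-1}\bigl(a=\arg\min_j\theta_j,\ \min_j\theta_j<\gamma\bigr)\big/\Pi_{t-1}\bigl(\min_j\theta_j<\gamma\bigr)$, fix $a_0\in\mathcal A$; the denominator obeys $\Pi_{t-1}(\min_j\theta_j<\gamma)\ge\Pi_{t-1}(\theta_{a_0}<\gamma)\to\mathbf 1\{\mu^*<\gamma\}=1$ because $\mu^*<\gamma$. So it remains to control the numerator summed over $\mathcal A$.

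Third, fix $\epsilon>0$ small enough that $\mu^*+\epsilon<\gamma$ and $\mu^*+\epsilon<\mu_b$ for every $b\notin\mathcal A$ (possible since $\mu_b>\mu^*$ strictly and $\mu^*<\gamma$). Consider $E_t=\{\exists a\in\mathcal A:\theta_a<\mu^*+\epsilon\}\cap\{\forall b\notin\mathcal A:\theta_b>\mu^*+\epsilon\}$. On $E_t$ the minimum of $\theta$ is attained in $\mathcal A$ and is $<\mu^*+\epsilon<\gamma$, so $E_t$ is contained in the (a.s.\ disjoint) union over $a\in\mathcal A$ of $\{a=\arg\min_j\theta_j,\ \min_j\theta_j<\gamma\}$, giving $\Pi_{t-1}(E_t)\le\sum_{a\in\mathcal A}\Pi_{t-1}(a=\arg\min_j\theta_j,\ \min_j\theta_j<\gamma)$. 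Because the posterior factorises over arms, $\Pi_{t-1}(E_t)=\Pi_{t-1}\bigl(\exists a\in\mathcal A:\theta_a<\mu^*+\epsilon\bigr)\prod_{b\notin\mathcal A}\Pi_{t-1}(\theta_b>\mu^*+\epsilon)\ge\Pi_{t-1}(\theta_{a_0}<\mu^*+\epsilon)\prod_{b\notin\mathcal A}\Pi_{t-1}(\theta_b>\mu^*+\epsilon)$, and every factor tends to $1$ by \eqref{eq:posterior.convergence} (using $\mu^*<\mu^*+\epsilon<\mu_b$). Hence $\Pi_{t-1}(E_t)\to1$, so the numerator of $\sum_{a\in\mathcal A}\psi_a(t)$ tends to $1$; combined with the denominator tending to $1$, this gives $\sum_{a\in\mathcal A}\psi_a(t)\to1$ and closes the argument via the reduction above.

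The main obstacle is the \emph{almost sure} nature of the convergence: the soft heuristic ``Thompson Sampling has sublinear pseudo-regret, so the minimal arm(s) are played a $1-o(1)$ fraction of the time'' only delivers convergence in mean, whereas here the posterior-convergence dichotomy \eqref{eq:posterior.convergence}--\eqref{eq:posterior.nonconvergence} (used also in the ``drawn infinitely often'' step) together with Proposition~\ref{prop:limrat.N.Psi} are exactly what upgrade it to an a.s.\ statement. A secondary subtlety, reflected in the phrasing of the theorem, is that with several minimal arms one cannot pin down the individual ratios $N_a(t)/t$ — only the aggregate $\sum_{a\in\mathcal A}N_a(t)/t$ — since the split of mass among minimal arms depends on the (non-symmetric) shapes of their posteriors; the proof above is therefore careful to only ever handle the block $\mathcal A$ as a whole.
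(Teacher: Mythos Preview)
Your argument is correct and essentially the same as the paper's: both reduce to showing $\sum_{a\in\mathcal A}\psi_a(t)\to 1$ a.s., pick a separating level strictly between $\mu^*$ and $\gamma\wedge\min_{b\notin\mathcal A}\mu_b$ (your $\mu^*+\epsilon$, the paper's $\zeta$), lower bound the relevant posterior event via the product structure and \eqref{eq:posterior.convergence}, and then pass from $\Psi_{\mathcal A}$ to $N_{\mathcal A}$ using Proposition~\ref{prop:limrat.N.Psi}. The only cosmetic difference is that the paper skips your numerator/denominator split by using $\Pi_{t-1}(A\mid B)\ge\Pi_{t-1}(A\cap B)$ directly.
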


\begin{proof}
Let $\zeta \in (\mu^*, \gamma \glb \min_{a \notin \mathcal A} \mu_a)$. We have that
\[
  \sum_{a \in \mathcal A} \psi_a(t)
  ~\ge~
  \Pi_t \del*{\arg\min_j \mu_j \in \mathcal A, \min_j \mu_j \le \gamma}
  ~\ge~
  \max_{a \in \mathcal A} \underbrace{\Pi_t\del*{\mu_a \le \zeta}}_{\text{$\to 1$ by \eqref{eq:posterior.convergence}}} \prod_{a \notin \mathcal A} \underbrace{\Pi_t\del*{\mu_a \ge \zeta}
  }_{\text{$\to 1$ by \eqref{eq:posterior.convergence}}}
  ~\to~
  1
  .
\]
It follows that $\frac{\sum_{a \in \mathcal A} \Psi_a(t)}{t} \to 1$, and the result follows from Proposition~\ref{prop:limrat.N.Psi}.
\end{proof}

Next we analyze the behavior of the MS sampling rule on $\vmu \in \Hsup$. We follow the proof strategy of \cite[Section G.1]{Russo16}.

\begin{theorem}\label{thm:TS.converge.sup}
  Let $\vmu \in \mathcal H_>$. Then $\frac{\bm{N}(t)}{t} \to \w^*(\vmu)$ a.s.
\end{theorem}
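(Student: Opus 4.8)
The plan is to follow the strategy laid out in the proof sketch and in \cite[Section G.1]{Russo16}, casting the sampling proportions as the trajectory of a discrete dynamical system whose only stable fixed point is $\w^*(\vmu)$. Since we are on $\vmu \in \Hsup$, all arms are drawn infinitely often by the preceding proposition, so $\Psi_a(t) \to \infty$ for all $a$ and, by Proposition~\ref{prop:limrat.N.Psi}, it suffices to prove $\bar\psi_a(t) \to w^*_a(\vmu)$ for each $a$. First I would establish the key asymptotic estimate for the sampling probabilities: using that $\Pi_t(\Hinf)\to 0$ and that the event $\{\min_j\theta_j<\gamma\}$ is, up to negligible probability, the disjoint union of the events $\{\theta_a<\gamma\}$, write
\[
  \psi_a(t) ~=~ \frac{\Pi_{t-1}(a=\arg\min_j\theta_j,\ \min_j\theta_j<\gamma)}{\Pi_{t-1}(\min_j\theta_j<\gamma)} ~\sim~ \frac{\Pi_{t-1}(\theta_a<\gamma)}{\sum_j \Pi_{t-1}(\theta_j<\gamma)},
\]
and then invoke Proposition~\ref{prop:post.conc.rate} with $\tilde\Theta = \{\theta : \theta_a<\gamma\}$ to get $\frac1t\ln\Pi_t(\theta_a<\gamma) \to -\bar\psi_a(t)\,d(\mu_a,\gamma)$ (the minimizing $\vlambda$ perturbs only coordinate $a$ to $\gamma$). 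This yields the asymptotic recurrence $\psi_a(t) \doteq \exp(-t\bar\psi_a(t)d_a)\big/\sum_j \exp(-t\bar\psi_j(t)d_j)$ with $d_a := d(\mu_a,\gamma)$.

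Next I would turn this soft recurrence into convergence by a squeezing argument. Define the candidate limit $w^*_a := (1/d_a)/\sum_j(1/d_j)$. The heart of the argument is to show that $\bar\psi_a(t)$ cannot have a subsequential limit above $w^*_a$ nor below it. For the upper bound: suppose $\limsup_t \bar\psi_a(t) > w^*_a + \epsilon$ for some $a$. Along a subsequence where $\bar\psi_a$ is near its limsup, the exponent $t\bar\psi_a(t)d_a$ would dominate $t\bar\psi_b(t)d_b$ for the arm(s) $b$ with the smallest $\bar\psi_b d_b$, forcing $\psi_a(t)\to 0$, which pulls $\bar\psi_a$ down — contradicting that $\bar\psi_a$ stays near a positive limsup, since $\bar\psi_a(t+1)-\bar\psi_a(t) = (\psi_{a}(t+1)-\bar\psi_a(t))/(t+1)$ and a Cesàro-type argument shows $\bar\psi_a$ must decrease past the limsup. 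Symmetrically, $\liminf_t \bar\psi_a(t) < w^*_a - \epsilon$ is impossible because then $\psi_a(t)$ would be bounded below (its exponent is not the largest), again contradicting the liminf. Running these two inequalities for the arm realizing the extreme value of $\bar\psi_b d_b$ and bootstrapping across arms (using $\sum_a\bar\psi_a(t)=1$) pins down $\bar\psi_a(t)\to w^*_a$ for all $a$. This is essentially the content of \cite[Section~G.1]{Russo16} adapted to our $\Hinf$-conditioned sampling, and I would reuse their Lemmas on the stability of such recurrences wherever possible.

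Finally, with $\bar\psi_a(t)\to w^*_a(\vmu)$ established a.s., Proposition~\ref{prop:limrat.N.Psi} applied to each singleton $\cS=\{a\}$ (legitimate since $\Psi_a(t)\to\infty$) gives $N_a(t)/t = (N_a(t)/\Psi_a(t))(\Psi_a(t)/t) \to w^*_a(\vmu)$, i.e.\ $\bm N(t)/t \to \w^*(\vmu)$ a.s., as claimed. I expect the main obstacle to be the rigorous squeezing step: the estimates from Proposition~\ref{prop:post.conc.rate} are only accurate to sub-exponential factors ($\doteq$, $\sim$ rather than $=$), so one must be careful that the $o(t)$ slack in the exponents does not swamp the $\Theta(t)$ separation between the dominant and sub-dominant arms — this requires noting that for any $\epsilon>0$, eventually all exponents are within $\epsilon t$ of their nominal values, and choosing $\epsilon$ small relative to the gaps $|\,\bar\psi_a d_a - \bar\psi_b d_b\,|$ that arise along the relevant subsequences. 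Handling the degenerate possibility that several $\bar\psi_a d_a$ coincide in the limit (which is exactly the fixed-point condition) is what forces the bootstrapping structure rather than a one-shot contradiction, and is the delicate bookkeeping part of the proof.
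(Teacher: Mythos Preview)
Your setup matches the paper: reduce to $\bar\psi_a(t)\to w^*_a$ via Proposition~\ref{prop:limrat.N.Psi}, and extract from Proposition~\ref{prop:post.conc.rate} the asymptotic $\psi_a(n)\doteq e^{-n\bar\psi_a(n)d_a}/\sum_j e^{-n\bar\psi_j(n)d_j}$ with $d_a=d(\mu_a,\gamma)$. The divergence is in how convergence of $\bar{\bm\psi}$ is then deduced.

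The paper does \emph{not} run a two-sided limsup/liminf squeeze. It invokes Proposition~\ref{prop:finitesum} (Russo's Lemma~11), whose hypothesis is purely one-sided: for every arm $a$ and every $c>0$, $\sum_n \psi_a(n)\,\ind\{\bar\psi_a(n)\ge w^*_a+c\}<\infty$. This is verified in one stroke: whenever $\bar\psi_a(n)\ge w^*_a+c$, bound $\psi_a(n)\le \Pi_{n-1}(\theta_a<\gamma)\big/\max_j\Pi_{n-1}(\theta_j<\gamma)$, apply the rate from Proposition~\ref{prop:post.conc.rate}, and use the algebraic fact $\min_j \bar\psi_j(n)d_j\le \max_{\w}\min_j w_j d_j = w^*_a d_a$ (equation~\eqref{eq:chain}) to get $\psi_a(n)\le e^{-n(cd_a-2\epsilon_n)}$, which is summable. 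No lower bound on $\psi_a$ and no liminf argument is ever needed; the simplex constraint $\sum_a\bar\psi_a=1$ is what makes the one-sided hypothesis of Proposition~\ref{prop:finitesum} sufficient, and that bookkeeping is delegated entirely to Russo's lemma.

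Your squeeze, as written, has a genuine gap. Showing $\psi_a(t)\to 0$ along the subsequence where $\bar\psi_a(t)$ sits near its limsup $L>w^*_a$ only tells you $\bar\psi_a$ is locally decreasing there; it does \emph{not} preclude $\bar\psi_a$ from climbing back to $L$ infinitely often via large values of $\psi_a$ at times when $\bar\psi_a$ is below $w^*_a+c$ (where your exponent comparison gives no control). A bare Ces\`aro argument cannot close this loop --- what is actually needed is the quantitative statement ``the total contribution to $\Psi_a$ from overshooting times is finite'', which is precisely the hypothesis of Proposition~\ref{prop:finitesum}. So either you reprove that lemma (that is where the real work hides), or you invoke it and verify its one-sided condition as the paper does; your bootstrapping/liminf half then becomes unnecessary.
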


\begin{proof}
  Let us abbreviate $\w^* \equiv \w^*(\vmu)$. By Proposition~\ref{prop:limrat.N.Psi}, it suffices to show $\bar{\bm \psi}(t) \to \w^*$. We will show this by applying Proposition~\ref{prop:finitesum} below.
First, recall from Lemma~\ref{lem:lbd} that
\begin{equation}\label{eq:chain}
  T^*(\vmu)^{-1}
  ~=~
  \max_{\w} \min_{\vlambda : \min_a \lambda_a < \gamma} \sum_a w_a d \del*{\mu_a, \lambda_a}
  ~=~
  \max_\w \min_a w_a d\del*{\mu_a, \gamma}
  ~=~
  w^*_a d(\mu_a, \gamma) \quad \forall a
  .
\end{equation}
Furthermore, by Proposition~\ref{prop:post.conc.rate}, for any $a \in [K]$
\[
  \Pi_n(\theta_a < \gamma)
  ~\doteq~
  \exp \del*{- n
    \min_{\vtheta : \min_a \theta_a < \gamma} \sum_a \bar{\psi}_a(n) d\del*{\mu_a, \theta_a}}
  ~=~
  \exp \del*{- n
    \min_{a} \bar{\psi}_a(n) d \del*{\mu_a, \gamma}}
  .
\]
In particular, there is a sequence $\epsilon_n$ decreasing to zero such that
\[
  \forall n:\quad
  \Pi_n(\theta_a < \gamma)
  ~\in~
  \exp \del*{- n \del*{
      \min_{a} \bar{\psi}_a(n) d \del*{\mu_a, \gamma} \pm \epsilon_n}}
  .
\]
To establish the precondition of Proposition~\ref{prop:finitesum} below, fix $a \in [K]$ and $c>0$ and consider any round $n$ where $\bar\psi_a(n) \ge w^*_a + c$. Then
  \begin{align*}
    \psi_a(n)
    &~=~
    \frac{
      \Pi_{n-1}\del*{a = \arg\min_j \theta_j, \min_j \theta_j < \gamma}
    }{
      \Pi_{n-1}\del*{\min_j \theta_j < \gamma}
    }
    ~\le~
    \frac{
      \Pi_{n-1}\del*{\theta_a < \gamma}
    }{
      \max_a \Pi_{n-1}\del*{\theta_a < \gamma}
      }
    \\
    &~\le~
    \frac{
      e^{-n (\bar\psi_a(n) d(\mu_a, \gamma) - \epsilon_n)}
    }{
      \max_a e^{-n (\bar\psi_a(n) d(\mu_a, \gamma) + \epsilon_n)}
    }
      ~=~
      e^{-n \del*{\bar\psi_a(n) d(\mu_a, \gamma) -  \min_a \bar\psi_a(n) d(\mu_a, \gamma) - 2\epsilon_n}}
  \end{align*}
  By \eqref{eq:chain} $\min_a \bar \psi_a(n) d(\mu_a, \gamma) \le \max_\w \min_a w_a d(\mu_a, \gamma) = w^*_a d(\mu_a, \gamma)$. Also $\bar \psi_a(n) \ge w^*_a + c$ so
\[
  \psi_a(n)
  ~\le~
  e^{-n \del*{(w^*_a + c) d(\mu_a, \gamma) -  w^*_a d(\mu_a, \gamma) - 2\epsilon_n}}
  ~=~
  e^{-n (c d(\mu_a, \gamma) - 2\epsilon_n)}
.
\]
Now as $\epsilon_n \to 0$, this establishes eventual exponential decay, hence ensuring that
\[
  \sum_n \psi_a(n) \ind \set*{\bar \psi_a(n) \ge w^*_a + c} < \infty
\]
as required. The conclusion follows from Proposition~\ref{prop:finitesum}.

\end{proof}

\begin{proposition}[{\cite[Simplified version of Lemma 11]{Russo16}}]\label{prop:finitesum}
  Let $\w^* \equiv \w^*(\vmu)$.
  Consider any sampling rule $(A_t)_t$.
  If for any arm $a \in [K]$ and all $c > 0$
  \[
    \sum_n \psi_a(n) \ind \set*{\bar \psi_a(n) \ge w^*_a + c} < \infty
  \]
  then $\bar {\bm \psi}(n) \to \w^*$.

\end{proposition}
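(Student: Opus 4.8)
The goal is to show that $\bar{\bm\psi}(n) \to \w^*$ from the hypothesis that for every arm $a$ and every $c>0$, $\sum_n \psi_a(n)\ind\set{\bar\psi_a(n)\ge w^*_a+c}<\infty$. The plan is to first establish the one-sided statement $\limsup_n \bar\psi_a(n)\le w^*_a$ for each arm $a$, and then deduce the matching lower bound, and hence convergence, purely from the fact that the $\bar\psi_a(n)$ sum to $1$ over $a$ (since $\sum_a\psi_a(t)=1$ for each $t$) together with $\sum_a w^*_a=1$.

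For the upper bound, fix $a$ and $c>0$ and suppose toward a contradiction that $\bar\psi_a(n)\ge w^*_a+c$ for infinitely many $n$. I would track the running sum $\Psi_a(n)=n\bar\psi_a(n)$ and exploit the Lipschitz-type control $|\bar\psi_a(n)-\bar\psi_a(n-1)|\le 2/n$ coming from $\Psi_a(n)=\Psi_a(n-1)+\psi_a(n)$ with $\psi_a(n)\in[0,1]$. Because of this slow variation, each time the sequence $\bar\psi_a(n)$ crosses the level $w^*_a+c$ from below it must spend a \emph{block} of consecutive rounds $[n_1,n_2]$ of length proportional to $n_1$ (roughly, $n_2\gtrsim n_1(1+c/2)$) before it can drop back below $w^*_a+c/2$. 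During such a block, every increment $\psi_a(n)$ with $n$ in the block is counted in $\sum_n \psi_a(n)\ind\set{\bar\psi_a(n)\ge w^*_a+c/2}$, and the total increment over the block is at least $\Psi_a(n_2)-\Psi_a(n_1)\ge (w^*_a+c/2)(n_2-n_1)\ge$ a positive constant times $n_1$. Summing over the infinitely many disjoint blocks (whose left endpoints tend to infinity) forces $\sum_n \psi_a(n)\ind\set{\bar\psi_a(n)\ge w^*_a+c/2}=\infty$, contradicting the hypothesis applied with $c/2$ in place of $c$. Hence $\limsup_n\bar\psi_a(n)\le w^*_a+c$ for all $c>0$, i.e.\ $\limsup_n\bar\psi_a(n)\le w^*_a$.

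Now sum over arms: $1=\sum_a\bar\psi_a(n)$ for every $n$, so $\liminf_n\bar\psi_a(n)=1-\limsup_n\sum_{b\ne a}\bar\psi_b(n)\ge 1-\sum_{b\ne a}\limsup_n\bar\psi_b(n)\ge 1-\sum_{b\ne a}w^*_b=w^*_a$. Combining with the upper bound gives $\lim_n\bar\psi_a(n)=w^*_a$ for every $a$, which is the claim. The main obstacle is making the block argument in the previous paragraph fully rigorous: one must choose the threshold levels ($w^*_a+c$ to enter a block, $w^*_a+c/2$ to leave) carefully so that the slow-variation bound $2/n$ guarantees a block length that is a fixed fraction of its starting index, and then verify that the contributions of distinct blocks to the weighted sum are genuinely disjoint and individually bounded below by a constant multiple of the block's left endpoint, so that their sum diverges. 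The rest is bookkeeping with the telescoping identity for $\Psi_a$.
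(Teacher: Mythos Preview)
The paper does not supply its own proof of this proposition; it is quoted from \cite{Russo16}. Your global strategy---first show $\limsup_n\bar\psi_a(n)\le w^*_a$ for every arm, then extract the matching lower bound from $\sum_a\bar\psi_a(n)=1=\sum_a w^*_a$---is correct, and the second step is fine as written.

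The block argument for the upper bound, however, has a real gap. The displayed inequality $\Psi_a(n_2)-\Psi_a(n_1)\ge(w^*_a+c/2)(n_2-n_1)$ is false: knowing that the \emph{running averages} $\bar\psi_a(n)$ stay above $w^*_a+c/2$ on $[n_1,n_2]$ does not force the \emph{increments} $\psi_a(n)$ to average above that level on the sub-interval. For a concrete counterexample take $w^*_a=0$, arrange $\bar\psi_a(n_1-1)$ just below $c$, set $\psi_a(n_1)=1$ so that $\bar\psi_a(n_1)$ jumps above $c$, and then put $\psi_a(n)=0$ for all $n>n_1$. The resulting block, ending at the first down-crossing of $c/2$, has length comparable to $n_1$ (your slow-variation estimate is correct) but $\Psi_a(n_2)-\Psi_a(n_1)=0$. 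So the slow variation buys you a long block but not a large contribution, and the ``constant times $n_1$'' lower bound fails.

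The fix is simple and dispenses with both the two-level trick and the Lipschitz bound. Work with a single threshold: let $B_c=\{n:\bar\psi_a(n)\ge w^*_a+c\}$ and split it into maximal intervals $[m_1,m_2]$. Then $\bar\psi_a(m_1-1)<w^*_a+c\le\bar\psi_a(m_2)$ gives
\[
\sum_{n=m_1}^{m_2}\psi_a(n)=\Psi_a(m_2)-\Psi_a(m_1-1)>(m_2-m_1+1)(w^*_a+c),
\]
so finiteness of $\sum_{n\in B_c}\psi_a(n)$ forces $|B_c|<\infty$ (an infinite tail interval is excluded since then $\Psi_a(n)\ge n(w^*_a+c)\to\infty$). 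Hence $\limsup_n\bar\psi_a(n)\le w^*_a+c$ for every $c>0$. If you prefer to keep two levels, start each block at the up-crossing of $w^*_a+c/2$ rather than of $w^*_a+c$, so that the left endpoint satisfies $\bar\psi_a(m_1-1)<w^*_a+c/2$ and the telescoping bound above applies.
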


\section{Analysis of LCB}\label{sec:LCB}
The LCB algorithm \eqref{eq:LCB.alg} constructs confidence intervals $[\LCB_a(t), \UCB_a(t)]$. With the Box stopping rule \eqref{eq:tbox} it stops and recommends $<$ when there exists $a$ such that $ \UCB_a(t) < \gamma$. It stops and recommends $>$ when for all $a$, $\LCB_a(t) > \gamma$. When it has not stopped yet, it plays $A_{t+1} = \arg\min_a \LCB_a(t)$ the arm of smallest lower confidence bound.

In this section we show that LCB works fine on $\mathcal H_>$, but has the wrong behavior on $\mathcal H_<$. For simplicity we only consider the Gaussian case, in which confidence intervals have the stylized form \[[\LCB_a(t), \UCB_a(t)] = \sbr*{\hat \mu_a(t) \mp \sqrt{\frac{2 \ln \frac{1}{\delta}}{N_a(t)}}}.\]
Note that LCB is not anytime, as its sampling rule is also a function of the confidence level $1-\delta$. We let $\tau_\delta$ denote the stopping rule associated to the algorithm that combines the LCB sampling rule with the Box stopping rule, both tuned for the confidence level $1-\delta$. 

Let $\mathcal E_\delta = \set*{\forall t \forall a : \abs{\mu_a-\hat\mu_a(t)} \le \sqrt{\frac{2 \ln \frac{1}{\delta}}{N_a(t)}}}$. By design of the confidence intervals $\pr(\mathcal E_\delta^c) \le \delta$\footnote{As it is written this inequality is not actually correct: in the definition of the confidence interval for arm $a$, $\ln(1/\delta)$ should be replaced by $\ln(1/\delta)+c\ln\ln(1/\delta) + d \ln(1+\ln(N_a(t))$ for some constants $c$ and $d$ (see the discussion in Section~\ref{sec:Matching}). However, the reasoning that we present with the stylized confidence intervals can be adapted to handle those correct threshold functions, at the price of extra technicalities (e.g., Lemma 22 in \cite{JMLR15}).}. Moreover, on $\mathcal E_\delta$ the algorithm stops and outputs the correct recommendation.

We first show that LCB/Box is sample efficient on $\Hsup$. 

\begin{proposition}\label{prop:LCB.good} There is a function $\epsilon_\delta \to 0$ decreasing as $\delta \to 0$ such that for every $\vmu \in \Hsup$ 
  \[
    \lim_{\delta \to 0}
    \pr_\vmu \del*{
      \frac{
        \tau_\delta
      }{\ln \frac{1}{\delta}}
      \le
      (1+\epsilon_\delta) T^*(\vmu)
    } ~=~ 1.
  \]
\end{proposition}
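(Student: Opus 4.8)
My plan is to work entirely in the stylized Gaussian setting of the section, where $[\LCB_a(t),\UCB_a(t)]=\sbr*{\hat\mu_a(t)\mp\sqrt{2\ln(1/\delta)/N_a(t)}}$ and $d(\mu_a,\gamma)=(\mu_a-\gamma)^2/2$, and to exhibit a family of events $\mathcal F_\delta$ with $\pr_\vmu(\mathcal F_\delta)\to 1$ on which $\tau_\delta\le(1+\epsilon_\delta)T^*(\vmu)\ln(1/\delta)$ for an explicit $\epsilon_\delta\downarrow 0$. One should first notice why the obvious route is not enough: on the good event $\mathcal E_\delta$ one has $\hat\mu_a(t)\ge\mu_a-\sqrt{2\ln(1/\delta)/N_a(t)}$, hence $\LCB_a(t)>\gamma$ as soon as $N_a(t)>8\ln(1/\delta)/(\mu_a-\gamma)^2$; since LCB pulls the arm of smallest LCB and stops once all LCBs exceed $\gamma$, this gives $\tau_\delta\le 4T^*(\vmu)\ln(1/\delta)+O(1)$, off by the leading constant. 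The point is that at the stopping time every arm that matters has been pulled $\Theta(\ln(1/\delta))$ times, so $\hat\mu_a$ is \emph{relatively} much closer to $\mu_a$ than the $\asymp(\mu_a-\gamma)$ half-width, and this is what must be exploited.

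Concretely, set $\Delta:=\min_a(\mu_a-\gamma)>0$ (positive on $\Hsup$) and pick slowly varying sequences $\beta_\delta\downarrow 0$, $n_\delta\uparrow\infty$ with $n_\delta\beta_\delta^2\to\infty$ and $n_\delta=o(\ln(1/\delta))$ — for instance $\beta_\delta=(\ln\ln(1/\delta))^{-1/2}$ and $n_\delta=\lceil\sqrt{\ln(1/\delta)}\,\rceil$. Realising the observations through the standard i.i.d.\ tape coupling, so that $\hat\mu_a(t)=\hat\mu_{a,N_a(t)}$ where $\hat\mu_{a,n}$ is the average of the first $n$ pulls of arm $a$, define
\[
  \mathcal F_\delta ~:=~ \bigcap_{a}\bigcap_{n\ge n_\delta}\left\{\,|\hat\mu_{a,n}-\mu_a|\le\beta_\delta(\mu_a-\gamma)\,\right\}.
\]
A Gaussian tail bound gives $\pr(|\hat\mu_{a,n}-\mu_a|>\beta_\delta(\mu_a-\gamma))\le 2e^{-n\beta_\delta^2(\mu_a-\gamma)^2/2}$, and summing the geometric series over $n\ge n_\delta$ and over arms,
\[
  \pr_\vmu(\mathcal F_\delta^c)~\le~\frac{2K\,e^{-n_\delta\beta_\delta^2\Delta^2/2}}{1-e^{-\beta_\delta^2\Delta^2/2}}~\longrightarrow~0 \quad\text{as }\delta\to0,
\]
since $n_\delta\beta_\delta^2\to\infty$ dominates $\ln(1/\beta_\delta^2)$ (so the numerator beats the $\asymp\beta_\delta^2$ denominator).

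Next I would run the counting argument on $\mathcal F_\delta$, for $\delta$ small enough that $\bar N_a:=\tfrac{2\ln(1/\delta)}{(1-\beta_\delta)^2(\mu_a-\gamma)^2}\ge n_\delta$ for every $a$ (eventual since $\bar N_a\to\infty$ while $n_\delta=o(\ln(1/\delta))$). Whenever LCB pulls arm $a$ at round $t+1$ it has not stopped, so $\min_b\LCB_b(t)<\gamma$, and being the minimiser, $\LCB_a(t)\le\gamma$, i.e.\ $\hat\mu_a(t)-\sqrt{2\ln(1/\delta)/N_a(t)}\le\gamma$; if moreover $N_a(t)\ge n_\delta$, then on $\mathcal F_\delta$ one has $\hat\mu_a(t)\ge\gamma+(1-\beta_\delta)(\mu_a-\gamma)$, forcing $N_a(t)\le\bar N_a$. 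Since arm $a$ is pulled at most $n_\delta$ times while its count is below $n_\delta$ and, by the above, at most $\bar N_a-n_\delta+1$ times afterwards, we get $N_a(t)\le\bar N_a+1$ for all $t$; summing over $a$ shows $\tau_\delta$ cannot be infinite and $\tau_\delta\le\sum_a(\bar N_a+1)$. Using $\sum_a\tfrac{2}{(\mu_a-\gamma)^2}=\sum_a\tfrac1{d(\mu_a,\gamma)}=T^*(\vmu)$ this reads, on $\mathcal F_\delta$,
\[
  \frac{\tau_\delta}{\ln(1/\delta)}~\le~\frac{T^*(\vmu)}{(1-\beta_\delta)^2}+\frac{K}{\ln(1/\delta)}~=~(1+\epsilon_\delta)\,T^*(\vmu),\qquad \epsilon_\delta:=\frac1{(1-\beta_\delta)^2}-1+\frac{K}{T^*(\vmu)\ln(1/\delta)},
\]
with $\epsilon_\delta\downarrow0$; as $\pr_\vmu(\mathcal F_\delta)\to1$, the claim follows.

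The only delicate point is the joint calibration of $\beta_\delta$ and $n_\delta$: the warm-up length $n_\delta$ must be $o(\ln(1/\delta))$ so that the $n_\delta$/``$+1$'' corrections vanish after normalisation and so that $\bar N_a\ge n_\delta$ eventually, while $\mathcal F_\delta$ requires $\sum_{n\ge n_\delta}e^{-n\beta_\delta^2\Delta^2/2}\to0$; what has to diverge is the \emph{product} $n_\delta\beta_\delta^2$, not either factor alone, and the displayed choices achieve exactly this. Everything else is the definition of the LCB/Box procedure together with elementary Gaussian concentration; note the argument does not even invoke $\mathcal E_\delta$, which is only needed for correctness (handled separately around the statement).
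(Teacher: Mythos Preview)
Your proof is correct. The architecture is the same as the paper's (pull counting: whenever LCB selects arm $a$ one has $\LCB_a(t)\le\gamma$, which on a high-probability concentration event forces $N_a(t)\lesssim 2\ln(1/\delta)/(\mu_a-\gamma)^2$), but you realise the concentration differently. The paper works on the nested event $\mathcal E_{\delta^\kappa}$ for $\kappa\in(0,1)$, which gives $|\hat\mu_a(t)-\mu_a|\le\sqrt{\kappa}\sqrt{2\ln(1/\delta)/N_a(t)}$ uniformly in $t$ and sends $\kappa=(\ln(1/\delta))^{-1/2}\to0$; this yields directly $N_a(\tau)\le(1+\sqrt\kappa)^2\cdot 2\ln(1/\delta)/(\mu_a-\gamma)^2$ with a uniform $\epsilon_\delta=(1+\sqrt\kappa)^2-1$. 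Your route instead builds a fixed-precision event $\mathcal F_\delta$ after a warm-up $n_\delta$ via elementary per-$n$ Gaussian tails, which is longer but more self-contained: it does not rely on the time-uniform statement $\pr(\mathcal E_\delta^c)\le\delta$ that the paper itself flags (in the footnote) as not literally true for the stylized intervals. One cosmetic point: your displayed $\epsilon_\delta$ contains the term $K/(T^*(\vmu)\ln(1/\delta))$ and hence depends on $\vmu$, whereas the statement asks for a single $\epsilon_\delta$. This is easily repaired by taking, say, $\epsilon_\delta=(1-\beta_\delta)^{-2}-1+\beta_\delta$ and noting that for each fixed $\vmu$ one has $K/\ln(1/\delta)\le\beta_\delta\,T^*(\vmu)$ for $\delta$ small enough, so the limit in probability is unaffected.
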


\begin{proof}
  Let $\kappa \in (0,1)$.
  On the event $\mathcal E_{\delta^\kappa} \subseteq \mathcal E_\delta$ the algorithm (for confidence $\delta$) stops and outputs the correct recommendation, yielding
\[
  \forall a : \LCB_a(\tau) > \gamma.
\]
Moreover, by the sampling rule we have $\forall a : \LCB_a(\tau) \approx \gamma$, and we will ignore the difference. We find
\[
  \gamma
  ~\approx~
  \LCB_a(\tau)
  ~=~
  \hat \mu_a(\tau) - \sqrt{\frac{2 \ln \frac{1}{\delta}}{N_a(\tau)}}
  ~\ge~
  \mu_a - \sqrt{\kappa \frac{2 \ln \frac{1}{\delta}}{N_a(\tau)}} - \sqrt{\frac{2 \ln \frac{1}{\delta}}{N_a(\tau)}}
  ~=~
  \mu_a - \del*{1 + \sqrt \kappa} \sqrt{\frac{2 \ln \frac{1}{\delta}}{N_a(\tau)}}
  .
\]
We conclude
\(
  N_a(\tau)
  \le
  \frac{2 (1 + \sqrt \kappa)^2 \ln \frac{1}{\delta}}{
    (\mu_a - \gamma)^2
  }  
\)
and hence
\[
  \tau ~=~ \sum_a N_a(\tau)
  ~\le~
  (1 + \sqrt \kappa)^2 \ln \frac{1}{\delta}
  \frac{2}{
    (\mu_a - \gamma)^2
  }
  ~=~
    (1 + \sqrt \kappa)^2 \ln \frac{1}{\delta}
    T^*(\vmu)
  .
\]
The result follows by picking $\kappa = \frac{1}{\sqrt{-\ln \delta}}$, achieving $\kappa \to 0$ as $\delta \to 0$ yet $\pr(\mathcal E_{\delta^\kappa}^c) \le \delta^\kappa \to 0$.
\end{proof}

Next we consider the behavior on $\Hinf$. We characterize the inefficiency of LCB/Box on $\Hinf$.

\begin{proposition}\label{prop:LCB.bad}
  There is a function $\epsilon_\delta \to 0$ decreasing as $\delta \to 0$ such that for every bandit model $\vmu \in \Hinf$ with $\mu_1 < \gamma < \mu_2 \le \ldots$ i.e.\ on which there is only a single arm below the threshold,
  \[
    \lim_{\delta \to 0}
    \pr_\vmu \del*{
      \forall a \neq a^*(\vmu) :
      \frac{
        N_a(\tau)
      }{\ln \frac{1}{\delta}}
      \ge
      (1-\epsilon_\delta)
      \frac{2}{
        \del*{
          \mu_a + \gamma - 2 \mu_1
        }^2
      }
    } ~=~ 1.
  \]
  \todo[inline]{These weights also seem to arise for different $\vmu$ in the experiments. Does the proof suggest why? Can we prove it?}
\end{proposition}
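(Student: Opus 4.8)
The plan is to mimic the proof of Proposition~\ref{prop:LCB.good}. Fix $\kappa=\kappa_\delta=1/\sqrt{-\ln\delta}$, so that $\kappa_\delta\to0$ and $\pr_\vmu(\mathcal E_{\delta^\kappa}^c)\le\delta^\kappa\to0$, and work throughout on the event $\mathcal E_{\delta^\kappa}\subseteq\mathcal E_\delta$, on which $|\hat\mu_a(t)-\mu_a|\le\sqrt\kappa\,\beta_a(t)$ for all $a,t$ (writing $\beta_a(t):=\sqrt{2\ln(1/\delta)/N_a(t)}$ for the confidence radius used by the algorithm) and the LCB/Box algorithm eventually stops with the correct recommendation ``$<$''. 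The first step is to pin down \emph{why} it stops. Since $\mu_a>\gamma$ for every $a\neq1$ (recall $a^*(\vmu)=1$), on $\mathcal E_{\delta^\kappa}$ we have $\UCB_a(t)=\hat\mu_a(t)+\beta_a(t)\ge\mu_a+(1-\sqrt\kappa)\beta_a(t)>\gamma$ for all such $a$ and all $t$; hence the stopping condition $\min_a\UCB_a(\tau)\le\gamma$ can only be met through arm~$1$, i.e.\ $\UCB_1(\tau)<\gamma$ (that $\tau<\infty$ on this event is a routine consistency argument: if arm~$1$ were sampled only finitely often its $\LCB$ would freeze strictly below $\mu_1$, while the smallest $\LCB$ among the remaining, infinitely sampled, arms converges to a value $>\gamma>\mu_1$, forcing arm~$1$ to be resampled). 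Inserting $\hat\mu_1(\tau)\ge\mu_1-\sqrt\kappa\,\beta_1(\tau)$ into $\UCB_1(\tau)<\gamma$ gives $\beta_1(\tau)<(\gamma-\mu_1)/(1-\sqrt\kappa)$, that is
\[
  N_1(\tau)\;>\;\frac{2(1-\sqrt\kappa)^2}{(\gamma-\mu_1)^2}\,\ln\tfrac1\delta ,
\]
which in particular tends to $\infty$ as $\delta\to0$.

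The core of the argument is then to examine the \emph{last} round $r\le\tau$ at which arm~$1$ is played; it exists once $\delta$ is small enough (as $N_1(\tau)\to\infty$), and $N_1(r-1)=N_1(\tau)-1$. Because $A_r=\arg\min_b\LCB_b(r-1)$ equals $1$, we have $\LCB_1(r-1)\le\LCB_a(r-1)$ for every arm $a$. For fixed $a\neq1$, feeding the bounds $\LCB_1(r-1)\ge\mu_1-(1+\sqrt\kappa)\beta_1(r-1)$ and $\LCB_a(r-1)\le\mu_a-(1-\sqrt\kappa)\beta_a(r-1)$ (both valid on $\mathcal E_{\delta^\kappa}$) into this inequality yields
\[
  (1-\sqrt\kappa)\,\beta_a(r-1)\;\le\;(\mu_a-\mu_1)+(1+\sqrt\kappa)\,\beta_1(r-1) .
\]
Now $\beta_1(r-1)^2=2\ln(1/\delta)/(N_1(\tau)-1)$, so the lower bound on $N_1(\tau)$ gives $\beta_1(r-1)\le\frac{\gamma-\mu_1}{1-\sqrt\kappa}\,(1+o_\delta(1))$. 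Plugging this in and using $\mu_a+\gamma-2\mu_1=(\mu_a-\mu_1)+(\gamma-\mu_1)>0$, one obtains $\beta_a(r-1)\le(\mu_a+\gamma-2\mu_1)\,(1+\epsilon_\delta)$ for some $\epsilon_\delta\to0$ (independent of $a$, depending on $\vmu$ only through $\gamma-\mu_1$, exactly as in Proposition~\ref{prop:LCB.good}). Since $N_a(r-1)=2\ln(1/\delta)/\beta_a(r-1)^2$ by definition of $\beta_a$, this rearranges to
\[
  N_a(\tau)\;\ge\;N_a(r-1)\;\ge\;\frac{(1-\epsilon'_\delta)\,2}{(\mu_a+\gamma-2\mu_1)^2}\,\ln\tfrac1\delta ,
  \qquad \epsilon'_\delta=1-(1+\epsilon_\delta)^{-2}\to0 ;
\]
as this holds on $\mathcal E_{\delta^\kappa}$ simultaneously for all $a\neq1$ and $\pr_\vmu(\mathcal E_{\delta^\kappa})\to1$, the claimed statement follows.

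The step I expect to be the real obstacle is conceptual rather than computational: recognizing that $2\mu_1-\gamma$ is the relevant reference level, because reaching the stopping condition forces the algorithm to raise $\LCB_1$ up to (essentially) $2\mu_1-\gamma$, and that until then arm~$1$ — hence any arm whose $\LCB$ is not yet above $\LCB_1$, in particular arm~$a$ as long as $\beta_a$ exceeds $\mu_a-(2\mu_1-\gamma)=\mu_a+\gamma-2\mu_1$ — keeps getting sampled. The clean technical device that captures this is to evaluate the LCB rule precisely at the last round at which arm~$1$ is played, where $N_1$ has already attained essentially its terminal value $\tfrac{2}{(\gamma-\mu_1)^2}\ln\frac1\delta$. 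The remaining points (the off-by-one $N_1(r-1)=N_1(\tau)-1$, the $(1\pm\sqrt\kappa)$ factors, and the precise form of $\epsilon_\delta$) are handled as in the proof of Proposition~\ref{prop:LCB.good} and present no new difficulty.
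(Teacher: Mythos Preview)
Your proof is correct and follows the same route as the paper: work on $\mathcal E_{\delta^\kappa}$, lower-bound $N_1(\tau)$ from $\UCB_1(\tau)\le\gamma$, then feed the LCB sampling rule into $\LCB_1\le\LCB_a$ to upper-bound $\beta_a$ and hence lower-bound $N_a(\tau)$. Your device of evaluating the comparison at the last round $r$ in which arm~$1$ is played is exactly the rigorous version of the paper's slightly informal steps (the paper asserts $\LCB_1(\tau)\le\LCB_a(\tau)$ at $\tau$ itself and uses $\UCB_1(\tau)\approx\gamma$); on the good event one has $A_\tau=1$, so $r=\tau$ and the two arguments coincide up to the off-by-one you already handle.
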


\begin{proof}
Let $\kappa \in (0,1)$. We analyse the algorithm on the event $\mathcal E_{\delta^\kappa} \subseteq \mathcal E_\delta$, on which it stops and recommends the correct output. At that time $\tau$, we know
\[
  \UCB_1(\tau) ~\le~ \gamma
  \qquad
  \text{and also}
  \qquad
  \forall a:
  \LCB_1(\tau) ~\le~ \LCB_a(\tau)
  .
\]
Since we are on the event $\mathcal E_{\delta^\kappa}$, we know
\[
  \gamma
  ~\ge~ \UCB_1(\tau)
  ~=~
  \hat \mu_1(\tau) + \sqrt{\frac{2 \ln \frac{1}{\delta}}{N_1(\tau)}}
  ~\ge~
  \mu_1
  + \sqrt{\kappa \frac{2 \ln \frac{1}{\delta}}{N_1(\tau)}}
  + \sqrt{\frac{2 \ln \frac{1}{\delta}}{N_1(\tau)}}
  ~=~
  \mu_1
  + \del*{1 + \sqrt{\kappa}} \sqrt{\frac{2 \ln \frac{1}{\delta}}{N_1(\tau)}}
  .
\]
On the other hand, we know for each other arm $a \neq 1$ that
\[
  \sqrt{\frac{2 \ln \frac{1}{\delta}}{N_a(\tau)}}
  ~=~
  \hat\mu_a(\tau) - \LCB_a(\tau)
  ~\le~
  \hat\mu_a(\tau)
  - \LCB_1(\tau)
  ~\le~
  \mu_a + \sqrt{\kappa \frac{2 \ln \frac{1}{\delta}}{N_a(\tau)}}
  - \LCB_1(\tau)
  .
\]
Finally, since $\LCB_1(\tau) = \UCB_1(\tau) - 2 \sqrt{\frac{2 \ln \frac{1}{\delta}}{N_1(\tau)}}$ and $\UCB_1(\tau) \approx \gamma$ (we will ignore the difference), we find
\[
  \del*{1 - \sqrt{\kappa}}
  \sqrt{\frac{2 \ln \frac{1}{\delta}}{N_a(\tau)}}
  ~\le~
  \mu_a - \gamma + 2 \sqrt{\frac{2 \ln \frac{1}{\delta}}{N_1(\tau)}}
    ~\le~
  \mu_a - \gamma +   2
  \frac{
    \gamma - \mu_1
  }{
    1 + \sqrt{\kappa}
  }
\]
All in all, this shows
\[
  N_a(\tau)
  ~\ge~
  \frac{2 (1 - \sqrt{\kappa})^2 \ln \frac{1}{\delta}}{
    \del*{
      \mu_a - \gamma +   2
      \frac{
        \gamma - \mu_1
      }{
        1 + \sqrt{\kappa}
      }
    }^2
  }
  ~\to~
  \frac{2}{
    \del*{
      \mu_a + \gamma - 2 \mu_1
    }^2
  }
  .
\]
The result follows by considering the sequence $\kappa$ exhibited in the proof of Proposition~\ref{prop:LCB.good}.
\end{proof}

Now this demonstrates a problem, since Lemma~\ref{lem:lbd} shows that optimal algorithms necessarily have $\frac{N_a(\tau)}{\ln \frac{1}{\delta}} \to 0$, but instead for LCB it tends to a specific positive constant. In other words, a non-vanishing hence significant portion of the samples are wasted ``exploring'' suboptimal arms. 

\todo[inline]{Can we say something beyond the current Gaussian KL?}

\section{Proof of the Deviation Inequality (Theorem~\ref{thm:mainDev})} \label{sec:ProofDev} 

To ease the notation, we introduce $\mu^{\min}_{\cS} = \min_{a \in \cS} \mu_a$ and $\mu^{\max}_{\cS} = \max_{a \in \cS} \mu_a$. Fix $\eta > 0$ and $c>0$ and define 
\begin{eqnarray*}X_{\eta,c}(t)^+ & = & \left[N_\cS(t)d^+\left(\hat{\mu}_{\cS}(t),\mu^{\min}_{\cS}\right) - c(1+\eta)\ln\left(1 + \ln N_\cS(t)\right)\right]\\
X_{\eta,c}(t)^- & = & \left[N_\cS(t)d^-\left(\hat{\mu}_{\cS}(t),\mu^{\max}_{\cS}\right) - c(1+\eta)\ln\left(1 + \ln N_\cS(t)\right)\right]
\end{eqnarray*}
Throughout the proof we use the notation $X_{\eta,c}(t)$ to refer to either $X_{\eta,c}(t)^+$ or $X_{\eta,c}(t)^-$. The cornerstone of the proof is the following Lemma~\ref{lem:Heart}, that tells us that $e^{\lambda X_{\eta,c}(t)}$ can be ``almost'' upper-bounded by some martingale. 

\begin{lemma}\label{lem:Heart} Assume $1 + \eta \leq e$. Fix $X_{\eta,c}(t) = X_{\eta,c}(t)^+$ or $X_{\eta,c}(t)^-$. For every $\lambda \in [0 , (1+\eta)^{-1}[$ there exists a martingale $M_t^\lambda$ such that $\bE[M_t^\lambda] = 1$ and
\begin{equation}\forall t \in \N^*, M_{t}^\lambda \geq e^{\lambda X_{\eta,c}(t)- g_{\eta,c}(\lambda)},\label{Star}\end{equation}
with $g_{\eta,c}(\lambda) = \lambda (1+\eta)\ln \left(\frac{\zeta(c)}{\ln(1+\eta)^c}\right) - \ln (1 - \lambda(1+\eta))$.
\end{lemma}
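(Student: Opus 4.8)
The plan is the two-mixtures construction sketched in Section~\ref{sec:ProofDev}. I will carry it out for the ``plus'' version $X_{\eta,c}(t)=X_{\eta,c}(t)^+$; the ``minus'' version will be symmetric (swap $\mu^{\min}_{\cS}$ for $\mu^{\max}_{\cS}$, deviations below for deviations above, and the half-line $\xi\le 0$ for $\xi\ge 0$), needing no new idea. Write $\nu_a$ for the natural parameter of $\mu_a$, $\nu^{\min}$ for that of $\mu^{\min}_{\cS}$, $\phi_a(\xi)=b(\nu_a+\xi)-b(\nu_a)$ and $\phi^{\min}(\xi)=b(\nu^{\min}+\xi)-b(\nu^{\min})$, and restrict to rounds with $N_{\cS}(t)\ge 1$. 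The elementary building blocks are, for each fixed $\xi\le 0$ in the natural-parameter domain, the nonnegative martingales $W_t^\xi=\exp\!\big(\xi S_{\cS}(t)-\sum_{s\le t,\,A_s\in\cS}\phi_{A_s}(\xi)\big)$, with $\bE[W_t^\xi]=1$ since $A_s$ is $\cF_{s-1}$-measurable and $X_s\sim\mu_{A_s}$ lies in the stated exponential family (the fact exploited in \cite{KLUCBJournal}). The key subset observation is that for $\xi\le 0$ the map $\nu\mapsto b(\nu+\xi)-b(\nu)$ is nonincreasing (its derivative $\dot b(\nu+\xi)-\dot b(\nu)$ is $\le 0$), so $\mu_a\ge\mu^{\min}_{\cS}$ forces $\phi_a(\xi)\le\phi^{\min}(\xi)$ for every $a\in\cS$; summing over the pulled arms gives $\ln W_t^\xi\ge\xi S_{\cS}(t)-N_{\cS}(t)\phi^{\min}(\xi)$, and a Legendre computation (using the definition of $d$) shows $\sup_{\xi\le 0}\big[\xi S_{\cS}(t)-N_{\cS}(t)\phi^{\min}(\xi)\big]=N_{\cS}(t)\,d^+(\hat\mu_{\cS}(t),\mu^{\min}_{\cS})$.

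The first mixture is where the $\ln\ln$ correction is produced, and I expect it to be the main obstacle. Mixing $(W_t^\xi)_\xi$ against a prior $\pi_x$ yields a nonnegative martingale $\tilde W_t^x$ with $\tilde W_0^x=1$, and the aim is to choose $\pi_x$ so that, with $f(\eta,c):=(1+\eta)\ln\!\big(\zeta(c)/(\ln(1+\eta))^{c}\big)$,
\[
  \big\{X_{\eta,c}(t)-f(\eta,c)\ge x\big\}~\subseteq~\big\{\tilde W_t^x\ge e^{x/(1+\eta)}\big\}.
\]
Because both the maximiser of $\xi S_{\cS}(t)-N_{\cS}(t)\phi^{\min}(\xi)$ and the width of the set on which this objective is near-maximal depend on the unknown $N_{\cS}(t)$, a flat prior would lose a full $\ln N_{\cS}(t)$; instead $\pi_x$ is taken discrete along a geometric grid of ``sample-size scales'' of ratio $1+\eta$, with mass $\propto k^{-c}$ on the $k$-th scale (normalisation $\zeta(c)$). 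When $N_{\cS}(t)$ sits at scale $k$ one has $k\asymp\ln N_{\cS}(t)/\ln(1+\eta)$, so the weight loss is $c\ln\ln N_{\cS}(t)-c\ln\ln(1+\eta)+\ln\zeta(c)$ — i.e.\ exactly the $c(1+\eta)\ln(1+\ln N_{\cS}(t))$ subtracted inside $X_{\eta,c}$ together with the additive constant $f(\eta,c)$ — while the geometric spacing of the grid accounts for the remaining $(1+\eta)$ factors and the residual exponent $x/(1+\eta)$ rather than $x$. The hypothesis $1+\eta\le e$, i.e.\ $\ln(1+\eta)\le 1$, is what keeps $f(\eta,c)\ge 0$ and the grid losses under control. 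This step combines the method of mixtures \cite{DeLaPenaal09Book} with the exponential-family estimates of \cite{KLUCBJournal}, and is the delicate part of the argument.

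The second mixture and the normalisation are then routine bookkeeping. Set $\overline M_t^\lambda=1+\int_1^\infty e^{-\ln z/(\lambda(1+\eta))}\,\tilde W_t^{\frac1\lambda\ln z}\,dz$; being an affine functional of nonnegative martingales it is again one, with $\overline M_0^\lambda=1+\int_1^\infty z^{-1/(\lambda(1+\eta))}\,dz=\frac{1}{1-\lambda(1+\eta)}$, the integral converging precisely because $\lambda<(1+\eta)^{-1}$. On $\{X_{\eta,c}(t)-f(\eta,c)\ge 0\}$, the containment from the first mixture, applied with $y=\tfrac1\lambda\ln z$ (so that $\tilde W_t^y\ge e^{y/(1+\eta)}=e^{\ln z/(\lambda(1+\eta))}$), makes the integrand $\ge 1$ for all $z\in[1,e^{\lambda(X_{\eta,c}(t)-f(\eta,c))}]$, whence $\overline M_t^\lambda\ge e^{\lambda(X_{\eta,c}(t)-f(\eta,c))}$; the bound is trivial otherwise. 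Also $\bE[\overline M_t^\lambda]=\overline M_0^\lambda=\frac{1}{1-\lambda(1+\eta)}$ since $\bE[\tilde W_t^x]=1$. Taking $M_t^\lambda:=(1-\lambda(1+\eta))\,\overline M_t^\lambda$ gives a nonnegative martingale with $\bE[M_t^\lambda]=1$ and
\[
  M_t^\lambda~\ge~(1-\lambda(1+\eta))\,e^{\lambda(X_{\eta,c}(t)-f(\eta,c))}~=~e^{\lambda X_{\eta,c}(t)-g_{\eta,c}(\lambda)},
\]
with $g_{\eta,c}(\lambda)=\lambda f(\eta,c)-\ln(1-\lambda(1+\eta))=\lambda(1+\eta)\ln\!\big(\zeta(c)/(\ln(1+\eta))^{c}\big)-\ln(1-\lambda(1+\eta))$, which is precisely \eqref{Star}. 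In summary, the real work is the scale-stitching in the first mixture; the rest is the standard ``mixture $+$ Chernoff'' machinery, and the $X_{\eta,c}^-$ case is handled symmetrically.
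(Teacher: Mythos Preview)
Your proposal is correct and follows essentially the same route as the paper: the same per-arm exponential martingales $W_t^\xi$, the subset monotonicity $\phi_a(\xi)\le\phi^{\min}(\xi)$ for $\xi\le 0$, a first discrete mixture over a geometric grid of sample-size scales with weights $\gamma_i=1/(\zeta(c)i^c)$ to obtain $\tilde W_t^x$ and the containment \eqref{eq:Central}, then the second continuous mixture $\overline M_t^\lambda=1+\int_1^\infty z^{-1/(\lambda(1+\eta))}\tilde W_t^{\frac{1}{\lambda}\ln z}\,dz$ and the normalisation by $1-\lambda(1+\eta)$. The only place your write-up stays at the level of intuition is the first mixture, where the paper makes the atom locations explicit via $\lambda_i^+(x+(1+\eta)\ln(1/\gamma_i))$ (Lemma~\ref{lem:StepOne}) and checks the grid inequality $c(1+\eta)\ln(1+\ln N_\cS(t))+(1+\eta)\ln C(\eta)\ge(1+\eta)\ln(1/\gamma_i)$ using $1+\eta\le e$; your informal accounting of those constants is right.
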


The deviation inequality follows by combining Chernoff's method with Doob's inequality, and then carefully picking $\eta$ and $c$. For any $\lambda \in [0 , (1+\eta)^{-1}[$, by Lemma~\ref{lem:Heart},
\begin{eqnarray*}
\bP\left(\exists t \in \N^* : X_{\eta,c}(t) > u \right) & \leq & \bP\left(\exists t \in \N^* : e^{\lambda X_{\eta,c}(t)} > e^{\lambda u} \right) \\
& \leq &  \bP\left(\exists t \in \N^* : M_t^\lambda > e^{\lambda u - g_{\eta,c}(\lambda)} \right) \\
& \leq & \exp\left(- \left[\lambda u  - g_{\eta,c}(\lambda)\right]\right).
\end{eqnarray*}
Then we want to apply this inequality to the best possible $\lambda$. Defining
\[g^*_{\eta,c}(u) = \max_{\lambda \in \left(0,\frac{1}{1+\eta}\right)} \left[\lambda u  - g_{\eta,c}(\lambda)\right],\]
a direct computation of this Fenchel conjugate (see Lemma~\ref{lem:FenchelCalculation} in Appendix~\ref{sec:TechnicalResults}) yields
\[g^*_{\eta,c}(u) = h\left(\frac{u}{1+\eta} - \ln \left(\frac{\zeta(c)}{(\ln(1+\eta))^c}\right)\right)-1,\]
for $\frac{u}{1+\eta} - \ln \left(\frac{\zeta(c)}{(\ln(1+\eta))^c}\right) > 1$, where we recall that $h(x) = x - \ln(x)$. 

Using the inequality $\ln(1+\eta))^{-1} \leq 1 + \eta^{-1}$
implies that, for $\frac{u}{1+\eta} - \ln \left(\zeta(c)\right) - c\ln\left(1+\frac{1}{\eta}\right) \geq 1$,
\[\bP\left(\exists t \in \N^* : X_{\eta,c}(t) > u \right) \leq \exp\left(- \left[h\left(\frac{u}{1+\eta} - \ln \left(\zeta(c)\right) -c \ln \left(1 + \frac{1}{\eta}\right)\right) - 1\right]\right).\]

For the sake of clarity, we now pick $X_{\eta,c}(t) = X_{\eta,c}^+(t)$. Picking $\eta^* = \frac{c}{u-c}$ (that minimizes the right hand side) it holds that
\begin{align*}
& \bP\left(\exists t \in \N^* : \left[N_\cS(t)d^+\left(\hat{\mu}_{\cS}(t),\mu^{\min}_{\cS}\right) - \frac{cu}{u-c}\ln\left(1 + \ln N_\cS(t)\right)\right]^+ \geq u \right) \\
&\leq \exp\left(- \left[h\left(u - c - \ln \left(\zeta(c)\right) -c \ln \left(\frac{u}{c}\right)\right) - 1\right]\right) \\
&\leq \exp\left(- \left[h\left(ch\left(\frac{u}{c}\right) - c - \ln \left(\zeta(c)\right) \right) - 1\right]\right)
\end{align*}
whenever $u$ is such that $h\left(\frac{u}{c}\right) \geq 1 + \frac{1+\ln \zeta(c)}{c}$ and $ 1+\eta^* = \frac{u}{u-c} \leq e$. 

Picking $c = 2$, for all $u \geq 6$ the three conditions $\frac{cu}{u-c} \leq 3$, $h\left(\frac{u}{c}\right) \geq 1 + \frac{1+\ln \zeta(c)}{c}$ and $\frac{u}{u-c} \leq e$ are satisfied and one has
\begin{align*}
& \bP\left(\exists t \in \N^* : \left[N_\cS(t)d^+\left(\hat{\mu}_{\cS}(t),\mu^{\min}_{\cS}\right) - 3\ln\left(1 + \ln N_\cS(t)\right)\right]^+ \geq u \right) \leq e^{- \left[h\left(2h\left(\frac{u}{2}\right) - 2 - \ln \left(\zeta(2)\right) \right) - 1\right]}
\end{align*}
Picking $u$ (large enough) such that 
\[h\left(2h\left(\frac{u}{2}\right) - 2 - \ln \left(\zeta(2)\right) \right) - 1 = x  \ \Leftrightarrow \ u = T(x)\]
yields inequality \eqref{ineq:DevPlus} in Theorem~\ref{thm:mainDev}, whenever $T(x) \geq 6$. It can be checked numerically this holds for $x \geq 0.04$. Inequality \eqref{ineq:DevMinus} can be obtained following the same lines by choosing $X_{\eta,c}(t) = X_{\eta,c}^-(t)$.

\qed

Proofs of intermediate results are now given in separate sections. 

\subsection{Building the martingale: proof of Lemma~\ref{lem:Heart}} 

Our goal is to propose a martingale $M_t^\lambda$ that satisfies the assumptions of Lemma~\ref{lem:Heart}. Let 
\begin{equation}\phi_{\mu}(\lambda) = \ln \bE_\mu[e^{\lambda X}] = b(\dot b^{-1}(\mu) + \lambda) - b(\dot b^{-1}(\mu))\label{def:MGF}\end{equation}
denote the cumulant generating function of the distribution that has mean $\mu$. First, it can be checked that for all $\lambda$ for which $\phi_{\mu}(\lambda)$ is defined, and for all arm $a$, 
\[
  \exp\left(\lambda S_a(t) - N_a(t) \phi_{\mu_a}(\lambda)\right)
  \qquad
  \text{where}
  \qquad
  S_a(t) ~=~ \sum_{s=1}^t \mathbf 1\set*{A_s = a} X_s
\]
is a martingale. Due to the fact that only one arm is drawn at each round, the product of these martingales for all the arms in the subset $\cS$ is still a martingale, that can be rewritten
\[W_t^\lambda = \exp\left(\sum_{a \in \cS} \sbr*{S_a(t) \lambda - \phi_{\mu_a}(\lambda) N_a(t)}\right).\]
Moreover, $\bE[W_t^\lambda] = 1$. We first prove the following result, that relates $X_{\eta,c}(t)$ exceeding a threshold to some $W^{\lambda}_t$ martingale exceeding some other threshold, for a well-chosen $\lambda$.

\begin{lemma}\label{lem:StepOne} Let $i \in \N^*$ and $x>0$. There exists $\lambda_i^+ = \lambda_i^+(x) < 0$ such that if $N_{\cS}(t) \in [(1+\eta)^{i-1}, (1+\eta)^i]$ then
\[\left\{N_{\cS}(t) d^+\left(\hat\mu_{\cS}(t),\mu_\cS^{\min}\right) \geq x\right\} \subseteq \left\{ W_t^{\lambda^+_i} \geq e^{\frac{x}{1+\eta}}\right\}.\] 
Moreover, there exists $\lambda_i^- = \lambda_i^-(x) > 0$ such that if $N_{\cS}(t) \in [(1+\eta)^{i-1}, (1+\eta)^i]$ then
\[\left\{N_{\cS}(t) d^-\left(\hat\mu_{\cS}(t),\mu_\cS^{\max}\right) \geq x\right\} \subseteq \left\{ W_t^{\lambda^-_i} \geq e^{\frac{x}{1+\eta}}\right\}.\]
\end{lemma}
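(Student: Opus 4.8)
The plan is to combine the Cram\'er--Chernoff change of measure for exponential families with a monotonicity property of the cumulant generating function $\phi_\mu$ in its mean argument, which collapses the aggregated statistic over $\cS$ into a single ``extreme'' arm. Recall that $W_t^\lambda = \exp\big(\sum_{a \in \cS}[\lambda S_a(t) - \phi_{\mu_a}(\lambda) N_a(t)]\big)$ is a martingale with $\bE[W_t^\lambda] = 1$, and that $S_a(t) = N_a(t)\hat\mu_a(t)$, so $\ln W_t^\lambda = \sum_{a \in \cS} N_a(t)\,[\lambda \hat\mu_a(t) - \phi_{\mu_a}(\lambda)]$. Writing $\phi_\mu(\lambda) = b(\nu + \lambda) - b(\nu)$ with $\nu = \dot b^{-1}(\mu)$ and differentiating in $\nu$ gives $\partial_\nu \phi_\mu(\lambda) = \dot b(\nu+\lambda) - \dot b(\nu)$, which is negative when $\lambda < 0$ and positive when $\lambda > 0$ (since $\dot b$ is increasing); as $\nu \mapsto \mu$ is increasing, $\mu \mapsto \phi_\mu(\lambda)$ is decreasing for $\lambda < 0$ and increasing for $\lambda > 0$. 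Hence for $\lambda < 0$, $\phi_{\mu_a}(\lambda) \le \phi_{\mu^{\min}_{\cS}}(\lambda)$ for every $a \in \cS$, and summing the per-arm terms while using $\sum_{a\in\cS} N_a(t)\hat\mu_a(t) = N_\cS(t)\hat\mu_\cS(t)$ yields
\[
  \ln W_t^\lambda ~\ge~ N_\cS(t)\,\big[\lambda \hat\mu_\cS(t) - \phi_{\mu^{\min}_{\cS}}(\lambda)\big] \quad (\lambda < 0),
\]
and symmetrically $\ln W_t^\lambda \ge N_\cS(t)[\lambda \hat\mu_\cS(t) - \phi_{\mu^{\max}_{\cS}}(\lambda)]$ for $\lambda > 0$.

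Next I would pick the (data-independent) tilt to be the Chernoff-optimal one for the smallest count allowed in the bracket. For the $d^+$ claim, let $\xi_i^+ \le \mu^{\min}_{\cS}$ solve $d(\xi_i^+, \mu^{\min}_{\cS}) = x/(1+\eta)^i$ --- this exists because $u \mapsto d(u, \mu^{\min}_{\cS})$ is continuous, decreasing, and vanishing at $\mu^{\min}_{\cS}$ on $(-\infty, \mu^{\min}_{\cS}] \cap \cI$; if $x/(1+\eta)^i$ exceeds the supremum of this function the event on the left below is empty and there is nothing to prove --- and set $\lambda_i^+ = \dot b^{-1}(\xi_i^+) - \dot b^{-1}(\mu^{\min}_{\cS}) < 0$. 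By the variational identity $d(\mu,\theta) = \sup_\lambda \{\lambda\mu - \phi_\theta(\lambda)\}$, attained at $\dot b^{-1}(\mu) - \dot b^{-1}(\theta)$ (a standard exponential-family fact, cf.\ \cite{KLUCBJournal}), this $\lambda_i^+$ satisfies $\lambda_i^+ \xi_i^+ - \phi_{\mu^{\min}_{\cS}}(\lambda_i^+) = x/(1+\eta)^i$. On the event $\{N_\cS(t) \in [(1+\eta)^{i-1}, (1+\eta)^i]\} \cap \{N_\cS(t) d^+(\hat\mu_\cS(t), \mu^{\min}_{\cS}) \ge x\}$ we have $\hat\mu_\cS(t) \le \mu^{\min}_{\cS}$ and $d(\hat\mu_\cS(t), \mu^{\min}_{\cS}) \ge x / N_\cS(t) \ge x/(1+\eta)^i$, so by monotonicity of $d(\cdot, \mu^{\min}_{\cS})$ on $(-\infty, \mu^{\min}_{\cS}]$ we get $\hat\mu_\cS(t) \le \xi_i^+$; since $\lambda_i^+ < 0$ this gives $\lambda_i^+ \hat\mu_\cS(t) \ge \lambda_i^+ \xi_i^+$ and hence $\lambda_i^+ \hat\mu_\cS(t) - \phi_{\mu^{\min}_{\cS}}(\lambda_i^+) \ge x/(1+\eta)^i$. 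Combining with the first-paragraph bound on $\ln W_t^{\lambda_i^+}$ and $N_\cS(t) \ge (1+\eta)^{i-1}$,
\[
  \ln W_t^{\lambda_i^+} ~\ge~ N_\cS(t)\cdot\frac{x}{(1+\eta)^i} ~\ge~ \frac{x}{1+\eta},
\]
which is the asserted inclusion. The $d^-$ case is handled identically with $\mu^{\min}_{\cS}$ replaced by $\mu^{\max}_{\cS}$ and $\xi_i^- \ge \mu^{\max}_{\cS}$ solving $d(\xi_i^-, \mu^{\max}_{\cS}) = x/(1+\eta)^i$, yielding $\lambda_i^- = \dot b^{-1}(\xi_i^-) - \dot b^{-1}(\mu^{\max}_{\cS}) > 0$.

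The step I expect to be the main obstacle --- or rather the point requiring care --- is the legitimacy of the choice of $\lambda_i^\pm$: it must be a deterministic function of $x$ and $i$ alone (so that $W_t^{\lambda_i^\pm}$ is a genuine martingale, not a stopped/adaptive object) and must lie in the interior of the natural-parameter domain. The bracketing of $N_\cS(t)$ is precisely what allows one fixed tilt to work uniformly over the random value of $\hat\mu_\cS(t)$; the monotonicity $\mu \mapsto \phi_\mu(\lambda)$ is what reduces the multi-arm aggregate to an effective worst arm. Boundary phenomena (the empirical mean hitting the edge of the support, or the target divergence $x/(1+\eta)^i$ being unattainable) are absorbed into the ``empty event'' remark above and do not arise for light-tailed families such as the Gaussian.
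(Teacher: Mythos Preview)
Your proof is correct and follows essentially the same approach as the paper: define $\lambda_i^\pm$ as the Chernoff-optimal tilt at the level $x/(1+\eta)^i$ (your $\xi_i^\pm$ is the paper's $\mu_i^\pm$), use the bracket $N_\cS(t)\in[(1+\eta)^{i-1},(1+\eta)^i]$ twice (once up, once down), and invoke the monotonicity of $\mu\mapsto\phi_\mu(\lambda)$ to pass from the aggregate to the single extreme arm. The only cosmetic difference is ordering---you apply the monotonicity first to bound $\ln W_t^\lambda$ and then specialize $\lambda$, while the paper first works with $\phi_{\mu_\cS^{\min}}$ throughout and applies the monotonicity (its Lemma~\ref{lem:ExpoMGF}) at the very end; your explicit handling of the ``empty event'' boundary case is a welcome addition that the paper leaves implicit.
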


Lemma~\ref{lem:StepOne} shows that the event of interest is related to a martingale exceeding a threshold for $t$ that belongs to some \emph{slice} $N_{\cS}(t) \in [(1+\eta)^{i-1},(1+\eta)^i]$. We now prove that for all $x>0$ and $1+\eta \leq e$, there exists a  martingale $\tilde{W}_t^x$ such that $\bE[\tilde{W}_t^x] = 1$ and 
\begin{equation}\left\{X_{\eta,c}(t) - (1+\eta) \ln\left(\frac{\zeta(c)}{(\ln(1+\eta))^c}\right)\geq x\right\} \subseteq \left\{ \tilde{W}_t^x \geq e^{\frac{x}{1+\eta}}\right\}.\label{eq:Central}\end{equation}
This martingale is one of the following \emph{mixture martingales}:
\[\tilde{W}_t^{+,x} = \sum_{i=1}^{\infty} \gamma_i W_t^{\lambda_i^+(x + (1+\eta)\ln(1/\gamma_i))} \ \ \text{and} \ \ \tilde{W}_t^{-,x} = \sum_{i=1}^{\infty} \gamma_i W_t^{\lambda_i^-(x + (1+\eta)\ln(1/\gamma_i))},\]
where $\gamma_i = \frac{1}{\zeta(c) i^c}$ and $\lambda_i^\pm(x)$ are defined in Lemma~\ref{lem:StepOne}. As $\sum_{i=1}^\infty \gamma_i = 1$, $\tilde{W}_t^{\pm,x}$ are martingales that satisfy $\bE[\tilde{W}_t^{\pm,x}] = 1$. We first prove that 
\begin{align*}
& \left\{N_{\cS}(t) d^+\left(\hat\mu_{\cS}(t),\mu_\cS^{\min}\right)  - c(1+\eta)\ln(1+ \ln N_{\cS}(t))  - (1+\eta) \ln\left(\frac{\zeta(c)}{(\ln(1+\eta))^c}\right)\geq x\right\} \\  \subseteq & \left\{ \tilde{W}_t^{+,x} \geq e^{\frac{x}{1+\eta}}\right\}.
\end{align*}
If $N_{\cS}(t) \in [ (1+\eta)^{i-1} , (1+\eta)^i]$, one can observe that $\frac{\ln N_{\cS}(t)}{\ln (1+\eta)} \geq i - 1 $, thus, for $1+\eta \leq e$, 
\begin{align*}
& c(1+\eta)\ln(1+ \ln N_{\cS}(t))  + (1+\eta) \ln\left(\frac{\zeta(c)}{(\ln(1+\eta))^c}\right) = (1+\eta) \ln\left(\frac{\zeta(c)(1+N_{\cS}(t))^c}{(\ln(1+\eta))^c}\right) \\
  &\geq  (1+\eta) \ln\left(\frac{\zeta(c)(\ln(1+\eta)+N_{\cS}(t))^c}{(\ln(1+\eta))^c}\right) = (1+\eta) \ln\left(\zeta(c)\left[1 + \frac{\ln N_{\cS}(t)} {(\ln(1+\eta))}\right]^c\right)\\
  & \geq  (1+\eta) \ln\left(\zeta(c)i^c\right) = (1+\eta) \ln \frac{1}{\gamma_i}
\end{align*}
Thus for $N_{\cS}(t) \in [ (1+\eta)^{i-1} , (1+\eta)^i]$, it holds using Lemma~\ref{lem:StepOne} that
\begin{align*}
&\left\{N_{\cS}(t) d^+\left(\hat\mu_{\cS}(t),\mu_\cS^{\min}\right)  - c(1+\eta)\ln(1+ \ln N_{\cS}(t))  - (1+\eta) \ln\left(\frac{\zeta(c)}{(\ln(1+\eta))^c}\right)\geq x\right\} \\
&\subseteq \left\{N_{\cS}(t) d^+\left(\hat\mu_{\cS}(t),\mu_\cS^{\min}\right) \geq  x + (1+\eta)\ln \frac{1}{\gamma_i} \right\} \\
&\subseteq \left\{ \tilde{W}_t^{\lambda_i^+(x + (1+\eta)\ln \frac{1}{\gamma_i})} \geq e^{\frac{x}{1+\eta} + \ln(1/\gamma_i)}\right\} = \left\{\gamma_i \tilde{W}_t^{\lambda_i^+(x + (1+\eta)\ln \frac{1}{\gamma_i})} \geq e^{\frac{x}{1+\eta}}\right\}\\  
& \subseteq \left\{\tilde{W}_t^{+,x} \geq e^{\frac{x}{1+\eta}}\right\}.
\end{align*}
Similarly, one can prove that 
\begin{align*}
& \left\{N_{\cS}(t) d^-\left(\hat\mu_{\cS}(t),\mu_\cS^{\max}\right) - c(1+\eta)\ln(1+ \ln N_{\cS}(t))  - (1+\eta) \ln\left(\frac{\zeta(c)}{(\ln(1+\eta))^c}\right)\geq x\right\} \\ \subseteq& \left\{ \tilde{W}_t^{-,x} \geq e^{\frac{x}{1+\eta}}\right\}.
\end{align*}

Let $C(\eta) =  \frac{\zeta(c)}{(\ln(1+\eta))^c}$. For all $\lambda > 0$ and $z  > 1$ it follows from inequality \eqref{eq:Central} that 
\[\left\{ e^{\lambda \left(X_{\eta,c}(t) - (1+\eta)\ln C(\eta)\right)} \geq z \right\} \subseteq \left\{\tilde{W}^{\frac{1}{\lambda} \ln(z)} \geq e^{\frac{\ln(z)}{\lambda(1+\eta)}} \right\} = \left\{e^{-\frac{\ln(z)}{\lambda(1+\eta)}}\tilde{W}^{\frac{1}{\lambda} \ln(z)} \geq 1 \right\}  \]
Letting $\overline{W}_t^{\lambda,z} = e^{-\frac{\ln(z)}{\lambda(1+\eta)}}\tilde{W}^{\frac{1}{\lambda} \ln(z)}$, $\overline{W}_t^{\lambda,z}$ is a martingale that satisfies $\bE\left[\overline{W}_t^{\lambda,z}\right] = e^{-\frac{\ln(z)}{\lambda(1+\eta)}}$. For all $ \lambda \in [0, {1}/({1+\eta})[$, we now define 
\[\overline{M}_t^\lambda = 1 + \int_{1}^\infty \overline{W}_t^{\lambda,z} dz\]
Using that $\overline{W}_t^{\lambda,z} \geq \ind_{\left(e^{\lambda \left(X_{\eta,c}(t) - (1+\eta)\ln C(\eta)\right)} \geq z\right)}$ and the expression of $\bE\left[\overline{W}_t^{\lambda,z}\right]$ yields
\[\overline{M}_t^\lambda  \geq e^{\lambda \left(X_{\eta,c}(t) - (1+\eta)\ln C(\eta)\right)} \ \ \text{and} \ \ \ \bE\left[\overline{M}_t^\lambda\right] = \frac{1}{1 - \lambda(1+\eta)}.\]
From there, we obtain that $M_t^\lambda = (1 - \lambda (1+\eta))\overline{M}_t^\lambda$ satisfies $\bE[M_t^\lambda] = 1$ and 
\[M_t^\lambda \geq e^{\lambda X_{\eta,c}(t) - \lambda(1+\eta)\ln C(\eta) + \ln(1-\lambda(1+\eta))},\]
which concludes the proof.

\subsection{Proof of Lemma~\ref{lem:StepOne}} 

Let $\nu_{\min}$ be the natural parameter such that $\mu_{\cS}^{\min} = \dot b(\nu_{\min})$ and $\nu_{\max}$ be the natural parameter such that $\mu_{\cS}^{\max} = \dot b(\nu_{\max})$. Define $\lambda_i^->0$ and $\lambda_i^+<0$ such that 
\[\KL(\nu_{\min} + \lambda_i^+, \nu_{\min}) = \frac{x}{(1+\eta)^{i}} \ \ \ \text{and} \ \ \ \KL(\nu_{\max} + \lambda_i^-, \nu_{\max}) = \frac{x}{(1+\eta)^{i}},\]
where $\KL(\nu, \nu')$ is the Kullback-Leibler divergence between the distributions of natural parameter $\nu$ and $\nu'$. Defining $\mu_i^+ := \dot{b}^{-1}(\nu + \lambda_i^+) < \mu_{\cS}^{\min}$ and $\mu_i^- := \dot{b}^{-1}(\nu + \lambda_i^-) > \mu_{\cS}^{\max}$ and using some properties of the KL-divergence for exponential families, one can write 
\begin{eqnarray*}
d(\mu_i^+,\mu_{\cS}^{\min}) & = &\KL(\nu_{\min} + \lambda_i^+, \nu_{\min})  =  \lambda_i^+ \mu_i^+ - \phi_{\mu_{\cS}^{\min}}(\lambda_i^+)\\
d(\mu_i^-,\mu_{\cS}^{\max}) & = & \KL(\nu_{\max} + \lambda_i^-, \nu_{\max})  =  \lambda_i^- \mu_i^- - \phi_{\mu_{\cS}^{\max} }(\lambda_i^-).
 \end{eqnarray*}
For  $N_\cS(t) \in [(1+\eta)^{i-1},(1+\eta)^i]$, one can write (using notably that $\lambda_i^+$ is negative) 
\begin{eqnarray*}
 \left\{N_\cS(t)d^+(\hat{\mu}_\cS(t),\mumin) \geq x\right\} & \subseteq &  \left\{d^+(\hat{\mu}_\cS(t),\mumin) \geq \frac{x}{(1+\eta)^i}\right\}\\
 & \subseteq & \left\{\hat{\mu}_\cS(t) \leq \mu_i^+\right\} \\
 & \subseteq & \left\{\lambda_i^+\hat{\mu}_\cS(t) - \phi_{\mumin}(\lambda_i^+) \geq \lambda_i^+\mu_i^+ - \phi_{\mumin}(\lambda_i^+)=\KL(\nu_{\min} + \lambda_i^+, \nu_{\min}) \right\}\\
  & \subseteq & \left\{\lambda_i^+\hat{\mu}_\cS(t) - \phi_{\mumin}(\lambda_i^+) \geq \frac{x}{(1+\eta)^i} \right\}\\
  & \subseteq & \left\{\lambda_i^+N_{\cS}(t)\hat{\mu}_\cS(t) - N_{\cS}(t)\phi_{\mumin}(\lambda_i^+) \geq \frac{x}{1+\eta} \right\}\\
  & \subseteq & \left\{\lambda_i^+ \sum_{a \in \cS} S_a(t) - \left(\sum_{a\in \cS}N_a(t)\right)\phi_{\mumin}(\lambda_i^+) \geq \frac{x}{1+\eta} \right\}
\end{eqnarray*}
Now using Lemma~\ref{lem:ExpoMGF} below, that can easily be checked by differentiating the equality \eqref{def:MGF}, one can use that as $\lambda_i^+ < 0$, 
\[\forall a \in \cS, \phi_{\mumin}(\lambda_i^+) \geq \phi_{\mu_a}(\lambda_i^+).\]
Therefore, it follows that 
\begin{eqnarray*}
 \left\{N_\cS(t)d^+(\hat{\mu}_\cS(t),\mumin) \geq x\right\} & \subseteq &  \left\{\lambda_i^+ \sum_{a \in \cS} S_a(t) - \sum_{a\in \cS}N_a(t)\phi_{\mu_a}(\lambda_i^+) \geq \frac{x}{1+\eta} \right\} = \left\{W_t^{\lambda_i^+} \geq \frac{x}{1+\eta}\right\}
\end{eqnarray*}
This proves the first inclusion in Lemma~\ref{lem:StepOne}. The proof of the second inclusion follows exactly the same lines, using this time that $\lambda^-_i > 0$: 
\begin{eqnarray*}
 \left\{N_\cS(t)d^-(\hat{\mu}_\cS(t),\mumax) \geq x\right\} & \subseteq &  \left\{\hat{\mu}_\cS(t) \geq \mu_i^-\right\} \\
  & \subseteq & \left\{\lambda_i^-\hat{\mu}_\cS(t) - \phi_{\mumax}(\lambda_i^-) \geq \frac{x}{(1+\eta)^i} \right\}\\
  & \subseteq & \left\{\lambda_i^- \sum_{a \in \cS} S_a(t) - \left(\sum_{a\in \cS}N_a(t)\right)\phi_{\mumax}(\lambda_i^-) \geq \frac{x}{1+\eta} \right\} \\
  & \subseteq & \left\{\lambda_i^- \sum_{a \in \cS} S_a(t) - \sum_{a\in \cS}N_a(t)\phi_{\mu_a}(\lambda_i^-) \geq \frac{x}{1+\eta} \right\}, 
\end{eqnarray*}
where the last inequality uses that by Lemma~\ref{lem:ExpoMGF} $\forall a \in \cS, \phi_{\mumax}(\lambda_i^-) \geq \phi_{\mu_a}(\lambda_i^-)$. 

\begin{lemma} \label{lem:ExpoMGF} The mapping $\mu \mapsto \phi_{\mu}(\lambda)$ is non-increasing if $\lambda < 0$ and non-decreasing if $\lambda > 0$.  
\end{lemma}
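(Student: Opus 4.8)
\textbf{Proof proposal for Lemma~\ref{lem:ExpoMGF}.}

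The plan is to simply differentiate the closed form of $\phi_\mu(\lambda)$ given in~\eqref{def:MGF} with respect to $\mu$ and read off the sign. Write $\nu = \dot b^{-1}(\mu)$, so that $\phi_\mu(\lambda) = b(\nu + \lambda) - b(\nu)$. Recall that for a (regular, non-degenerate) one-parameter exponential family $b$ is strictly convex on the natural parameter space; hence $\dot b$ is strictly increasing, $\ddot b > 0$, and the map $\mu \mapsto \nu = \dot b^{-1}(\mu)$ is well-defined and strictly increasing with $\frac{d\nu}{d\mu} = 1/\ddot b(\nu) > 0$.

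Next I would apply the chain rule:
\[
  \frac{\partial}{\partial \mu}\, \phi_\mu(\lambda)
  ~=~
  \bigl(\dot b(\nu + \lambda) - \dot b(\nu)\bigr)\,\frac{d\nu}{d\mu}
  ~=~
  \frac{\dot b(\nu + \lambda) - \dot b(\nu)}{\ddot b(\nu)}
  .
\]
Since $\ddot b(\nu) > 0$, the sign of this derivative is exactly the sign of $\dot b(\nu+\lambda) - \dot b(\nu)$. As $\dot b$ is strictly increasing, this difference is $\le 0$ when $\lambda < 0$ and $\ge 0$ when $\lambda > 0$. Therefore $\mu \mapsto \phi_\mu(\lambda)$ is non-increasing for $\lambda < 0$ and non-decreasing for $\lambda > 0$, which is the claim. (One can equivalently phrase this without derivatives: for $\mu \le \mu'$ with natural parameters $\nu \le \nu'$, convexity of $b$ gives $b(\nu'+\lambda) - b(\nu') \le b(\nu+\lambda) - b(\nu)$ iff $\lambda \le 0$, by comparing secant slopes of $b$ over the intervals $[\nu,\nu+\lambda]$ and $[\nu',\nu'+\lambda]$.)

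There is no real obstacle here: the only thing to be careful about is invoking strict convexity of the cumulant generating function so that $\dot b^{-1}$ exists and $\dot b$ is genuinely monotone; this is part of the standing assumptions on the exponential family recalled in Section~\ref{sec:Setup}. One should also note that $\lambda$ must lie in the (open) interval where $\phi_\mu(\lambda)$ and $\phi_{\mu'}(\lambda)$ are both finite, which is automatically the case in the application of this lemma inside the proof of Lemma~\ref{lem:StepOne}, where $\lambda = \lambda_i^\pm$ is chosen precisely so that the relevant tilted distributions belong to the family.
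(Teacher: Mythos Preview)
Your proof is correct and follows exactly the approach the paper indicates: the paper simply states that the lemma ``can easily be checked by differentiating the equality~\eqref{def:MGF}'', and your chain-rule computation of $\partial_\mu \phi_\mu(\lambda) = (\dot b(\nu+\lambda)-\dot b(\nu))/\ddot b(\nu)$ together with the monotonicity of $\dot b$ is precisely that verification.
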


\subsection{Technical Results}\label{sec:TechnicalResults}

\begin{lemma} \label{lem:FenchelCalculation}Define $g(\lambda) = A\lambda - \ln(1 - \lambda B)$ for $\lambda \in [0,B^{-1}[$. Then if $\frac{u - A}{B} \geq 1$, 
\[g^*(u) = \max_{\lambda \in [0,B^{-1}[} \left[\lambda u - g(\lambda)\right] = h\left(\frac{u - A}{B}\right) - 1,\]
where $h(u) = u - \ln u$. 
\end{lemma}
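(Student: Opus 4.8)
The plan is to treat this as a textbook one-variable concave maximization. Write
\[
F(\lambda) ~:=~ \lambda u - g(\lambda) ~=~ \lambda(u-A) + \ln(1 - \lambda B), \qquad \lambda \in [0, B^{-1}),
\]
where implicitly $B>0$ so that the domain is a nondegenerate interval. First I would note that $F$ is strictly concave on its domain, since $F''(\lambda) = -B^2/(1-\lambda B)^2 < 0$; consequently $F$ has at most one critical point, and if that point lies in $[0,B^{-1})$ it is the unique global maximizer.

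Next I would solve $F'(\lambda) = (u-A) - \tfrac{B}{1-\lambda B} = 0$. This rearranges to $1 - \lambda B = \tfrac{B}{u-A}$, i.e.\ $\lambda^* = \tfrac{1}{B} - \tfrac{1}{u-A}$. To legitimately use $\lambda^*$ I must check it lies in $[0,B^{-1})$, and this is exactly where the hypothesis enters: from $\tfrac{u-A}{B} \ge 1$ and $B>0$ we get $u-A \ge B > 0$, whence $\tfrac{1}{u-A} > 0$ gives $\lambda^* < 1/B$, and $\tfrac{1}{u-A} \le \tfrac{1}{B}$ gives $\lambda^* \ge 0$. (The borderline $\tfrac{u-A}{B}=1$ puts $\lambda^*=0$ at the boundary, still the maximizer by concavity since $F'(0) = (u-A)-B \ge 0$.)

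Finally I would substitute back, using the two identities $1 - \lambda^* B = \tfrac{B}{u-A}$ and $\lambda^*(u-A) = \tfrac{u-A}{B} - 1$:
\[
F(\lambda^*) ~=~ \frac{u-A}{B} - 1 + \ln\frac{B}{u-A} ~=~ \left(\frac{u-A}{B} - \ln\frac{u-A}{B}\right) - 1 ~=~ h\!\left(\frac{u-A}{B}\right) - 1,
\]
which is the claimed value of $g^*(u)$. There is no genuine obstacle here: the only two spots needing care are (i) verifying that the stationary point $\lambda^*$ actually falls in the admissible range — the precise role of the assumption $\tfrac{u-A}{B}\ge 1$ — and (ii) the minor bookkeeping $\ln\tfrac{B}{u-A} = -\ln\tfrac{u-A}{B}$ so that the terms reassemble into $h(\cdot)-1$; everything else is immediate from strict concavity of $F$.
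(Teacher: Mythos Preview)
Your proof is correct and follows essentially the same approach as the paper: differentiate $\lambda \mapsto \lambda u - g(\lambda)$, find the critical point $\lambda^* = \tfrac{1}{B} - \tfrac{1}{u-A}$, use the hypothesis $\tfrac{u-A}{B}\ge 1$ to place $\lambda^*$ in the admissible interval, and substitute back to obtain $h\!\left(\tfrac{u-A}{B}\right)-1$. If anything, your version is more careful than the paper's, which omits the explicit concavity check and the boundary discussion.
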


\begin{proof} Differentiating shows that $\lambda \mapsto \lambda u - g(\lambda)$ attains its maximum on in $\lambda^* = \frac{1}{B} - \frac{1}{x-A}$, which also satisfies $1 - \lambda^*B = \frac{B}{x-A}$. If $\frac{u - A}{B} \geq 1$, $\lambda^* \in [0,B^{-1}[$ and one obtains 
\begin{eqnarray*}
g^*(u) & = & \lambda^* u - g(\lambda^*) = \lambda^* (u - A) + \ln \left(1 - \lambda^* B\right) \\
       & = & \frac{x-A}{B} - \ln \frac{x-A}{B} - 1 \\
       & = & h\left(\frac{x-A}{B}\right) - 1.
\end{eqnarray*}
 \end{proof}

The next result permits to derive a tight upper bound on the threshold function $T$ featured in Theorem~\ref{thm:mainDev}. Recall this function is defined in terms of the inverse mapping of $h : [1,+\infty[ \rightarrow \R^*$ defined by 
$h(u) = u - \ln(u)$.

\begin{lemma}\label{lem:UpperBoundT} $h$ is increasing on $[1,+\infty[$ and its inverse function, defined on $[1,+\infty[$ can be expressed in terms of negative branch of the Lambert function: $h^{-1}(x) = -W_{-1}(-e^{-x})$. The following inequality holds:
 \[\forall x \geq 1, \ \ h^{-1}(x) \leq x + \ln(x + \sqrt{2(x-1)}).\]
\end{lemma}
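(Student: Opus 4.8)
The plan is to treat the three assertions of the lemma in turn: monotonicity of $h$, the closed form for $h^{-1}$ via the Lambert $W$ function, and the explicit upper bound.

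For monotonicity I would simply differentiate: $h'(u) = 1 - 1/u$ is strictly positive on $]1,+\infty[$ and vanishes at $u=1$, so $h$ is strictly increasing on $[1,+\infty[$. Since $h(1) = 1$ and $h(u) \to +\infty$ as $u \to +\infty$, the map $h$ is a bijection from $[1,+\infty[$ onto $[1,+\infty[$, which legitimizes $h^{-1}$ on $[1,+\infty[$. For the Lambert formula, fix $x \geq 1$ and set $u = h^{-1}(x) \geq 1$, so $u - \ln u = x$. Exponentiating $\ln u = u - x$ gives $u = e^{u} e^{-x}$, i.e.\ $(-u) e^{-u} = -e^{-x}$. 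Since $x \geq 1$, the argument $-e^{-x}$ lies in $[-1/e, 0)$, the domain of the branch $W_{-1}$, whose range is $(-\infty,-1]$; as $-u \leq -1$ it is exactly this branch that is selected (the principal branch $W_0$ would force $u \in [0,1]$). Hence $-u = W_{-1}(-e^{-x})$, that is, $h^{-1}(x) = -W_{-1}(-e^{-x})$.

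For the inequality, I would exploit that $h$ is increasing. Put $y := x + \ln\big(x + \sqrt{2(x-1)}\big)$; since the argument of the logarithm is $\geq 1$ for $x \geq 1$, we have $y \geq x \geq 1$, so it suffices to show $h(y) \geq x$, as this yields $h^{-1}(x) \leq h^{-1}(h(y)) = y$. Expanding $h(y) = y - \ln y$ and substituting the value of $y$, the leading $x$ cancels and the required inequality reduces to $\ln\big(x + \sqrt{2(x-1)}\big) \geq \ln\big(x + \ln(x + \sqrt{2(x-1)})\big)$, equivalently $\sqrt{2(x-1)} \geq \ln\big(x + \sqrt{2(x-1)}\big)$. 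Substituting $t := \sqrt{2(x-1)} \geq 0$, so that $x = 1 + t^2/2$, this becomes $t \geq \ln\big(1 + t + t^2/2\big)$, which follows at once from the elementary bound $1 + t + t^2/2 \leq e^t$ valid for all $t \geq 0$ (the omitted Taylor terms being nonnegative).

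All the computations here are routine. The only points that need a little care are tracking which branch of $W$ is in play (the branch $W_{-1}$, precisely because $h^{-1}(x) \geq 1$), and noticing that after the substitution $t = \sqrt{2(x-1)}$ the bound collapses to the textbook estimate $1 + t + t^2/2 \le e^t$; I do not expect any genuine obstacle.
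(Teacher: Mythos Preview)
Your proof is correct. The monotonicity and Lambert-$W$ parts are routine and match what the paper would have done had it spelled them out. For the inequality you take a different route from the paper: the paper invokes a variational identity
\[
  h^{-1}(x) ~=~ \inf_{z\ge 1}\, z\Bigl(x-1+\ln\frac{z}{z-1}\Bigr)
\]
and plugs in the particular feasible point $z=1+\frac{1}{(x-1)+\sqrt{2(x-1)}}$, whereas you bypass this by directly verifying $h(y)\ge x$ for $y=x+\ln(x+\sqrt{2(x-1)})$ and letting monotonicity of $h$ finish. Both arguments then collapse to the same core inequality $\sqrt{2(x-1)}\ge \ln\bigl(x+\sqrt{2(x-1)}\bigr)$; the paper asserts it via a derivative-monotonicity remark, you prove it cleanly through the substitution $t=\sqrt{2(x-1)}$ and the Taylor bound $1+t+t^2/2\le e^t$. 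Your approach is more self-contained since it avoids having to justify the variational formula, at the cost of not exhibiting where the specific bound comes from.
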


\begin{proof}
We may write
\[
  h^{-1}(x)
  ~=~
  \inf_{z \ge 1} z \del*{x-1 + \ln \frac{z}{z-1}}
\]
Plugging in the sub-optimal feasible choice $z = 1+\frac{1}{(x-1)+\sqrt{2 (x-1)}}$ reveals
\begin{align*}
  h^{-1}(x)
  &~\le~
   \del*{
    1
    + \frac{1}{(x-1)+\sqrt{2 (x-1)}}
   } \del*{
    x-1 + \ln \del*{x+\sqrt{2 (x-1)}}
    }
  \\
  &~\le~
    1 + (x-1) + \ln \del*{x+\sqrt{2 (x-1)}}
    .
\end{align*}
Where the last inequality uses
$
  \ln \del*{x+\sqrt{2 (x-1)}}
  \le
  \sqrt{2 (x-1)}
$
which holds with equality at $x=1$ and whose gap is increasing (as can be checked by differentiation).
\end{proof}

\section{Aggregate Stopping Rule is $\delta$-correct (Lemma~\ref{lem:PAC})} \label{proof:PAC} 

First assume $\bm \mu \in \cH_{>}$. Then the probability of error is upper bounded by 
\begin{eqnarray*}
 && \bP\left(\exists t \in \N, \exists \cS :  N_\cS(t)d^+\left(\hat{\mu}_\cS(t),\theta\right) \geq 3 \ln(1 + \ln (N_\cS(t))) + T\left(\ln(1/(\delta\pi(\cS))\right)\right) \\
 & \leq & \sum_{\cS}  \bP\left(\exists t \in \N  :  N_\cS(t)d^+\left(\hat{\mu}_\cS(t),\theta\right) \geq 3 \ln(1 + \ln (N_\cS(t))) + T\left(\ln(1/(\delta\pi(\cS))\right)\right) \\
 & \leq & \sum_{\cS}  \bP\left(\exists t \in \N  :  N_\cS(t)d^+\left(\hat{\mu}_\cS(t),\min_{a \in \cS} \mu_a\right) \geq 3 \ln(1 + \ln (N_\cS(t))) +T\left(\ln(1/(\delta\pi(\cS))\right)\right) \\
 & \leq & \sum_{\cS} {\delta}{\pi(\cS)} = \delta.
\end{eqnarray*}
The second inequality uses that on $\cH_{<}$, all $\mu_a$ are larger than $\gamma$ and $x \mapsto d^+\left(\hat{\mu}_\cS(t),x\right)$ is non-decreasing. The last inequality follows from the first inequality in Theorem~\ref{thm:mainDev}.

Now assume $\bm \mu \in \cH_{<}$: there exists $a$ such that $\mu_a < \gamma$. The probability of error is upper bounded by 
\begin{eqnarray*}
 && \bP\left(\exists t \in \N, \forall a, \   N_a(t)d^-\left(\hat{\mu}_a(t),\gamma\right) \geq 3 \ln(1 + \ln (N_a(t))) + T\left(\ln(1/\delta)\right)\right) \\
 & \leq & \bP\left(\exists t \in \N  :  N_a(t)d^-\left(\hat{\mu}_a(t),\gamma\right) \geq 3 \ln(1 + \ln (N_a(t))) + T\left(\ln(1/\delta)\right)\right) \\
  & \leq & \bP\left(\exists t \in \N  :  N_a(t)d^-\left(\hat{\mu}_a(t),\mu_a\right) \geq 3 \ln(1 + \ln (N_a(t))) + T\left(\ln(1/\delta)\right)\right) \leq \delta.
\end{eqnarray*}
The second inequality holds as $\mu_a < \gamma$ and $x \mapsto d^-\left(\hat{\mu}_a(t),\gamma\right)$ is non-increasing. The last inequality is an application of the second inequality of Theorem~\ref{thm:mainDev}, for singleton $\cS = \{a\}$.

\end{document}